\definecolor{darkblue}{HTML}{091c70}
\definecolor{highlightcolor}{HTML}{ffdada}
\newcommand*{\todo}[2][]{\textcolor{red}{[\textbf{\ifthenelse{\equal{#1}{}}{TODO}{TODO(#1)}}: #2]}}
\newtheorem{theorem}{Theorem}
\newtheorem{lemma}{Lemma}
\newtheorem{corollary}{Corollary}[theorem]
\newtheorem{definition}{Definition}
\newtheorem{condition}{Condition}
\DeclareMathOperator*{\argmax}{arg\,max}
\newcommand{\smalleq}{{\scriptstyle =}}
\newcommand{\smallplus}{{\scriptstyle +}}
\newcommand{\smallminus}{{\scriptstyle -}}
\newcommand{\smallra}{{\scriptstyle \rightarrow}}
\newcommand{\tinyeq}{{\scriptscriptstyle =}}
\newcommand{\smallin}{{\scriptstyle \in}}
\newcommand{\nm}{\bm{Q}_{\tilde{y} \vert y^*}}
\newcommand{\inv}{\bm{Q}_{y^* \vert \tilde{y}}}
\newcommand{\joint}{\bm{Q}_{\tilde{y}, y^*}}
\newcommand{\prior}{\bm{Q}_{y^*}}
\newcommand{\estnm}{\hat{\bm{Q}}_{\tilde{y} \vert y^*}}
\newcommand{\estinv}{\hat{\bm{Q}}_{y^* \vert \tilde{y}}}
\newcommand{\estjoint}{\hat{\bm{Q}}_{\tilde{y}, y^*}}
\newcommand{\estprior}{\hat{\bm{Q}}_{y^*}}
\newcommand{\noisypriorlong}{\bm{Q}_{\tilde{y} = i}}
\newcommand{\estnmlong}{\hat{\bm{Q}}_{\tilde{y} = i \vert y^* = j}}
\newcommand{\estinvlong}{\hat{\bm{Q}}_{y^* = j \vert \tilde{y} = i}}
\newcommand{\estjointlong}{\hat{\bm{Q}}_{\tilde{y} = i, y^* = j}}
\newcommand{\estpriorlong}{\hat{\bm{Q}}_{y^* = j}}
\newcommand{\model}{\bm{\theta}} 
\newcommand{\pyx}{\hat{p}(\tilde{y} ; \bm{x}, \bm{\theta})} 
\newcommand{\probmatrix}{\bm{\hat{P}}_{k,i}}
\newcommand{\predprobshorti}{\hat{p}_{\bm{x}, \tilde{y} \smalleq i}}
\newcommand{\predprobshortj}{\hat{p}_{\bm{x}, \tilde{y} \smalleq j}}
\newcommand{\perfprobshorti}{p^*_{\bm{x}, \tilde{y} \smalleq i}}
\newcommand{\perfprobshortj}{p^*_{\bm{x}, \tilde{y} \smalleq j}}
\newcommand{\errorxj}{\epsilon_{\bm{x}, \tilde{y} \smalleq j}}
\newcommand{\cj}{\bm{C}_{\tilde{y}, y^*}}
\newcommand{\estpartition}{\hat{\bm{X}}_{ \tilde{y} \smalleq i,y^* \smalleq j}}
\newcommand{\partition}{\bm{X}_{ \tilde{y} \smalleq i,y^* \smalleq j}}
\DeclareMathOperator*{\EXlimits}{\mathbb{E}}
\DeclareMathOperator{\EX}{\mathbb{E}}
\newcommand{\beginsupplement}{ 
\setcounter{section}{0}
\renewcommand{\thesection}{S\arabic{section}} %
\renewcommand{\thesubsection}{\thesection.\arabic{subsection}}
\setcounter{table}{0}
\renewcommand{\thetable}{S\arabic{table}} %
\setcounter{figure}{0}
\renewcommand{\thefigure}{S\arabic{figure}} %
}
\newcommand{\possessivecite}[1]{\citeauthor{#1}'s (\citeyear{#1})}
\begin{document}

\title{Confident Learning:\\Estimating Uncertainty in Dataset Labels}

\author{\name Curtis G. Northcutt \email curtis@cleanlab.ai  \\
      \addr Cleanlab \email cgn@mit.edu \\
      \addr Massachusetts Institute of Technology, \\
      Department of EECS, Cambridge, MA, USA
      \AND
      \name Lu Jiang \email lujiang@google.com \\
      \addr Google Research, Mountain View, CA, USA
      \AND
      \name Isaac L. Chuang \email ichuang@mit.edu \\
      \addr Massachusetts Institute of Technology, \\
      Department of EECS, Department of Physics, Cambridge, MA, USA}



\maketitle

\begin{abstract}
    
    


Learning exists in the context of data, yet notions of \emph{confidence} typically focus on model predictions, not label quality. Confident learning (CL) is a data-centric approach which focuses instead on label quality by characterizing and identifying label errors in datasets, based on the principles of pruning noisy data, counting with probabilistic thresholds to estimate noise, and ranking examples to train with confidence. Whereas numerous studies have developed these principles independently, here, we combine them, building on the assumption of a class-conditional noise process to directly estimate the joint distribution between noisy (given) labels and uncorrupted (unknown) labels. This results in a generalized CL which is provably consistent and experimentally performant. We present sufficient conditions where CL exactly finds label errors, and show CL performance exceeding seven recent competitive approaches for learning with noisy labels on the CIFAR dataset. Uniquely, the CL framework is \emph{not} coupled to a specific data modality or model (e.g., we use CL to find several label errors in the presumed error-free MNIST dataset and improve sentiment classification on text data in Amazon Reviews). We also employ CL on ImageNet to quantify ontological class overlap (e.g., estimating 645 \emph{missile} images are mislabeled as their parent class \emph{projectile}), and moderately increase model accuracy (e.g., for ResNet) by cleaning data prior to training. These results are replicable using the open-source \href{https://github.com/cleanlab/cleanlab}{\texttt{cleanlab}} framework.

\end{abstract}

\section{Introduction}


Advances in learning with noisy labels and weak supervision usually introduce a new model or loss function. Often this model-centric approach band-aids the real question: which data is mislabeled? Yet, large datasets with noisy labels have become increasingly common. Examples span prominent benchmark datasets like ImageNet \citep{ILSVRC15_imagenet} and MS-COCO \citep{MS-COCO-IMAGE-DATASET} to human-centric datasets like electronic health records \citep{yoni_sontag_anchors_2016} and educational data \citep{NORTHCUTT_cameo_2016}. The presence of noisy labels in these datasets introduces two problems. How can we identify examples with label errors and how can we learn well despite noisy labels, irrespective of the data modality or model employed? Here, we follow a data-centric approach to theoretically and experimentally investigate the premise that the key to learning with noisy labels lies in accurately and directly characterizing the uncertainty of label noise in the data.

A large body of work, which may be termed ``confident learning,'' has arisen to address the uncertainty in dataset labels, from which two aspects stand out.  First, \possessivecite{angluin1988learning} classification noise process (CNP) provides a starting assumption that label noise is class-conditional, depending only on the latent true class, not the data. While there are exceptions, this assumption is commonly used \citep{DBLP:conf/iclr/GoldbergerB17_smodel,Sukhbaatar_fergus_iclr_2015} because it is reasonable for many datasets. For example, in ImageNet, a \emph{leopard} is more likely to be mislabeled \emph{jaguar} than \emph{bathtub}.  Second, direct estimation of the joint distribution between noisy (given) labels and true (unknown) labels (see Fig. \ref{fig_flow_examples}) can be pursued effectively based on three principled approaches used in many related studies: (a) {\bf Prune}, to search for label errors, e.g. following the example of \cite{chen2019confusion, patrini2017making, rooyen_menon_unhinged_nips15}, using \emph{soft-pruning} via loss-reweighting, to avoid the convergence pitfalls of iterative re-labeling -- (b) {\bf Count}, to train on clean data, avoiding error-propagation in learned model weights from reweighting the loss \citep{natarajan2017cost} with imperfect predicted probabilities, generalizing seminal work \cite{forman2005counting, forman2008quantifying, icml_lipton_label_shift_confusion_matrix} -- and (c) {\bf Rank} which examples to use during training, to allow learning with unnormalized probabilities or decision boundary distances, building on well-known robustness findings \citep{page1997pagerank} and ideas of curriculum learning \citep{jiang2018mentornet}.

To our knowledge, no prior work has thoroughly analyzed the direct estimation of the joint distribution between noisy and uncorrupted labels. Here, we assemble these principled approaches to generalize confident learning (CL) for this purpose. Estimating the joint distribution is challenging as it requires disambiguation of epistemic uncertainty (model predicted probabilities) from aleatoric uncertainty (noisy labels) \citep{aleatoric_epistemic_uncertainty_2013}, but useful because its marginals yield important statistics used in the literature, including latent noise transition rates \citep{Sukhbaatar_fergus_iclr_2015, noisy_boostrapping_google_reed_iclr_2015}, latent prior of uncorrupted labels \citep{lawrence_bayesian_prior_py_noisy_2001, graepel_kernel_gibbs_sampler_prior_py_bayesian_nips_2001}, and inverse noise rates \citep{scott19_jmlr_mutual_decontamination}. While noise rates are useful for loss-reweighting \citep{NIPS2013_5073}, only the joint can directly estimate the number of label errors for each pair of true and noisy classes. Removal of these errors prior to training is an effective approach for learning with noisy labels \citep{chen2019confusion}. The joint is also useful to discover ontological issues in datasets for dataset curation, e.g. ImageNet includes two classes for the same \emph{maillot} class (c.f. Table \ref{table_imagenet_characterization_top10} in Sec. \ref{sec:experiments}). 


The generalized CL assembled in this paper upon the principles of pruning, counting, and ranking, is a model-agnostic family of theories and algorithms for characterizing, finding, and learning with label errors. It uses predicted probabilities and noisy labels to count examples in the unnormalized \emph{confident joint}, estimate the joint distribution, and prune noisy data, producing clean data as output.

This paper makes two key contributions to prior work on finding, understanding, and learning with noisy labels. First, a proof is presented giving realistic sufficient conditions under which CL exactly finds label errors and exactly estimates the joint distribution of noisy and true labels. Second, experimental data are shared, showing that this CL algorithm is empirically performant on three tasks (a) label noise estimation, (b) label error finding, and (c) learning with noisy labels, increasing ResNet accuracy on a cleaned-ImageNet and outperforming seven recent highly competitive methods for learning with noisy labels on the CIFAR dataset. The results presented are reproducible with the implementation of CL algorithms, open-sourced as the \textbf{\texttt{\href{https://github.com/cleanlab/cleanlab/?utm_source=arxiv&utm_medium=publication&utm_campaign=cleanlab}{cleanlab}}}\footnote{To foster future research in data cleaning and learning with noisy labels and to improve accessibility for newcomers, \textbf{\texttt{\href{https://github.com/cleanlab/cleanlab/?utm_source=arxiv&utm_medium=publication&utm_campaign=cleanlab}{cleanlab}}} is open-source and well-documented: \href{https://github.com/cleanlab/cleanlab/?utm_source=arxiv&utm_medium=publication&utm_campaign=cleanlab}{https://github.com/cleanlab/cleanlab/}} Python package.  

These contributions are presented beginning with the formal problem specification and notation (Section~\ref{sec:setup}), then defining the algorithmic methods employed for CL (Section~\ref{sec:methods}) and theoretically bounding expected behavior under ideal and noisy conditions (Section~\ref{sec:theory}). Experimental benchmarks on the CIFAR, ImageNet, WebVision, and MNIST datasets, cross-comparing CL performance with that from a wide range of highly competitive approaches, including
\emph{INCV} \citep{chen2019confusion}, \emph{Mixup} \citep{icml2018mixup}, \emph{MentorNet} \citep{jiang2018mentornet}, and \emph{Co-Teaching} \citep{han2018coteaching}, are then presented in Section~\ref{sec:experiments}. Related work (Section~\ref{sec:related}) and concluding observations (Section~\ref{sec:conclusion}) wrap up the presentation. Extended proofs of the main theorems, algorithm details, and comprehensive performance comparison data are presented in the appendices.

\section{CL Framework and Problem Set-up}
\label{sec:setup}

In the context of multiclass data with possibly noisy labels, let $[m]$ denote $\{1, 2, ..., m\}$, the set of $m$ unique class labels, and $\bm{X} \coloneqq (\bm{x}, \tilde{y})^n \in {(\mathbb{R}^d, [m])}^n$ denote the dataset of $n$ examples $\bm{x} \in \mathbb{R}^d$ with associated observed noisy labels $\tilde{y} \in [m]$. $\bm{x}$ and $\tilde{y}$ are coupled in $\bm{X}$ to signify that \emph{cleaning} removes data and label. While a number of relevant works address the setting where annotator labels are available \citep{chi2021data, bouguelia2018agreeing, tanno2019learning, Tanno_2019_CVPR, khetan2018learning}, this paper addresses the general setting where no annotation information is available except the observed noisy labels.

\paragraph{Assumptions} We assume there exists, for every example, a latent, true label $y^*$. Prior to observing $\tilde{y}$, a class-conditional classification noise process \citep{angluin1988learning} maps $y^* {\scriptstyle \rightarrow} \; \tilde{y}$ such that every label in class $j \in [m]$ may be independently mislabeled as class $i \in [m]$ with probability $p(\tilde{y} \smalleq i \vert y^* \smalleq j)$. This assumption is reasonable and has been used in prior work~\citep{DBLP:conf/iclr/GoldbergerB17_smodel,Sukhbaatar_fergus_iclr_2015}.

\paragraph{Notation} Notation is summarized in Table \ref{t:notation}. The discrete random variable $\tilde{y} $ takes an observed, noisy label (potentially flipped to an incorrect class), and $y^* $ takes a latent, uncorrupted label. The subset of examples in $\bm{X}$ with noisy class label $i$ is denoted $\bm{X}_{\tilde{y}\smalleq i}$, \emph{i.e.} $\bm{X}_{\tilde{y}\smalleq \text{cow}}$ is read, ``examples with class label \emph{cow}.'' The notation $p(\tilde{y} ; \bm{x})$, as opposed to $p(\tilde{y} \vert \bm{x})$, expresses our assumption that input $\bm{x}$ is observed and error-free. We denote the discrete joint probability of the noisy and latent labels as $p(\tilde{y},y^*)$, where conditionals $p(\tilde{y} \vert y^*)$ and $p(y^* \vert \tilde{y})$ denote probabilities of label flipping. We use $\hat{p}$ for predicted probabilities. In matrix notation, the $n \times m$ matrix of out-of-sample predicted probabilities is $\bm{\hat{P}}_{k,i} \coloneqq \hat{p}(\tilde{y} = i; \bm{x}_k, \bm{\theta})$, the prior of the latent labels is $\prior \coloneqq p(y^* \smalleq i)$; the $m \times m$ joint distribution matrix is $\joint \coloneqq p(\tilde{y} \smalleq i,y^* \smalleq j)$; the $m \times m$ noise transition matrix (noisy channel) of flipping rates is $\nm \coloneqq p(\tilde{y} \smalleq i \vert y^* \smalleq j)$; and the $m \times m$ mixing matrix is $\inv \coloneqq p(y^* \smalleq i \vert \tilde{y} \smalleq j)$. At times, we abbreviate $\hat{p}(\tilde{y} = i ; \bm{x}, \bm{\theta})$ as $\predprobshorti$, where $\model$ denotes the model parameters. CL assumes no specific loss function associated with $\model$: the CL framework is model-agnostic.

\begin{table*}[hbt!] 
\caption{Notation used in confident learning.}
\label{t:notation}
\begin{center}
\resizebox{\textwidth}{!}{
\begin{tabular}{rl} 
\toprule

\textbf{Notation} & \textbf{Definition} \\
\midrule
$m$ & The number of unique class labels \\
$[m]$ & The set of $m$ unique class labels \\
$\tilde{y}$  & Discrete random variable $\tilde{y} \in [m]$ takes an observed, noisy label \\
$y^*$  & Discrete random variable  $y^* \in [m]$  takes the unknown, true, uncorrupted label \\
$\bm{X}$ & The dataset $(\bm{x}, \tilde{y})^n \in {(\mathbb{R}^d, [m])}^n$ of $n$ examples $\bm{x} \in \mathbb{R}^d$ with noisy labels \\
$\bm{x}_k$ & The $k^{th}$ training data example \\
$\tilde{y}_k$ & The observed, noisy label corresponding to $\bm{x}_k$ \\
$y_k^*$ & The unknown, true label corresponding to $\bm{x}_k$ \\
$n$ & The cardinality of $\bm{X} \coloneqq (\bm{x}, \tilde{y})^n$, i.e. the number of examples in the dataset \\
$\bm{\theta}$ & Model parameters \\
$\bm{X}_{\tilde{y}\smalleq i}$ & Subset of examples in $\bm{X}$ with noisy label $i$, \emph{i.e.} $\bm{X}_{\tilde{y}\smalleq \text{cat}}$ is ``examples labeled cat'' \\
$\bm{X}_{\tilde{y}\smalleq i, y^*\smalleq j}$ & Subset of examples in $\bm{X}$ with noisy label $i$ and true label $j$ \\
$\hat{\bm{X}}_{\tilde{y}\smalleq i, y^*\smalleq j}$ & Estimate of subset of examples in $\bm{X}$ with noisy label $i$ and true label $j$ \\
$p(\tilde{y} \smalleq i, y^* \smalleq j)$ & Discrete joint probability of noisy label $i$ and true label $j$. \\
$p(\tilde{y} \smalleq i \vert y^* \smalleq j)$ & Discrete conditional probability of true label flipping, called the noise rate \\
$p( y^* \smalleq j \vert \tilde{y} \smalleq i )$ & Discrete conditional probability of noisy label flipping, called the inverse noise rate \\
$\hat{p}(\cdot)$ & Estimated or predicted probability (may replace $p(\cdot)$ in any context) \\
$\prior$ & The prior of the latent labels \\
$\estprior$ & Estimate of the prior of the latent labels \\
$\joint$ & The $m \times m$ joint distribution matrix for $p(\tilde{y},y^*)$ \\
$\estjoint$ & Estimate of the $m \times m$ joint distribution matrix for $p(\tilde{y},y^*)$ \\
$\nm$ & The $m \times m$ noise transition matrix (noisy channel) of flipping rates for $p(\tilde{y} \vert y^*)$ \\
$\estnm$ & Estimate of the $m \times m$ noise transition matrix of flipping rates for $p(\tilde{y} \vert y^*)$ \\
$\inv$ & The inverse noise matrix for $p(y^* \vert \tilde{y})$ \\
$\estinv$ & Estimate of the inverse noise matrix for $p(y^* \vert \tilde{y})$ \\
$\hat{p}(\tilde{y} = i ; \bm{x}, \bm{\theta})$ & Predicted probability of label $\tilde{y} = i$ for example $\bm{x}$ and model parameters $\bm{\theta}$ \\
$\predprobshorti$ & Shorthand abbreviation for predicted probability $\hat{p}(\tilde{y} = i ; \bm{x}, \bm{\theta})$ \\
$\hat{p}(\tilde{y}\smalleq i ; \bm{x} \smallin \bm{X}_{\tilde{y}=i}, \bm{\theta} )$ & The \emph{self-confidence} of example $\bm{x}$ belonging to its given label $\tilde{y}\smalleq i$ \\
$\bm{\hat{P}}_{k,i}$ & $n \times m$ matrix of out-of-sample predicted probabilities $\hat{p}(\tilde{y} = i; \bm{x}_k, \bm{\theta})$ \\
$\cj$ & The \emph{confident joint} ${\cj \in \mathbb{N}_{\geq 0}}^{m \times m}$, an unnormalized estimate of $\joint$ \\
$\bm{C}_{\text{confusion}}$ & Confusion matrix of given labels $\tilde{y}_k$ and predictions $\argmax_{i \in [m]} \hat{p}(\tilde{y}\smalleq i;\bm{x}_k, \bm{\theta}) $ \\
$t_j$ & The expected (average) self-confidence for class $j$ used as a threshold in $\cj$ \\
$p^*(\tilde{y} \smalleq i \vert y^* \smalleq y_k^*)$ & \emph{Ideal} probability for some example $\bm{x}_k$, equivalent to noise rate $p^*(\tilde{y} \smalleq i \vert y^* \smalleq j)$ \\
$\perfprobshorti$ & Shorthand abbreviation for ideal probability $p^*(\tilde{y} \smalleq i \vert y^* \smalleq y_k^*)$ \\

\bottomrule
\end{tabular}
}
\end{center}
\end{table*}

\paragraph{Goal} Our assumption of a class-conditional noise process implies the label noise transitions are data-independent, i.e., 
$p(\tilde{y} \vert y^* ; \bm{x}) = p(\tilde{y} \vert y^*)$. To characterize class-conditional label uncertainty, one must estimate $p(\tilde{y} \vert y^*)$ and $p(y^*)$, the latent prior distribution of uncorrupted labels. Unlike prior works which estimate $p(\tilde{y} \vert y^*)$ and $p(y^*)$ independently, we estimate both jointly by directly estimating the joint distribution of label noise, $p(\tilde{y} , y^*)$. \textbf{Our goal} is to estimate every $p(\tilde{y} , y^*)$ as a matrix $\joint$ and use $\joint$ to find all mislabeled examples $\bm{x}$ in dataset $\bm{X}$ where $y^* \neq \tilde{y}$. This is hard because it requires disambiguation of model error (epistemic uncertainty) from the intrinsic label noise (aleatoric uncertainty), while simultaneously estimating the joint distribution of label noise ($\joint$) without prior knowledge of the latent noise transition matrix ($\nm$), the latent prior distribution of true labels ($\prior$), or any latent, true labels ($y*$).

\begin{definition} [Sparsity]
    A statistic to quantify the characteristic shape of the label noise defined by fraction of zeros in the off-diagonals of $\joint$. \emph{High sparsity quantifies non-uniformity of label noise, common to real-world datasets. For example, in ImageNet, \emph{missile} may have high probability of being mislabeled as \emph{projectile}, but near-zero probability of being mislabeled as most other classes like \emph{wool} or \emph{wine}. Zero sparsity implies every noise rate in $\joint$ is non-zero. A sparsity of 1 implies no label noise because the off-diagonals of $\joint$, which encapsulate the class-conditional noise rates, must all be zero if sparsity = 1.}
\end{definition}

\begin{definition}[Self-Confidence]
    The predicted probability for some model $\bm{\theta}$ that an example $\bm{x}$ belongs to its given label $\tilde{y} $, expressed as $\hat{p}(\tilde{y}\smalleq i ; \bm{x} \smallin \bm{X}_{\tilde{y}=i}, \bm{\theta} )$. \emph{Low self-confidence is a heuristic-likelihood of being a label error.}
\end{definition}

\section{CL Methods}
\label{sec:methods}

Confident learning (CL) estimates the joint distribution between the (noisy) observed labels and the (true) latent labels. CL requires two inputs: (1) the out-of-sample predicted probabilities $\bm{\hat{P}}_{k,i}$ and (2) the vector of noisy labels $\tilde{y}_k$. The two inputs are linked via index $k$ for all $\bm{x}_k \in \bm{X}$. None of the true labels $y^*$ are available, except when $\tilde{y} = y^*$, and we do not know when that is the case.

The out-of-sample predicted probabilities $\bm{\hat{P}}_{k,i}$ used as input to CL are computed beforehand (e.g. cross-validation) using a model $\model$: so, how does $\model$ fit into the CL framework? Prior works typically learn with noisy labels by directly modifying the model or training loss function, restricting the class of models. Instead, CL decouples the model and data cleaning procedure by working with model outputs $\bm{\hat{P}}_{k,i}$, so that any model that produces a mapping $\model: \bm{x} \rightarrow \hat{p}(\tilde{y}\smalleq i;\bm{x}_k, \bm{\theta})$ can be used (e.g. neural nets with a softmax output, naive Bayes, logistic regression, etc.). However, $\model$ affects the predicted probabilities $\hat{p}(\tilde{y}\smalleq i;\bm{x}_k, \bm{\theta})$ which in turn affect the performance of CL. Hence, in Section \ref{sec:theory}, we examine sufficient conditions where CL finds label errors exactly, even when $\hat{p}(\tilde{y}\smalleq i;\bm{x}_k, \bm{\theta})$ is erroneous. Any model $\model$ may be used for final training on clean data provided by CL.


CL identifies noisy labels in existing datasets to improve learning with noisy labels. The main procedure (see Fig. \ref{fig_flow_examples}) comprises three steps: (1) estimate $\estjoint$ to characterize class-conditional label noise (Sec. \ref{sec:label_noise_characterization}), (2) filter out noisy examples (Sec. \ref{sec:rank_prune_find_errors}), and (3) train with errors removed, reweighting the examples by class weights $\frac{\estprior[i]}{\estjoint[i][i]}$ for each class $i \in [m]$. In this section, we define these three steps and discuss their expected outcomes.

\begin{figure*}[t]
\centerline{\includegraphics[width=.7\linewidth]{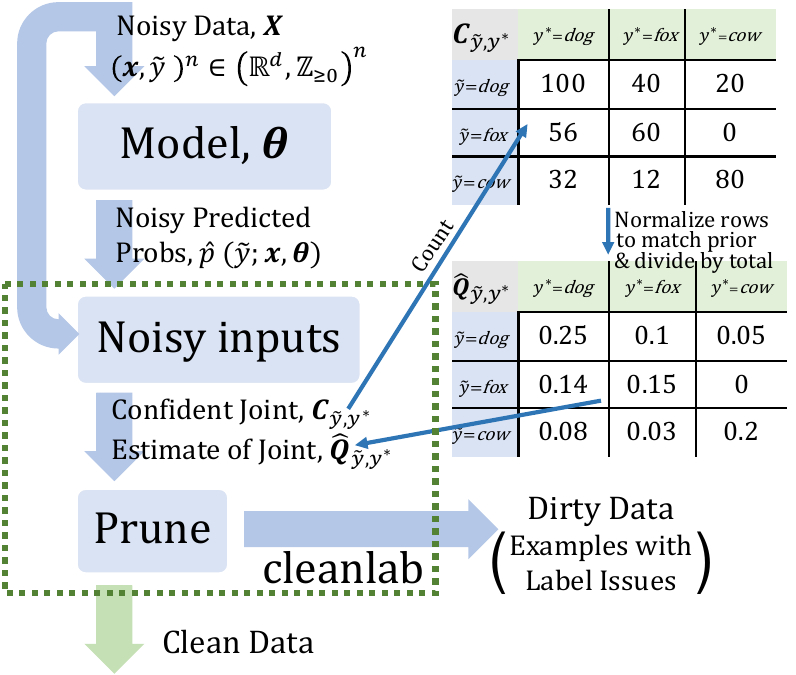}}
\caption{An example of the confident learning (CL) process. CL uses the confident joint, $\cj$, and $\estjoint$, an estimate of $\joint$, the joint distribution of noisy observed labels $\tilde{y}$ and unknown true labels $y^*$, to find examples with label errors and produce clean data for training.}
    \label{fig_flow_examples}
\end{figure*}

\subsection{Count: Characterize and Find Label Errors using the Confident Joint}
\label{sec:label_noise_characterization}

To estimate the joint distribution of noisy labels $\tilde{y}$ and true labels, $\joint$, we count examples that are likely to belong to another class and calibrate those counts so that they sum to the given count of noisy labels in each class, $|\bm{X}_{\tilde{y}=i}|$. Counts are captured in the \emph{confident joint} ${\cj \in \mathbb{Z}_{\geq 0}}^{m \times m}$, a statistical data structure in CL to directly find label errors. Diagonal entries of $\cj$ count correct labels and non-diagonals capture asymmetric label error counts. As an example, $C_{\tilde{y}\smalleq 3,y^* \smalleq 1} \smalleq 10$ is read, ``Ten examples are labeled \emph{3} but should be labeled \emph{1}.''

In this section, we first introduce the \emph{confident joint} $\cj$ to partition and count label errors. Second, we show how $\cj$ is used to estimate $\joint$ and characterize label noise in a dataset $\bm{X}$. Finally, we provide a related baseline $\bm{C}_{\text{confusion}}$ and consider its assumptions and short-comings (e.g. class-imbalance) in comparison with $\cj$ and CL. CL overcomes these shortcomings using thresholding and collision handling to enable robustness to class imbalance and heterogeneity in predicted probability distributions across classes.

\paragraph{The confident joint $\cj$}

$\cj$ estimates $\partition$, the set of examples with noisy label $i$ that actually have true label $j$, by partitioning $\bm{X}$ into estimate bins $\estpartition$. When $\estpartition = \partition$, then $\cj$ exactly finds label errors (proof in Sec. \ref{sec:theory}). $\estpartition$ (note the hat above $\hat{\bm{X}}$ to indicate $\estpartition$ is an estimate of $\partition$) is the set of examples $\bm{x}$ labeled $\tilde{y}\smalleq i$ with \emph{large enough} $\hat{p} (\tilde{y} = j ;\bm{x}, \bm{\theta})$ to likely belong to class $y^*\smalleq j$, determined by a per-class threshold, $t_j$. Formally, the definition of the \emph{confident joint} is

\begin{equation} \label{eqn_paper_confident_joint}
\begin{split}
    \cj[i][j] \coloneqq & \lvert \estpartition  \rvert  \quad \text{where} \\
    \estpartition \coloneqq & \left\{ \bm{x} \in \bm{X}_{\tilde{y} = i} : \; \hat{p} (\tilde{y} = j ;\bm{x}, \bm{\theta})  \ge t_j, \,\, j = \hspace{-18pt} \argmax_{l \in [m] : \hat{p} (\tilde{y} = l ;\bm{x}, \bm{\theta})  \ge t_l}  \hat{p} (\tilde{y} = l ;\bm{x}, \bm{\theta})  \right\}
\end{split}
\end{equation}

and the threshold $t_j$ is the expected (average) self-confidence for each class
\vspace{-2pt}
\begin{equation} \label{eqn_paper_threshold}
     t_j = \frac{1}{|\bm{X}_{\tilde{y}=j}|} \sum_{\bm{x} \in \bm{X}_{\tilde{y}=j}} \hat{p}(\tilde{y}=j; \bm{x}, \bm{\theta})
\end{equation}

Unlike prior art, which estimates label errors under the assumption that the true labels are $\tilde{y}^*_k = \argmax_{i \in [m]} \hat{p}(\tilde{y}\smalleq i;\bm{x}_k, \bm{\theta})$~\citep{chen2019confusion}, the thresholds in this formulation improve CL uncertainty quantification robustness to (1) heterogeneous class probability distributions and (2) class-imbalance. For example, if examples labeled $i$ tend to have higher probabilities because the model is over-confident about class $i$, then $t_i$ will be proportionally larger; if some other class $j$ tends toward low probabilities, $t_j$ will be smaller. These thresholds allow us to guess $y^*$ in spite of class-imbalance, unlike prior art which may guess over-confident classes for $y^*$ because $\argmax$ is used \citep{kilian_weinberger_calibration_Guo2017_icml}. We examine ``how good'' the probabilities produced by model $\model$ need to be for this approach to work in Section \ref{sec:theory}.

To disentangle Eqn. \ref{eqn_paper_confident_joint}, consider a simplified formulation:
\begin{align}
    \hat{\bm{X}}^{\text{(simple)}}_{ \tilde{y} \smalleq i,y^* \smalleq j} &  = \{\bm{x} \in \bm{X}_{\tilde{y} = i} : \,\, \hat{p} (\tilde{y} = j ;\bm{x}, \bm{\theta})  \ge t_j \} \nonumber
\end{align}

The simplified formulation, however, introduces \emph{label collisions} when an example $\bm{x}$ is confidently counted into more than one $\estpartition$ bin. Collisions only occur along the $y^*$ dimension of $\cj$ because $\tilde{y}$ is given. We handle collisions in the right-hand side of Eqn. \ref{eqn_paper_confident_joint} by selecting $\hat{y}^* \gets \argmax_{j \in [m]} \; \hat{p} (\tilde{y} = j ;\bm{x}, \bm{\theta})$ whenever $\lvert \left\{ k \smallin [m] : \hat{p} (\tilde{y} \smalleq k ;\bm{x} \smallin \bm{X}_{\tilde{y} = i}, \bm{\theta})  \ge t_k \right\} \rvert > 1$ (collision). In practice with softmax, collisions sometimes occur for softmax outputs with higher temperature (more uniform probabilities), few collisions occur with lower temperature, and no collisions occur with a temperature of zero (one-hot prediction probabilities).


The definition of $\cj$ in Eqn. \ref{eqn_paper_confident_joint} has some nice properties in certain circumstances.
First, if an example has low (near-uniform) predicted probabilities across classes, then it will not be counted for any class in $\cj$ so that $\cj$ may be robust to pure noise or examples from an alien class not in the dataset.
Second, $\cj$ is intuitive -- $t_j$ embodies the intuition that examples with higher probability of belonging to class $j$ than the expected probability of examples in class $j$ probably belong to class $j$.
Third, thresholding allows flexibility -- for example, the $90^{th}$ percentile may be used in $t_j$ instead of the mean to find errors with higher confidence; despite the flexibility, we use the mean because we show (in Sec. \ref{sec:theory}) that this formulation exactly finds label errors in various settings, and we leave the study of other formulations, like a percentile-based threshold, as future work.

\paragraph{Complexity} We provide algorithmic implementations of Eqns. \ref{eqn_paper_threshold}, \ref{eqn_paper_confident_joint}, and \ref{eqn_calibration} in the Appendix. Given predicted probabilities $\bm{\hat{P}}_{k,i}$ and noisy labels $\tilde{y}$, these require $\mathcal{O}(m^2 + nm)$ storage and arithmetic operations to compute $\cj$, for $n$ training examples over $m$ classes.

\paragraph{Estimate the joint $\estjoint$.} Given the confident joint $\cj$, we estimate $\joint$ as
\begin{equation} \label{eqn_calibration}
    \estjointlong = \frac{\frac{\bm{C}_{\tilde{y}=i, y^*=j}}{\sum_{j \in [m]} \bm{C}_{\tilde{y}=i, y^*=j}} \cdot \lvert \bm{X}_{\tilde{y}=i} \rvert}{\sum\limits_{i \in [m], j \in [m]} \left( \frac{\bm{C}_{\tilde{y}=i, y^*=j}}{\sum_{j^\prime \in [m]} \bm{C}_{\tilde{y}=i, y^*=j^\prime}} \cdot \lvert \bm{X}_{\tilde{y}=i} \rvert \right)}
\end{equation}
The numerator calibrates $\sum_j \estjointlong = \lvert \bm{X}_i \rvert / \sum_{i \in [m]}  \lvert \bm{X}_i \rvert, \forall i \smallin [m]$ so that row-sums match the observed marginals. The denominator calibrates $\sum_{i,j} \estjointlong = 1$ so that the distribution sums to 1.

\paragraph{Label noise characterization} Using the observed prior $\noisypriorlong = \lvert \bm{X}_i \rvert \, / \, \sum_{i \in [m]}  \lvert \bm{X}_i \rvert $ and marginals of $\joint$, we estimate the latent prior as $\estpriorlong  \coloneqq  \sum_i \estjointlong, \forall j \smallin [m]$; the noise transition matrix (noisy channel) as $\estnmlong \coloneqq \estjointlong / \estpriorlong, \forall i \smallin [m]$; and the mixing matrix \citep{scott19_jmlr_mutual_decontamination} as $\estinvlong \coloneqq \hat{\bm{Q}}^\top_{\tilde{y} \smalleq j,y^* = i} / \noisypriorlong, \forall i \smallin [m]$. As long as $\estjoint \approxeq \joint$, each of these estimators is similarly consistent (we prove this is the case under practical conditions in Sec. \ref{sec:theory}).  Whereas prior approaches compute the noise transition matrices by directly averaging error-prone predicted probabilities \citep{noisy_boostrapping_google_reed_iclr_2015, DBLP:conf/iclr/GoldbergerB17_smodel}, CL is one step removed from the predicted probabilities by estimating noise rates based on counts from $\cj$ -- these counts are computed based on whether the predicted probability is greater than a threshold, relying only on the \emph{relative ranking} of the predicted probability, not its exact value. This feature lends itself to the robustness of confident learning to imperfect probability estimation.

\paragraph{Baseline approach $\bm{C}_{\text{confusion}}$}
To situate our understanding of $\cj$ performance in the context of prior work, we compare $\cj$ with $\bm{C}_{\text{confusion}}$, a baseline based on a single-iteration of the performant INCV method \citep{chen2019confusion}. $\bm{C}_{\text{confusion}}$ forms an $m \times m$ confusion matrix of counts $\lvert \tilde{y}_k = i, y^*_k = j \rvert$ across all examples $\bm{x}_k$, assuming that model predictions, trained from noisy labels, uncover the true labels, i.e. $\bm{C}_{\text{confusion}}$ simply assumes $y^*_k = \argmax_{i \in [m]} \hat{p}(\tilde{y}\smalleq i;\bm{x}_k, \bm{\theta})$. This baseline approach performs reasonably empirically (Sec. \ref{sec:experiments}) and is a consistent estimator for noiseless predicted probabilities (Thm. \ref{thm:exact_label_errors}), but fails when the distributions of probabilities are not similar for each class (Thm. \ref{thm:robustness}), e.g. class-imbalance, or when predicted probabilities are overconfident \citep{kilian_weinberger_calibration_Guo2017_icml}.

\paragraph{Comparison of $\cj$ (confident joint) with $\bm{C}_{\text{confusion}}$ (baseline)}
To overcome the sensitivity of $\bm{C}_{\text{confusion}}$ to class-imbalance and distribution heterogeneity, the \emph{confident joint}, $\cj$, uses per-class thresholding \citep{richard1991neural_thresholding, elkan_cost_sensitive_learning_thresholding} as a form of calibration \citep{hendrycks17baseline}. Moreover, we prove that unlike $\bm{C}_{\text{confusion}}$, the confident joint (Eqn. \ref{eqn_paper_confident_joint}) exactly finds label errors and consistently estimates $\joint$ in more realistic settings with noisy predicted probabilities (see Sec. \ref{sec:theory}, Thm. \ref{thm:robustness}).

\subsection{Rank and Prune: Data Cleaning} \label{sec:rank_prune_find_errors}

Following the estimation of $\cj$ and $\joint$ (Section \ref{sec:label_noise_characterization}), any rank and prune approach can be used to clean data. This \emph{modularity} property allows CL to find label errors using interpretable and explainable ranking methods, whereas prior works typically couple estimation of the noise transition matrix with training loss \citep{DBLP:conf/iclr/GoldbergerB17_smodel} or couple the label confidence of each example with the training loss using loss reweighting \citep{NIPS2013_5073, jiang2018mentornet}. In this paper, we investigate and evaluate five rank and prune methods for finding label errors, grouped into two approaches. We provide a theoretical analysis for Method 2: $\cj$ in Sec. \ref{sec:theory} and evaluate all methods empirically in Sec. \ref{sec:experiments}.

\paragraph{Approach 1: Use off-diagonals of $\cj$ to estimate $\estpartition$} We directly use the sets of examples counted in the off-diagonals of $\cj$ to estimate label errors. 

\textbf{\emph{CL baseline 1: $\bm{C}_{\text{confusion}}$.}} \;\; Estimate label errors as the Boolean vector ${\tilde{y}}_{k} \neq \argmax_{j \in [m]} \hat{p}(\tilde{y} = j; \bm{x}_k, \bm{\theta})$, for all $\bm{x}_k \smallin \bm{X}$, where \emph{true} implies label error and \emph{false} implies clean data. This is identical to using the off-diagonals of $\bm{C}_{\text{confusion}}$ and similar to a single iteration of INCV \citep{chen2019confusion}.

\textbf{\emph{CL method 2: $\cj$.}} \;\; Estimate label errors as $\{\bm{x} \in \estpartition : i \neq j \}$ from the off-diagonals of $\cj$.

\paragraph{Approach 2: Use $n \cdot \estjoint$ to estimate $\lvert \estpartition \rvert$, prune by probability ranking} These approaches calculate $n \cdot \estjoint$ to estimate $\lvert \estpartition \rvert$, the count of label errors in each partition. They either sum over the $y^*$ dimension of $\lvert \estpartition \rvert$ to estimate and remove the number of errors in each class (prune by class), or prune for every off-diagonal partition (\emph{prune by noise rate}). The choice of which examples to remove is made by ranking the examples based on predicted probabilities.

\textbf{\emph{CL method 3: Prune by Class (PBC).}} \;\; For each class $i \in [m]$, select the \\ $n \cdot\sum_{j \in [m] : j \neq i} \left( \hat{\bm{Q}}_{\tilde{y} = i, y^* = j}[i] \right)$ examples with lowest self-confidence $\hat{p}(\tilde{y}=i;\bm{x} \in \bm{X}_i)$ .

\textbf{\emph{CL method 4: Prune by Noise Rate (PBNR).}} \;\; For each off-diagonal entry in $\estjointlong, i \neq j$, select the $n \cdot \estjointlong$ examples $\bm{x} \smallin \bm{X}_{\tilde{y}\smalleq i}$ with max margin $\predprobshortj - \predprobshorti$.
This margin is adapted from \possessivecite{wei2018nomralizedmaxmargin} normalized margin.

\textbf{\emph{CL method 5: C+NR.}} \;\; Combine the previous two methods via element-wise `\emph{and}', i.e. set intersection. Prune an example if both methods PBC and PBNR prune that example.

\paragraph{Learning with Noisy Labels} To train with errors removed, we account for missing data by reweighting the loss by $\frac{1}{\hat{p}(\tilde{y}=i \vert y^* = i)} \smalleq \frac{\estprior[i]}{\estjoint[i][i]}$ for each class $i \smallin [m]$, where dividing by $\estjoint[i][i]$ normalizes out the count of clean training data and $\estprior[i]$ re-normalizes to the latent number of examples in class $i$. CL finds errors, but does not prescribe a specific training procedure using the clean data. Theoretically, CL requires no hyper-parameters to find label errors. In practice, cross-validation might introduce a hyper-parameter: $k$-fold. However, in our paper $k=4$ is fixed in the experiments using cross-validation.

\paragraph{Which CL method to use?} Five methods are presented to clean data. By default we use CL: $\cj$ because it matches the conditions of Thm. \ref{thm:robustness} exactly and is experimentally performant (see Table \ref{table:cifar10_label_error_measures}). Once label errors are found, we observe ordering label errors by the normalized margin: $\hat{p}(\tilde{y} \smalleq i; \bm{x}, \bm{\theta}) - \max_{j \neq i} \hat{p}(\tilde{y} \smalleq j; \bm{x}, \bm{\theta})$ \citep{wei2018nomralizedmaxmargin} works well. 

\section{Theory} \label{sec:theory}

In this section, we examine sufficient conditions when (1) the confident joint exactly finds label errors and (2) $\estjoint$ is a consistent estimator for $\joint$. We first analyze CL for noiseless $\predprobshortj$, then evaluate more realistic conditions, culminating in Thm. \ref{thm:robustness} where we prove (1) and (2) with noise in the predicted probabilities for every example. Proofs are in the Appendix (see Sec. \ref{sec:proofs}). As a notation reminder, $\predprobshorti$ is shorthand for $\hat{p}(\tilde{y} \smalleq i; \bm{x}, \bm{\theta})$. 

In the statement of each theorem, we use $\estjoint \approxeq \joint$, i.e. \emph{approximately equals}, to account for precision error of using discrete count-based $\cj$ to estimate real-valued $\joint$. For example, if a noise rate is $0.39$, but the dataset has only 5 examples in that class, the nearest possible estimate by removing errors is $2/5 = 0.4  \approxeq 0.39$. So, $\estjoint$ is technically a \emph{consistent estimator} for $\joint$ only because of discretization error, otherwise all equalities are exact. Throughout, we assume $\bm{X}$ includes at least one example from every class.

\subsection{Noiseless Predicted Probabilities}

We start with the \emph{ideal} condition and a non-obvious lemma that yields a closed-form expression for threshold $t_i$ when $\predprobshorti$ is ideal. Without some condition on $\predprobshorti$, one cannot disambiguate label noise from model noise. 

\begin{condition}[Ideal]
The predicted probabilities $\pyx$ for a model $\theta$ are \emph{ideal} if $\forall \bm{x}_k \smallin \bm{X}_{y^* \tinyeq j}, i \smallin [m], j \smallin [m]$,
we have that
$\hat{p}(\tilde{y} \smalleq i ; \bm{x}_k \in \bm{X}_{y^* \tinyeq j}, \bm{\theta}) = p^*(\tilde{y} \smalleq i \vert y^* \smalleq y_k^*) = p^*(\tilde{y} \smalleq i \vert y^* \smalleq j)$. \emph{The final equality follows from the class-conditional noise process assumption. The \emph{ideal} condition implies error-free predicted probabilities: they match the noise rates corresponding to the $y^*$ label of $\bm{x}$. We use $\perfprobshorti$ as a shorthand.}
\end{condition}

\begin{lemma}[Ideal Thresholds] \label{lemma:ideal_threshold} 
For a noisy dataset $\bm{X} \coloneqq (\bm{x}, \tilde{y})^n \in {(\mathbb{R}^d, [m])}^n$ and model $\model$, 
if $\hat{p}(\tilde{y}; \bm{x}, \model)$ is \emph{ideal}, then 
$\forall i \smallin [m], t_i = \sum_{j \smallin [m]} p(\tilde{y} = i \vert y^* \smalleq j) p(y^* \smalleq j \vert \tilde{y} = i)$.
\end{lemma}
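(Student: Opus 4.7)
The plan is to start from the definition
\[
t_i \;=\; \frac{1}{\lvert \bm{X}_{\tilde{y}=i}\rvert}\sum_{\bm{x}\in\bm{X}_{\tilde{y}=i}} \hat{p}(\tilde{y}=i;\bm{x},\bm{\theta})
\]
and reorganize the sum by partitioning $\bm{X}_{\tilde{y}=i}$ according to the (latent) true label. That is, writing $\bm{X}_{\tilde{y}=i} = \bigsqcup_{j\in[m]} \bm{X}_{\tilde{y}=i,\,y^*=j}$, I rewrite the average as $\sum_j \frac{\lvert \bm{X}_{\tilde{y}=i,y^*=j}\rvert}{\lvert \bm{X}_{\tilde{y}=i}\rvert}$ times the average of $\hat p(\tilde{y}=i;\bm{x},\bm\theta)$ taken over $\bm{x}\in \bm{X}_{\tilde y=i,y^*=j}$.

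Next I invoke the ideal condition: for every $\bm{x}\in \bm{X}_{y^*=j}$, the predicted probability $\hat p(\tilde y=i;\bm x,\bm\theta)$ equals the true noise rate $p^*(\tilde y=i\mid y^*=j)$, and this constant does not depend on $\bm x$. Hence the inner average collapses to $p(\tilde y=i\mid y^*=j)$, giving
\[
 t_i \;=\; \sum_{j\in[m]} \frac{\lvert \bm{X}_{\tilde{y}=i,\,y^*=j}\rvert}{\lvert \bm{X}_{\tilde{y}=i}\rvert}\; p(\tilde y=i\mid y^*=j).
\]

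Finally, I identify the empirical ratio with the conditional $p(y^*=j\mid \tilde y=i)$: treating the dataset as reflecting the underlying joint distribution $\joint$ (consistent with how the theorem's statement equates the empirical $t_i$ with true probabilities; this is precisely the sample-to-population identification that the companion ``approximately equals'' convention of Section \ref{sec:theory} makes exact up to discretization), we have $\lvert\bm{X}_{\tilde y=i,y^*=j}\rvert / \lvert\bm{X}_{\tilde y=i}\rvert = p(y^*=j\mid\tilde y=i)$. Substituting yields the claimed closed form.

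The only subtle point is the last step: translating counts into probabilities. All the heavy lifting is done by the ideal condition, which removes $\bm{x}$ from inside the average; after that, the derivation is a bookkeeping exercise on the partition of $\bm{X}_{\tilde y=i}$ by $y^*$, combined with the definitional identity $p(\tilde y=i\mid y^*=j)\,p(y^*=j\mid \tilde y=i) = p(\tilde y=i, y^*=j)\cdot p(y^*=j\mid \tilde y=i)/p(y^*=j)$, which is not actually needed once the partition form is written — the theorem's right-hand side already matches our rearranged expression term-by-term.
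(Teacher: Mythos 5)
Your proof is correct, but it takes a mechanically different route from the paper's. The paper proves the lemma entirely inside the probabilistic formalism: it expands $\hat{p}(\tilde{y}=i;\bm{x},\bm{\theta})$ via Bayes' rule as $\sum_{j} \hat{p}(\tilde{y}=i\vert y^*=j;\bm{x},\bm{\theta})\,\hat{p}(y^*=j;\bm{x},\bm{\theta})$, strips the $\bm{x}$-dependence from the conditional using the class-conditional noise assumption, exchanges the expectation with the sum, and only then invokes the ideal condition to turn $\mathbb{E}_{\bm{x}\in\bm{X}_{\tilde{y}=i}}\hat{p}(y^*=j;\bm{x},\bm{\theta})$ into $p(y^*=j\vert\tilde{y}=i)$. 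You instead partition $\bm{X}_{\tilde{y}=i}$ by the latent true label, use the ideal condition to collapse the predicted probability on each cell to the constant $p(\tilde{y}=i\vert y^*=j)$, and then identify the empirical fraction $\lvert\bm{X}_{\tilde{y}=i,y^*=j}\rvert/\lvert\bm{X}_{\tilde{y}=i}\rvert$ with $p(y^*=j\vert\tilde{y}=i)$. Your counting argument is more elementary and arguably cleaner, since it never needs the model's implicit posterior $\hat{p}(y^*=j;\bm{x},\bm{\theta})$ (a quantity the setup never defines the model as outputting), whereas the paper's Bayes-rule step does; the price is that your final step rests explicitly on a sample-to-population identification that is exact only up to the discretization/$n\to\infty$ convention the paper adopts (its $\approxeq$ convention) --- but the paper's last line makes an entirely analogous identification implicitly, so neither proof is more rigorous on this point, and you are right that this is where the only subtlety lives.
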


This form of the threshold is intuitively reasonable: the contributions to the sum when $i=j$ represents the probabilities of correct labeling, whereas when $i\neq j$, the terms give the probabilities of mislabeling $p(\tilde{y} = i \vert y^* = j)$, weighted by the probability $p(y^* = j \vert \tilde{y} = i)$ that the mislabeling is corrected. Using Lemma \ref{lemma:ideal_threshold} under the ideal condition, we prove in Thm. \ref{thm:exact_label_errors} confident learning exactly finds label errors and $\estjoint$ is a consistent estimator for $\joint$ when each diagonal entry of $\nm$ maximizes its row and column. The proof hinges on the fact that the construction of $\cj$ eliminates collisions.

\begin{theorem}[Exact Label Errors] \label{thm:exact_label_errors} 
For a noisy dataset, $\bm{X} \coloneqq (\bm{x}, \tilde{y})^n \smallin {(\mathbb{R}^d, [m])}^n$ and model $\bm{\theta}{\scriptstyle:} \bm{x} \smallra \hat{p}(\tilde{y})$, 
if $\hat{p}(\tilde{y} ; \bm{x}, \bm{\theta})$ is \emph{ideal} and each diagonal entry of $\nm$ maximizes its row and column, 
then $\hat{\bm{X}}_{ \tilde{y} \smalleq i,y^* \smalleq j} = \bm{X}_{\tilde{y} \smalleq i, y^* \smalleq j}$ and $\estjoint \approxeq \joint$ (consistent estimator for $\joint$).
\end{theorem}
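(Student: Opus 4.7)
The plan is to decompose the theorem into two claims: (i) the partition equality $\estpartition = \partition$ for every $i,j \in [m]$, and (ii) the consistency $\estjoint \approxeq \joint$. Once (i) is established, (ii) follows from the calibration formula (Eqn.~\ref{eqn_calibration}) by routine algebra: the row sum $\sum_{j'} \cj[i][j'] = |\bm{X}_{\tilde{y}\smalleq i}|$, so the row-normalized numerator rescaled by $|\bm{X}_{\tilde{y}\smalleq i}|$ reduces to $|\bm{X}_{\tilde{y}\smalleq i,y^*\smalleq j}|$, the denominator equals $n$, and therefore $\estjoint[i][j] = |\bm{X}_{\tilde{y}\smalleq i,y^*\smalleq j}|/n$, which is the empirical version of $\joint[i][j] = p(\tilde{y}\smalleq i, y^*\smalleq j)$ up to the discretization gap absorbed by $\approxeq$. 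The substantive content is therefore all in (i).

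For (i), I would fix an arbitrary $\bm{x}_k$ with $\tilde{y}_k = i$ and $y^*_k = j$ and show (a) $\bm{x}_k \in \estpartition$, and (b) $\bm{x}_k \notin \hat{\bm{X}}_{\tilde{y}\smalleq i, y^*\smalleq j'}$ for any $j' \neq j$. Under the ideal condition, $\hat{p}(\tilde{y}\smalleq l;\bm{x}_k,\bm{\theta}) = p(\tilde{y}\smalleq l \vert y^*\smalleq j)$ for every $l$, so the behavior of $\bm{x}_k$ under Eqn.~\ref{eqn_paper_confident_joint} depends only on column $j$ of $\nm$. Two things must be verified: the threshold condition $\hat{p}(\tilde{y}\smalleq j;\bm{x}_k,\bm{\theta}) \ge t_j$, and that $j$ is the argmax over $\{l : \hat{p}(\tilde{y}\smalleq l;\bm{x}_k,\bm{\theta}) \ge t_l\}$. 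The threshold check uses Lemma~\ref{lemma:ideal_threshold} combined with the row-max hypothesis: since $p(\tilde{y}\smalleq j \vert y^*\smalleq l) \le p(\tilde{y}\smalleq j \vert y^*\smalleq j)$ for every $l$, plugging into $t_j = \sum_l p(\tilde{y}\smalleq j \vert y^*\smalleq l)\, p(y^*\smalleq l \vert \tilde{y}\smalleq j)$ and factoring out the maximum gives $t_j \le p(\tilde{y}\smalleq j \vert y^*\smalleq j)$. The argmax check uses the column-max hypothesis directly: $p(\tilde{y}\smalleq l \vert y^*\smalleq j) \le p(\tilde{y}\smalleq j \vert y^*\smalleq j)$ for all $l$, so $j$ is the unconditional argmax of the predicted-probability vector and, a fortiori, the argmax over any restricted subset containing $j$. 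For (b), any $\bm{x}_{k'}$ with $y^*_{k'} = j' \neq j$ has predicted probabilities drawn from column $j'$ of $\nm$; the same column-max argument resolves its argmax uniquely to $j'$, so $\bm{x}_{k'}$ is assigned to a bin with $y^*\smalleq j'$, not $y^*\smalleq j$. Note that both row-max and column-max hypotheses are genuinely needed, one for each check.

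The main obstacle will be handling ties. If the diagonal dominance is only weak --- i.e., $p(\tilde{y}\smalleq j \vert y^*\smalleq j) = p(\tilde{y}\smalleq j \vert y^*\smalleq l)$ or $p(\tilde{y}\smalleq j \vert y^*\smalleq j) = p(\tilde{y}\smalleq l \vert y^*\smalleq j)$ for some $l \neq j$ --- then the argmax selection and the collision-resolution rule in Eqn.~\ref{eqn_paper_confident_joint} become ambiguous, and boundary examples may be placed in a ``wrong'' bin depending on the tie-breaking convention. My plan is either to strengthen the hypothesis implicitly to strict diagonal dominance, or to observe that under any fixed tie-breaker the resulting partition differs from $\partition$ only on a negligible set whose effect is absorbed into the $\approxeq$ slack --- precisely the kind of discretization artifact that the paper's notation is designed to permit.
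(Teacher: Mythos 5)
Your proposal is correct and follows essentially the same route as the paper's proof: the threshold inequality $t_j \le p(\tilde{y}\smalleq j \vert y^*\smalleq j)$ via Lemma~\ref{lemma:ideal_threshold} plus the row-max hypothesis, and the column-max hypothesis to resolve the argmax (the paper organizes this as a collision/no-collision case split with two subset inclusions, but the ingredients are identical). Your observation about weak ties is apt --- the paper's proof also implicitly assumes the argmax resolves to $y_k^*$, so strict diagonal dominance (or a favorable tie-breaking convention) is genuinely needed for the exact set equality, since that part of the claim is not covered by the $\approxeq$ slack.
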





While Thm. \ref{thm:exact_label_errors} is a reasonable sanity check, observe that $ y^* \gets \argmax_j \; \hat{p}(\tilde{y} \smalleq i \vert \tilde{y}^* \smalleq i; \bm{x}) $, used by $\bm{C}_{\text{confusion}}$, trivially satisfies Thm. \ref{thm:exact_label_errors} if the diagonal of $\nm$ maximizes its row and column. We highlight this because $\bm{C}_{\text{confusion}}$ is the variant of CL most-related to prior work (e.g., \citet{chen2019confusion}). We next consider relaxed conditions \emph{motivated by real-world settings}~(e.g., \citet{jiang2020beyond}) where $\cj$ exactly finds label errors ($\hat{\bm{X}}_{ \tilde{y} \smalleq i,y^* \smalleq j} = \bm{X}_{\tilde{y} \smalleq i, y^* \smalleq j}$) and consistently estimates the joint distribution of noisy and true labels ($\estjoint \approxeq \joint$), but $\bm{C}_{\text{confusion}}$ does not.

\subsection{Noisy Predicted Probabilities}

Motivated by the importance of addressing class imbalance and heterogeneous class probability distributions, we consider linear combinations of noise per-class. Here, we index $\predprobshortj$ by $j$ to match the comparison $\hat{p} (\tilde{y} \smalleq j ;\bm{x}, \bm{\theta}) \ge t_j$ from the construction of $\cj$ (see Eqn. \ref{eqn_paper_confident_joint}).

\begin{condition} [Per-Class Diffracted]
    $\predprobshortj$ is \emph{per-class diffracted} if there exist linear combinations of class-conditional error in the predicted probabilities s.t. $\predprobshortj = \epsilon_j^{(1)} \perfprobshortj + \epsilon_j^{(2)}$ where $\epsilon_j^{(1)}, \epsilon_j^{(2)} \in \mathbb{R}$ and $\epsilon_j$ can be any distribution. \emph{This relaxes the \emph{ideal} condition with noise that is relevant for neural networks, which are known to be class-conditionally overly confident \citep{kilian_weinberger_calibration_Guo2017_icml}.}
\end{condition}



\begin{corollary}[Per-Class Robustness] \label{cor:per_class_robustness} 
For a noisy dataset, $\bm{X} \coloneqq (\bm{x}, \tilde{y})^n \smallin {(\mathbb{R}^d, [m])}^n$ and model $\bm{\theta}{\scriptstyle:} \bm{x} \smallra \hat{p}(\tilde{y})$, 
if $\predprobshortj$ is \textbf{per-class diffracted} without label collisions and each diagonal entry of $\nm$ maximizes its row, then $\hat{\bm{X}}_{ \tilde{y} \smalleq i,y^* \smalleq j} = \bm{X}_{\tilde{y} \smalleq i, y^* \smalleq j}$ and $\estjoint \approxeq \joint$.
\end{corollary}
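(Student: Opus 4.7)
The plan is to reduce Corollary \ref{cor:per_class_robustness} to Theorem \ref{thm:exact_label_errors} by showing that under \emph{per-class diffraction}, the threshold comparison $\predprobshortj \geq t_j$ reduces to its ideal counterpart $\perfprobshortj \geq t_j^*$, so that the partitions $\estpartition$ formed by $\cj$ coincide with those in the ideal setting. First, I would plug the per-class diffraction form $\predprobshortj = \epsilon_j^{(1)} \perfprobshortj + \epsilon_j^{(2)}$ into the definition of $t_j$ in Eqn.~\ref{eqn_paper_threshold}. Because the perturbation coefficients $\epsilon_j^{(1)}, \epsilon_j^{(2)}$ depend only on $j$, they factor out of the average:
\begin{equation*}
t_j \;=\; \frac{1}{|\bm{X}_{\tilde{y}=j}|} \sum_{\bm{x} \in \bm{X}_{\tilde{y}=j}} \bigl(\epsilon_j^{(1)} \perfprobshortj + \epsilon_j^{(2)}\bigr) \;=\; \epsilon_j^{(1)} t_j^{*} + \epsilon_j^{(2)},
\end{equation*}
where $t_j^{*}$ denotes the ideal threshold computed with $\perfprobshortj$. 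Thus the comparison $\predprobshortj \geq t_j$ becomes $\epsilon_j^{(1)} (\perfprobshortj - t_j^{*}) \geq 0$, which for $\epsilon_j^{(1)} > 0$ is exactly $\perfprobshortj \geq t_j^{*}$; the additive shift $\epsilon_j^{(2)}$ cancels on both sides, and the multiplicative factor $\epsilon_j^{(1)}$ cancels by positivity.

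Second, I would invoke Lemma \ref{lemma:ideal_threshold} to express $t_j^{*}$ in closed form, then mimic the bin-assignment argument from the proof of Thm. \ref{thm:exact_label_errors}: every example $\bm{x} \in \bm{X}_{\tilde{y}=i, y^* = j}$ satisfies $\perfprobshortj \geq t_j^{*}$ (so it enters $\estpartition$), while every example with $y^* \neq j$ fails to clear the ideal threshold in the $j$-th slot. Since collisions are explicitly assumed absent, the $\argmax$ disambiguation in Eqn.~\ref{eqn_paper_confident_joint} is vacuous: each qualifying example has a unique $j$ with $\predprobshortj \geq t_j$, so it is counted exactly once in $\cj[i][j]$. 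Combined with the row-maximality of the diagonal of $\nm$, which ensures the correct $j = y^*_k$ is the surviving candidate, this yields $\estpartition = \partition$, and consequently $\cj[i][j] = |\partition|$ for all $i,j$.

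Third, once $\cj[i][j] = |\partition|$, the calibration Eqn.~\ref{eqn_calibration} gives $\estjoint \approxeq \joint$ up to the discretization artifact discussed in the preamble to Sec. \ref{sec:theory}: the numerator of Eqn.~\ref{eqn_calibration} rescales row $i$ of $\cj$ to sum to the observed marginal $|\bm{X}_{\tilde{y}=i}|/n$, which matches the true marginal of $\joint$, and the denominator normalizes the whole matrix to a probability distribution. This is exactly the same normalization step executed in Thm. \ref{thm:exact_label_errors}, so the consistency claim transfers verbatim.

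The main obstacle is handling the sign and degeneracy of $\epsilon_j^{(1)}$: if $\epsilon_j^{(1)} \leq 0$ the equivalence of the thresholded inequalities flips or collapses, so I would either assume $\epsilon_j^{(1)} > 0$ as the implicit intended setting (consistent with the statement that diffraction models class-conditional \emph{overconfidence}, which scales probabilities in a definite direction), or argue that if $\epsilon_j^{(1)} = 0$ the class has a constant predicted probability across examples and the row-max condition on $\nm$ forces all mass into the correct diagonal bin anyway. A secondary subtlety is that Thm. \ref{thm:exact_label_errors} requires the diagonal to maximize both its row and column, whereas the corollary weakens this to rows only; this weakening is legitimate precisely because the collision-free assumption removes the need for the column-max condition, which in the original proof was used solely to resolve collisions along the $y^*$ axis.
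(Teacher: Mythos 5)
Your proposal follows essentially the same route as the paper's proof: show the per-class affine perturbation transforms the threshold to $t_j^{\epsilon_j} = \epsilon_j^{(1)} t_j^{*} + \epsilon_j^{(2)}$, cancel the perturbation on both sides of the comparison so the bins reduce to the ideal-condition bins of Thm.~\ref{thm:exact_label_errors}, and observe that the column-max condition is only needed to resolve collisions, which are assumed absent. Your explicit caveat about the sign of $\epsilon_j^{(1)}$ is a point the paper's proof glosses over (it divides out $\epsilon_j^{(1)}$ without comment), so flagging it and resolving it via the positivity/overconfidence interpretation is a welcome refinement rather than a deviation.
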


Cor. \ref{cor:per_class_robustness} shows us that $\cj$ in confident learning (which counts $\hat{\bm{X}}_{ \tilde{y} \smalleq i,y^* \smalleq j}$) is robust to any linear combination of per-class error in probabilities. This is not the case for $\bm{C}_{\text{confusion}}$ because Cor. \ref{cor:per_class_robustness} no longer requires that the diagonal of $\nm$ maximize its column as before in Thm. \ref{thm:exact_label_errors}: for intuition, consider an extreme case of per-class diffraction where the probabilities of only one class are all dramatically increased. Then $\bm{C}_{\text{confusion}}$, which relies on $\tilde{y}^*_k \gets \argmax_{i \in [m]} \; \hat{p}(\tilde{y} \smalleq i \vert y^* \smalleq j; \bm{x}_k) $, will count only that one class for all $y^*$ such that all entries in the $\bm{C}_{\text{confusion}}$ will be zero except for one column, i.e. $\bm{C}_{\text{confusion}}$ cannot count entries in any other column, so $\hat{\bm{X}}_{ \tilde{y} \smalleq i,y^* \smalleq j} \neq \bm{X}_{\tilde{y} \smalleq i, y^* \smalleq j}$. In comparison, for $\cj$, the increased probabilities of the one class would be subtracted by the class-threshold, re-normalizing the columns of the matrix, such that, $\cj$ satisfies Cor. \ref{cor:per_class_robustness} using thresholds for robustness to distributional shift and class-imbalance.

Cor. \ref{cor:per_class_robustness} only allows for $m$ alterations in the probabilities and there are only $m^2$ unique probabilities under the ideal condition, whereas in real-world conditions, an error-prone model could potentially output $n \times m$ unique probabilities. Next, in Thm. \ref{thm:robustness}, we examine a reasonable sufficient condition where CL is robust to erroneous probabilities for every example and class.

\begin{condition}[Per-Example Diffracted]
$\predprobshortj$ is \emph{per-example diffracted} if $ \forall j \smallin [m], \forall \bm{x} \smallin \bm{X}$, we have error as $\predprobshortj = \perfprobshortj + \errorxj$ where 
\begin{equation} \label{eqn:piecewise_error}
    \errorxj \sim \begin{cases} 
      \mathcal{U}(\epsilon_j \smallplus t_j \smallminus \perfprobshortj \, , \, \epsilon_j \smallminus t_j \smallplus \perfprobshortj] & \perfprobshortj \geq t_j \\
      \mathcal{U}[\epsilon_j \smallminus t_j \smallplus \perfprobshortj \, , \, \epsilon_j \smallplus t_j \smallminus \perfprobshortj) & \perfprobshortj < t_j
  \end{cases}
\end{equation}
where $\epsilon_j = \mathop{\mathbb{E}}_{\bm{x} \in \bm{X}} \big[\errorxj\big]$ and $\mathcal{U}$ denotes a uniform distribution (we discuss a more general case in the Appendix).
\end{condition}



\begin{theorem}[Per-Example Robustness] \label{thm:robustness} 
For a noisy dataset, $\bm{X} \coloneqq (\bm{x}, \tilde{y})^n \in {(\mathbb{R}^d, [m])}^n$ and model $\bm{\theta}{\scriptstyle:} \bm{x} \smallra \hat{p}(\tilde{y})$, 
if $\predprobshortj$ is \textbf{per-example diffracted} without label collisions and each diagonal entry of $\nm$ maximizes its row, then $\hat{\bm{X}}_{ \tilde{y} \smalleq i,y^* \smalleq j} \approxeq  \bm{X}_{\tilde{y} \smalleq i, y^* \smalleq j}$ and $\estjoint \approxeq \joint$.
\end{theorem}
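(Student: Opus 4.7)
The plan is to reduce the per-example case to the ideal case (Thm.~\ref{thm:exact_label_errors}) by showing that the per-example diffraction condition is precisely engineered so that the thresholded comparison $\predprobshortj \ge t_j$ appearing in the construction of $\cj$ (Eqn.~\ref{eqn_paper_confident_joint}) evaluates the same way on the noisy probabilities as it would on the ideal probabilities. Once this invariance is in hand, the partition $\estpartition$ computed from the noisy $\hat{p}$'s equals the one that would be computed from $p^*$, and then Cor.~\ref{cor:per_class_robustness} (or the argument underlying Thm.~\ref{thm:exact_label_errors}) finishes the job under the weakened assumption that only rows, not rows and columns, of $\nm$ are maximized on the diagonal.

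First I would compute the effective threshold under per-example diffraction. Because each $\errorxj$ is uniform on an interval symmetric about $\epsilon_j$, its mean is $\epsilon_j$; averaging $\predprobshortj = \perfprobshortj + \errorxj$ over $\bm{x} \in \bm{X}_{\tilde{y}=j}$ gives $t_j^{\text{noisy}} = t_j^{\text{ideal}} + \epsilon_j$, up to a finite-sample discretization gap that accounts for the $\approxeq$ in the statement. This identifies the shift that the per-example noise induces on the class threshold.

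Next I would verify the key monotonicity: $\predprobshortj \ge t_j^{\text{noisy}}$ iff $\perfprobshortj \ge t_j^{\text{ideal}}$. This is an immediate casework on the piecewise definition of $\errorxj$ in Eqn.~\ref{eqn:piecewise_error}. In the upper case $\perfprobshortj \ge t_j^{\text{ideal}}$, the support of $\errorxj$ is bounded below by $\epsilon_j + t_j^{\text{ideal}} - \perfprobshortj$, so
\begin{equation*}
\predprobshortj \;=\; \perfprobshortj + \errorxj \;\ge\; \epsilon_j + t_j^{\text{ideal}} \;=\; t_j^{\text{noisy}}.
\end{equation*}
The lower case is symmetric and gives the strict inequality in the other direction. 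Consequently, for every $\bm{x}$ and every $j$, membership in the set $\{\bm{x} \in \bm{X}_{\tilde{y}=i} : \predprobshortj \ge t_j\}$ is identical to what it would be under the ideal predictions, so the estimate $\estpartition$ built from the noisy probabilities agrees exactly with the one built from the ideal $p^*$ values.

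Finally I would invoke the analysis of Thm.~\ref{thm:exact_label_errors} and Cor.~\ref{cor:per_class_robustness}: with label collisions assumed absent, the $\argmax$ tie-breaking step in Eqn.~\ref{eqn_paper_confident_joint} is vacuous, so the equivalence above transfers directly to $\hat{\bm{X}}_{\tilde{y} \smalleq i, y^* \smalleq j} \approxeq \bm{X}_{\tilde{y} \smalleq i, y^* \smalleq j}$; applying the calibration of Eqn.~\ref{eqn_calibration} then yields $\estjoint \approxeq \joint$. The diagonal-maximizes-its-row assumption on $\nm$ is used exactly where it was in Cor.~\ref{cor:per_class_robustness}, to ensure that the correct pairing of $(\tilde{y}, y^*)$ wins the threshold contest within each row.

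The main obstacle I expect is bookkeeping around the $\approxeq$ sign. The threshold identity $t_j^{\text{noisy}} = t_j^{\text{ideal}} + \epsilon_j$ only holds in expectation: for a finite sample the empirical mean of $\errorxj$ over $\bm{X}_{\tilde{y}=j}$ is a noisy estimate of $\epsilon_j$, and a pathological configuration could push a borderline example $\bm{x}$ (one whose ideal $\perfprobshortj$ equals $t_j^{\text{ideal}}$) across the empirical threshold. This needs to be absorbed into the discretization slack already acknowledged in the paper's use of $\approxeq$, possibly by noting that only boundary examples are affected and that the count-based estimate $\estjoint$ is insensitive to shifts smaller than $1/n$ per cell. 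A secondary subtlety is that $\epsilon_j$ is defined as an expectation over all of $\bm{X}$ rather than over $\bm{X}_{\tilde{y}=j}$, but since the uniform error is symmetric about $\epsilon_j$ conditional on every $\bm{x}$, restricting the expectation to any sub-population yields the same mean, so no additional structural assumption is needed.
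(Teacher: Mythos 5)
Your proposal is correct and follows essentially the same route as the paper's proof: you derive the shifted threshold $t_j + \epsilon_j$ from the mean of the uniform error, establish the equivalence $\perfprobshortj + \errorxj \ge t_j + \epsilon_j \iff \perfprobshortj \ge t_j$ from the support bounds in Eqn.~\ref{eqn:piecewise_error}, and reduce to Thm.~\ref{thm:exact_label_errors}/Cor.~\ref{cor:per_class_robustness} with row-maximization sufficing in the collision-free case. Your closing observations (the finite-sample slack behind $\approxeq$, and that the error's symmetry about $\epsilon_j$ conditional on each $\bm{x}$ justifies restricting the expectation to $\bm{X}_{\tilde{y}=j}$) match the paper's own caveats, which it handles by invoking $n \rightarrow \infty$ for the threshold identity.
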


In Thm. \ref{thm:robustness}, we observe that if each example's predicted probability resides within the residual range of the ideal probability and the threshold, then CL exactly identifies the label errors and consistently estimates $\joint$. Intuitively, if $\predprobshortj \geq t_j$ whenever $\perfprobshortj \geq t_j$ and $\predprobshortj < t_j$ whenever $\perfprobshortj < t_j$, then regardless of error in $\predprobshortj$, CL exactly finds label errors. As an example, consider an image $\bm{x}_k$ that is mislabeled as \emph{fox}, but is actually a \emph{dog} where $t_{fox} =0.6$, $p^*(\tilde{y} \smalleq fox; \bm{x} \in \bm{X}_{y^* \smalleq dog}, \bm{\theta}) = 0.2 $, $t_{dog} =0.8$, and  $p^*(\tilde{y} \smalleq dog; \bm{x} \in \bm{X}_{y^* \smalleq dog}, \bm{\theta}) = 0.9 $. Then as long as $-0.4 \leq \epsilon_{\bm{x}, fox} < 0.4$ and $-0.1 < \epsilon_{\bm{x}, dog} \le 0.1$, CL will surmise $y_k^* = dog$, not $fox$, even though $\tilde{y}_k = fox$ is given. We empirically substantiate this theoretical result in Section \ref{sec:exp_imagenet}.


Thm. \ref{thm:robustness} addresses the \emph{epistemic} uncertainty of latent label noise, via the statistic, $\joint$, while accounting for the \emph{aleatoric} uncertainty of inherently erroneous predicted probabilities.





\section{Experiments}\label{sec:experiments}
This section empirically validates CL on CIFAR \citep{cifar10} and ImageNet \citep{ILSVRC15_imagenet} benchmarks. Sec. \ref{sec:exp_synth} presents CL performance on noisy examples in CIFAR where true labels are presumed known. Sec. \ref{sec:exp_imagenet} shows real-world label errors found in the original, unperturbed MNIST, ImageNet, WebVision, and Amazon Reviews datasets, and shows performance advantages using cleaned data provided by CL to train ImageNet. Unless otherwise specified, we compute out-of-sample predicted probabilities $\bm{\hat{P}}_{k,j}$ using four-fold cross-validation and ResNet architectures.

\subsection{Asymmetric Label Noise on CIFAR-10 dataset} \label{sec:exp_synth}

We evaluate CL on three criteria: (a) joint estimation  (Fig. \ref{fig:cifar10_joint_estimation}), (b) accuracy finding label errors  (Table \ref{table:cifar10_label_error_measures}), and (c) accuracy learning with noisy labels (Table \ref{table:cifar10_benchmark}).


\paragraph{Noise Generation} Following prior work by~\cite{Sukhbaatar_fergus_iclr_2015, DBLP:conf/iclr/GoldbergerB17_smodel}, we verify CL performance on the commonly used asymmetric label noise, where the labels of error-free/clean data are randomly flipped, for its resemblance to real-world noise. We generate noisy data from clean data by randomly switching some labels of training examples to different classes non-uniformly according to a randomly generated $\nm$ noise transition matrix. We generate $\nm$ matrices with different traces to run experiments for different noise levels. The noise matrices used in our experiments are in the Appendix in Fig. \ref{cifar10_ground_truth_noise_matrices}.
We generate noise in the CIFAR-10 training dataset across varying \emph{sparsities}, the fraction of off-diagonals in $\joint$ that are zero, and the percent of incorrect labels (noise).
We evaluate all models on the unaltered test set.


\paragraph{Baselines and our method} In Table \ref{table:cifar10_benchmark}, we compare CL performance versus seven recent highly competitive approaches and a vanilla baseline for multiclass learning with noisy labels on CIFAR-10, including \emph{INCV} \citep{chen2019confusion} which finds clean data with multiple iterations of cross-validation then trains on the clean set, \emph{SCE-loss} (symmetric cross entropy) \citep{wang2019sceloss_symmetric} which adds a reverse cross entropy term for loss-correction, \emph{Mixup} \citep{icml2018mixup} which linearly combines examples and labels to augment data, \emph{MentorNet} \citep{jiang2018mentornet} which uses curriculum learning to avoid noisy data in training, \emph{Co-Teaching} \citep{han2018coteaching} which trains two models in tandem to learn from clean data, \emph{S-Model} \citep{DBLP:conf/iclr/GoldbergerB17_smodel} which uses an extra softmax layer to model noise during training, and \emph{Reed} \citep{noisy_boostrapping_google_reed_iclr_2015} which uses loss-reweighting; and a \emph{Baseline} model that denotes a vanilla training with the noisy labels.

\paragraph{Training settings} All models are trained using ResNet-50 with the common setting: learning rate 0.1 for epoch [0,150), 0.01 for epoch [150,250), 0.001 for epoch [250,350); momentum 0.9; and weight decay 0.0001, except \emph{INCV}, \emph{SCE-loss}, and \emph{Co-Teaching} which are trained using their official GitHub code. Settings are copied from the \href{https://github.com/kuangliu/pytorch-cifar/tree/5e3f99093dfe7392fcbbc0b39582e4b0d3a64511#learning-rate-adjustment}{kuangliu/pytorch-cifar} GitHub open-source code and were not tuned by hand. We report the highest score across hyper-parameters $\alpha \in \{1, 2, 4, 8\}$ for \emph{Mixup} and $p \in \{0.7, 0.8, 0.9\}$ for \emph{MentorNet}. For fair comparison with \emph{Co-Teaching}, \emph{INCV}, and \emph{MentorNet}, we also train using the \emph{co-teaching} approach with $\text{forget rate} = 0.5 \times \text{[noise fraction]}$, and report the max accuracy of the two trained models for each method.
We observe that dropping the last partial batch of each epoch during training improves stability by avoiding weight updates from, in some cases, a single noisy example). Exactly the same noisy labels are used for training all models for each column of Table \ref{table:cifar10_benchmark}. For our method, we fix its hyper-parameter, \emph{i.e.} the number of folds in cross-validation across different noise levels, and do not tune it on the validation set.

For each CL method, sparsity, and noise setting, we report the mean accuracy in Table \ref{table:cifar10_benchmark}, averaged over ten trials, by varying the random seed and initial weights of the neural network for training. Standard deviations are reported in Table \ref{table:cifar10_benchmark_std} to improve readability. For each column in Table \ref{table:cifar10_benchmark}, the corresponding standard deviations in in Table \ref{table:cifar10_benchmark_std} are significantly less than the performance difference between CL methods and baseline methods. Notably, all standard deviations are significantly ($\sim$10x) less than the mean performance difference between the top-performing CL method and baseline methods for each setting, averaged over random weight initialization. Standard deviations are only reported for CL methods because of difficulty reproducing consistent results for some of the other methods.

\begin{table*}[t]

\setlength\tabcolsep{2pt} 
\caption{Test accuracy (\%) of confident learning versus recent methods for learning with noisy labels in CIFAR-10. Scores reported for CL methods are averaged over ten trials with standard deviations shown in Table \ref{table:cifar10_benchmark_std}. CL methods estimate label errors, remove them, then train on the cleaned data. Whereas other methods decrease in performance from low sparsity (e.g., 0.0) to high sparsity (e.g. 0.6), CL methods are robust across sparsity, as indicated by comparing the two column-wise red highlighted cells. Data-centric AI methods (\textit{CL, INCV, Mixup}) outperform model-centric methods (\textit{SCE-Loss, MentorNet, Co-Teaching, S-Model}).}
\vskip -0.1in
\label{table:cifar10_benchmark}
\begin{center}\
\resizebox{\textwidth}{!}{ 

\begin{tabular}{l|cccc|cccc|cccc}
\toprule
Noise &  \multicolumn{4}{c}{20\%} & \multicolumn{4}{c}{40\%} & \multicolumn{4}{c}{70\%} \\
Sparsity  &    0 &      0.2 &      0.4 &      0.6 &      0 &      0.2 &      0.4 &      0.6 &      0 &      0.2 &      0.4 &      0.6 \\
\midrule

CL: $\bm{C}_{\text{confusion}}$    &  89.6 &  89.4 &  90.2 &  89.9 &  \cellcolor{highlightcolor} 83.9 &  83.9 &  83.2 &  \cellcolor{highlightcolor} 84.2 &  31.5 &  39.3 &  33.7 &  30.6 \\
CL: PBC                            &  90.5 &  90.1 &  90.6 &  90.7 &  \cellcolor{highlightcolor} 84.8 &  85.5 &  85.3 &  \cellcolor{highlightcolor} 86.2 &  33.7 &  40.7 &  35.1 &  31.4 \\
CL: $\cj$                          &  \textbf{91.1} &  \textbf{90.9} &  \textbf{91.1} &  \textbf{91.3} & \cellcolor{highlightcolor}  86.7 &   86.7 &  86.6 & \cellcolor{highlightcolor} 86.9 &  32.4 &  \textbf{41.8} &  34.4 &  34.5 \\
CL: C+NR                           &  90.8 &  90.7 &  91.0 &  91.1 &  \cellcolor{highlightcolor} \textbf{87.1} &  \textbf{86.9} &  \textbf{86.7} & \cellcolor{highlightcolor} \textbf{87.2} &  \textbf{41.1} &  41.7 &  39.0 &  32.9 \\
CL: PBNR                           &  90.7 &  90.5 &  90.9 &  90.9 &  \cellcolor{highlightcolor} \textbf{87.1} &  86.8 &  86.6 &  \cellcolor{highlightcolor} \textbf{87.2} &  41.0 &  \textbf{41.8} &  \textbf{39.1} &  \textbf{36.4} \\

\midrule

INCV (Chen et al., 2019)  & 87.8 & 88.6  & 89.6  & 89.2  & \cellcolor{highlightcolor} 84.4 &  76.6 &  85.4 &  \cellcolor{highlightcolor} 73.6  & 28.3  & 25.3 &  34.8  & 29.7 \\
Mixup (Zhang et al., 2018)  & 85.6 & 86.8 & 87.0 & 84.3 & \cellcolor{highlightcolor} 76.1 & 75.4 &  68.6 & \cellcolor{highlightcolor} 59.8 & 32.2 & 31.3 & 32.3 & 26.9 \\
SCE-loss (Wang et al., 2019)  &  87.2  &  87.5  &  88.8  &  84.4  &  \cellcolor{highlightcolor} 76.3  &  74.1  &  64.9  & \cellcolor{highlightcolor} 58.3  &  33.0  &  28.7  &  30.9  &  24.0  \\
MentorNet (Jiang et al., 2018) &  84.9 &  85.1 &  83.2 &  83.4 &  \cellcolor{highlightcolor} 64.4 &  64.2 &  62.4 &  \cellcolor{highlightcolor} 61.5 &  30.0 &  31.6 &  29.3 &  27.9 \\
Co-Teaching (Han et al., 2018)  &  81.2 &  81.3 &  81.4 &  80.6 & \cellcolor{highlightcolor} 62.9 &  61.6 &  60.9 & \cellcolor{highlightcolor} 58.1 &  30.5 &  30.2 &  27.7 &  26.0 \\
S-Model (Goldberger et al., 2017)    &  80.0 &  80.0 &  79.7 &  79.1 &  58.6 &  61.2 &  59.1 &  57.5 &  28.4 &  28.5 &  27.9 &  27.3 \\
Reed (Reed et al., 2015)      &  78.1 &  78.9 &  80.8 &  79.3 &  60.5 &  60.4 &  61.2 &  58.6 &  29.0 &  29.4 &  29.1 &  26.8 \\
Baseline   &  78.4 &  79.2 &  79.0 &  78.2 &  60.2 &  60.8 &  59.6 &  57.3 &  27.0 &  29.7 &  28.2 &  26.8 \\

\bottomrule
\end{tabular}
}

\end{center}
\vskip -.2in
\end{table*}

\begin{table*}[!b]
\setlength\tabcolsep{2pt} 
\renewcommand{\arraystretch}{0.9}
\caption{Standard deviations (\% units) associated with the mean score (over ten trials) for scores reported for CL methods in Table \ref{table:cifar10_benchmark}. Each trial uses a different random seed and network weight initialization. No standard deviation exceeds 2\%.}
\vskip -0.1in
\label{table:cifar10_benchmark_std}
\begin{center}
\resizebox{\textwidth}{!}{ 

\setlength{\tabcolsep}{6.5pt}
\begin{tabular}{l|cccc|cccc|cccc}
\toprule
Noise &  \multicolumn{4}{c}{20\%} & \multicolumn{4}{c}{40\%} & \multicolumn{4}{c}{70\%} \\
Sparsity  &    0 &      0.2 &      0.4 &      0.6 &      0 &      0.2 &      0.4 &      0.6 &      0 &      0.2 &      0.4 &      0.6 \\
\midrule
CL: $\bm{C}_{\text{confusion}}$  &  0.07 &  0.10 &  0.17 &  0.08 &  0.19 &  0.22 &  0.23 &  0.20 &  0.93 &  0.24 &  0.13 &  0.26 \\
CL: PBC     &  0.14 &  0.12 &  0.11 &  0.10 &  0.15 &  0.17 &  0.16 &  0.10 &  0.12 &  0.22 &  0.11 &  0.30 \\
CL: $\cj$ &  0.17 &  0.09 &  0.17 &  0.11 &  0.10 &  0.20 &  0.09 &  0.13 &  1.02 &  0.15 &  0.18 &  1.63 \\
CL: C+NR    &  0.09 &  0.10 &  0.08 &  0.08 &  0.11 &  0.14 &  0.16 &  0.10 &  0.42 &  0.33 &  0.26 &  1.90 \\
CL: PBNR    &  0.15 &  0.09 &  0.09 &  0.10 &  0.18 &  0.10 &  0.15 &  0.12 &  0.26 &  0.28 &  0.24 &  1.43 \\
\bottomrule
\end{tabular}
}

\end{center}
\end{table*}


\begin{figure*}[!b]
    \centering
    \begin{subfigure}[b]{0.345 \textwidth}
        \centering
        \includegraphics[width=1\textwidth]{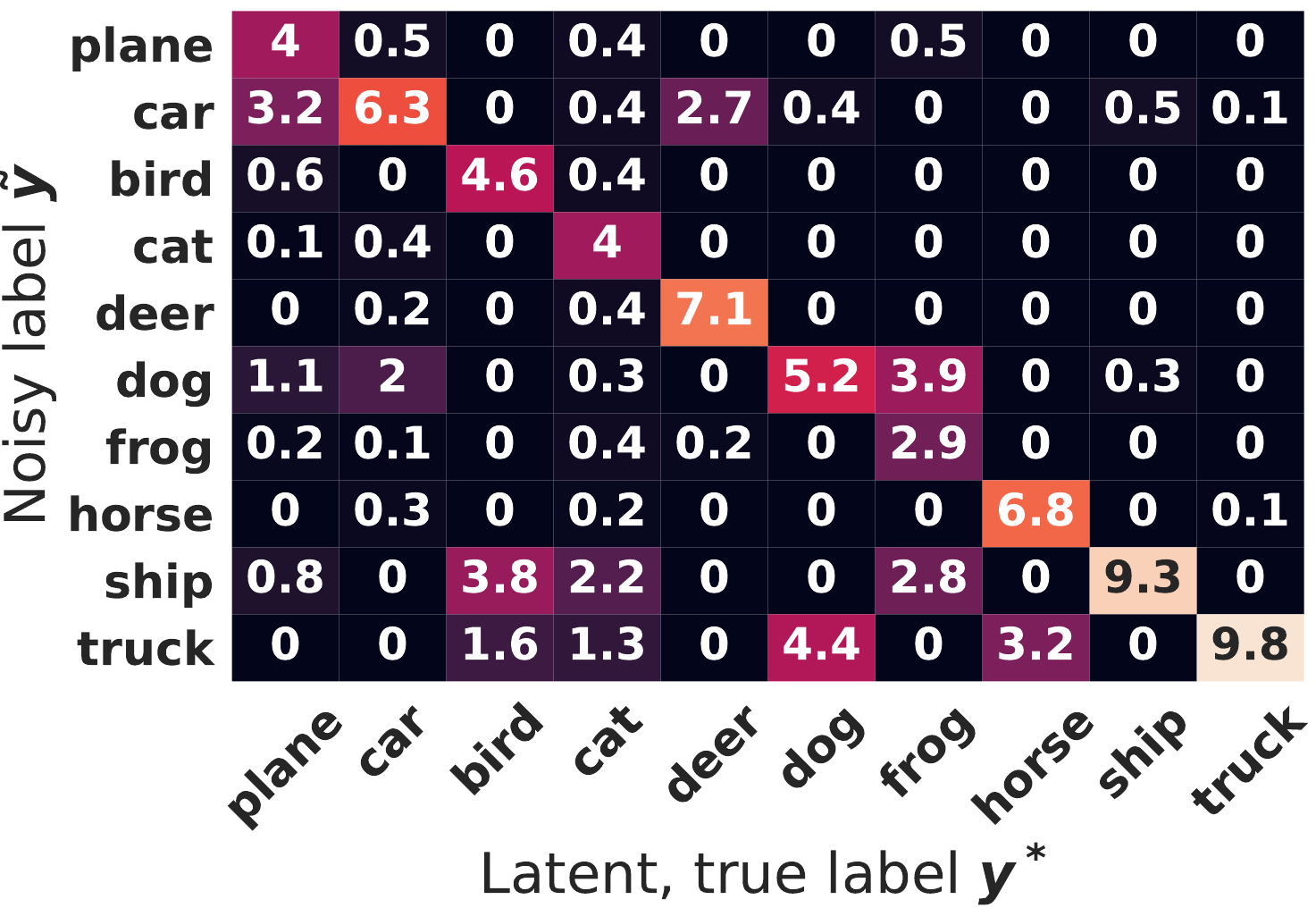}
        \caption[Network2]%
        {True $\joint$ (unknown to CL)}    
        \label{subfig:true_joint}
    \end{subfigure}
    \begin{subfigure}[b]{0.285\textwidth}  
        \centering 
        \includegraphics[width=1\textwidth]{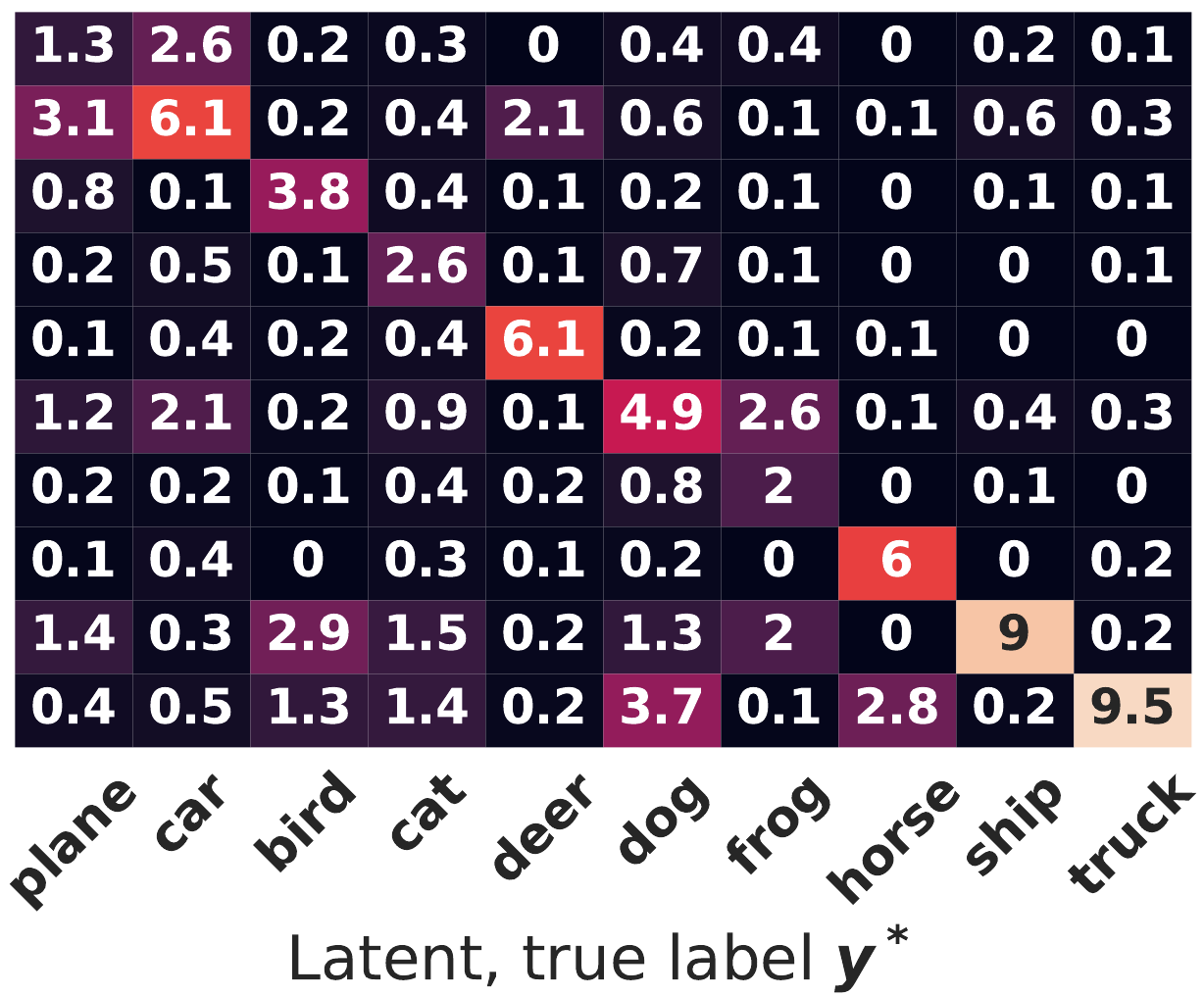}
        \caption[]%
        {CL estimated $\estjoint$}   
        \label{subfig:est_joint}
    \end{subfigure}
    \begin{subfigure}[b]{0.349\textwidth}   
        \centering 
        \includegraphics[width=1\textwidth]{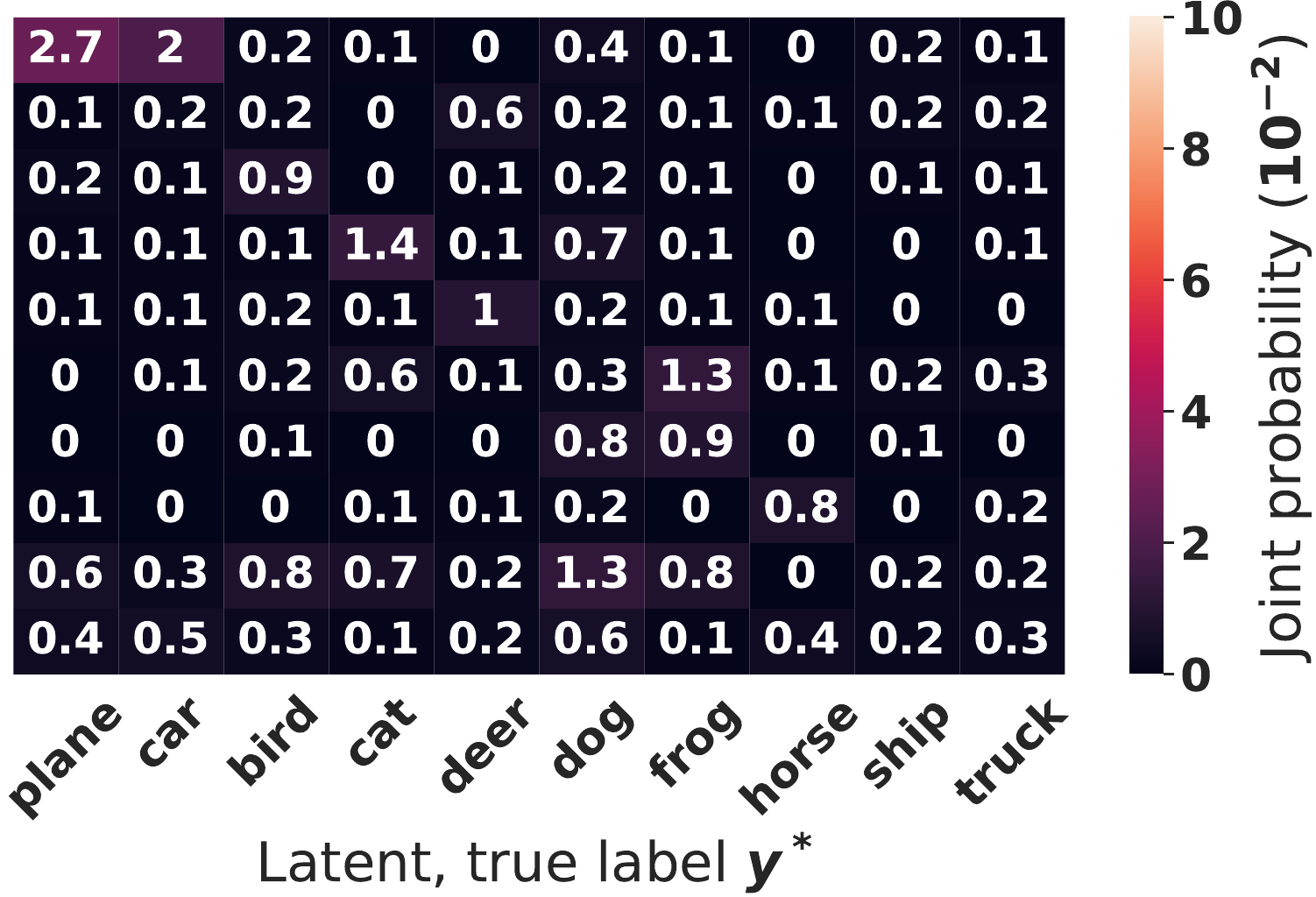}
        \caption[]%
        {Absolute diff. $ \lvert \bm{Q}_{\tilde{y}, y^*} - \hat{\bm{Q}}_{\tilde{y}, y^*} \rvert$}
        \label{subfig:abs_diff}
    \end{subfigure}
    \caption{Our estimation of the joint distribution of noisy labels and true labels for CIFAR with 40\% label noise and 60\% sparsity. Observe the similarity (RSME $=.004$) between (a) and (b) and the low absolute error in every entry in (c). Probabilities are scaled up by 100.} 
    \label{fig:cifar10_joint_estimation}
\end{figure*}

We also evaluate CL's accuracy in finding label errors. In Table \ref{table:cifar10_label_error_measures}, we compare five variants of CL methods across noise and sparsity and report their precision, recall, and F1 in recovering the true label. The results show that CL is able to find the label errors with high recall and reasonable F1.

\begin{table*}[t]

\setlength\tabcolsep{2pt} 
\renewcommand{\arraystretch}{1}
\caption{Mean accuracy, F1, precision, and recall measures of CL methods for finding label errors in CIFAR-10, averaged  over ten trials.}
\vskip -0.1in
\label{table:cifar10_label_error_measures}
\begin{center}
\resizebox{\textwidth}{!}{ 

\setlength{\tabcolsep}{3pt}
\begin{tabular}{l|cccc|cccc|cccc|cccc}

\toprule
Measure &\multicolumn{4}{c}{Accuracy (\%) $\pm$ Std. Dev. (\%)} & \multicolumn{4}{c}{F1 (\%)} & \multicolumn{4}{c}{Precision (\%)}  & \multicolumn{4}{c}{Recall (\%)} \\
Noise & \multicolumn{2}{c}{20\%} & \multicolumn{2}{c}{40\%} & \multicolumn{2}{c}{20\%} & \multicolumn{2}{c}{40\%} & \multicolumn{2}{c}{20\%} & \multicolumn{2}{c}{40\%}  & \multicolumn{2}{c}{20\%} & \multicolumn{2}{c}{40\%} \\
Sparsity &   0.0 &   0.6 &   0.0 &   0.6 &   0.0 &   0.6  &   0.0 &   0.6 &   0.0 &   0.6 &   0.0 &   0.6  &   0.0 &   0.6 &   0.0 &   0.6 \\
\midrule
CL: $\bm{C}_{\text{confusion}}$ &  $84{\pm}0.07$  &  $85{\pm}0.09$ &  $85{\pm}0.24$ &  $81{\pm}0.21$ &  71 &  72 &  84 &  79 & 56 &  58 &  74 &  70 & \textbf{98} &  \textbf{97} &  \textbf{97} &  \textbf{90} \\
CL: $\cj$ &  $89{\pm}0.15$ &  \textbf{90}${\pm}0.10$ &  $86{\pm}0.15$ &  \textbf{84}${\pm}0.12$ &  75 &  78 &  84 &  \textbf{80} &  \textbf{67} &  \textbf{70} &  78 &  77 & 86 &  88 &  91 &  84 \\
CL: PBC           &  $88{\pm}0.22$ &  $88{\pm}0.11$ &  $86{\pm}0.17$ &  $82{\pm}0.13$ &  76 &  76 &  84 &  79 &  64 &  65 &  76 &  74 &  96 &  93 &  94 &  85\\
CL: PBNR          &  $89{\pm}0.11$ &  \textbf{90}${\pm}0.08$ &  \textbf{88}${\pm}0.12$ &  \textbf{84}${\pm}0.11$ &  77 &  \textbf{79} &  \textbf{85} &  \textbf{80} &  65 &  68 &  \textbf{82} &  \textbf{79} &  93 &  94 &  88 &  82 \\
CL: C+NR          &  \textbf{90}${\pm}0.21$ &  \textbf{90}${\pm}0.10$ &  $87{\pm}0.23$ &  $83{\pm}0.14$ &  \textbf{78} &  78 &  84 &  78 &  \textbf{67} &  69 &  \textbf{82} &  \textbf{79} &  93 &  90 &  87 &  78 \\
\bottomrule
\end{tabular}
}

\end{center}
\end{table*}

\paragraph{Robustness to Sparsity} Table \ref{table:cifar10_benchmark} reports CIFAR test accuracy for learning with noisy labels across noise amount and sparsity, where the first five rows report our CL approaches. As shown, CL consistently performs well compared to prior art across all noise and sparsity settings.
We observe significant improvement in high-noise and/or high-sparsity regimes.
The simplest CL method $CL: \bm{C}_{\text{confusion}}$ performs similarly to \emph{INCV} and comparably to prior art with best performance by \textbf{$\cj$} across all noise and sparsity settings. The results validate the benefit of directly modeling the joint noise distribution and show that our method is competitive compared to highly competitive, robust learning methods.


To understand why CL performs well, we evaluate CL joint estimation across noise and sparsity with RMSE in Table \ref{table:cifar10_rmse} in the Appendix and estimated $\estjoint$ in Fig. \ref{cifar10_abs_diff_ALL} in the Appendix. For the 20\% and 40\% noise settings, on average, CL achieves an RMSE of $.004$ relative to the true joint $\joint$ across all sparsities. The simplest CL variant, $\bm{C}_{\text{confusion}}$ normalized via Eqn. (\ref{eqn_calibration}) to obtain $\hat{\bm{Q}}_{\text{confusion}}$, achieves a slightly worse RMSE of $.006$. 

In Fig. \ref{fig:cifar10_joint_estimation}, we visualize the quality of CL joint estimation in a challenging high-noise (40\%), high-sparsity (60\%) regime on CIFAR. Subfigure (a) demonstrates high sparsity in the latent true joint $\joint$, with over half the noise in just six noise rates. Yet, as can be seen in subfigures (b) and (c), CL still estimates over 80\% of the entries of $\joint$ within an absolute difference of $.005$. The results empirically substantiate the theoretical bounds of Section~\ref{sec:theory}.

In Table \ref{table:incv_stuff} (see Appendix), we report the training time required to achieve the accuracies reported in Table \ref{table:cifar10_benchmark} for INCV and confident learning. As shown in Table \ref{table:incv_stuff}, INCV training time exceeded 20 hours. In comparison, CL takes less than three hours on the same machine: an hour for cross-validation, less than a minute to find errors, and an hour to re-train.


\subsection{Real-world Label Errors in ILSVRC12 ImageNet Train Dataset}\label{sec:exp_imagenet}

\citet{ILSVRC15_imagenet} suggest label errors exist in ImageNet due to human error, but to our knowledge, few attempts have been made to find label errors in the ILSVRC 2012 training set, characterize them, or re-train without them. Here, we consider each application. We use ResNet18 and ResNet50 architectures with standard settings: 0.1 initial learning rate, 90 training epochs with 0.9 momentum.

\begin{table}[ht]
\renewcommand{\arraystretch}{0.8}
\caption{Ten largest non-diagonal entries in the confident joint $\cj$ for ImageNet train set used for ontological issue discovery. A duplicated class detected by CL is highlighted in red.}
\label{table_imagenet_characterization_top10}
\begin{center}

\begin{tabular}{rrrrrrr} 
\toprule
  $\cj$   &  $\tilde{y}$ name   &   $y^*$ name   &   $\tilde{y}$ nid   &   $y^*$ nid   &  $\bm{C}_{\text{confusion}}$  &   $\estjoint$  \\
\midrule
    645 &       projectile &       missile &       n04008634 &  n03773504 &   494  &            0.00050 \\
    539 &              tub &       bathtub &       n04493381 &  n02808440 &   400  &            0.00042 \\
    476 &      breastplate &       cuirass &       n02895154 &  n03146219 &   398  &            0.00037 \\
    437 &     green\_lizard &     chameleon &      n01693334 &  n01682714 &   369  &            0.00034 \\
    435 &        chameleon &  green\_lizard &      n01682714 &  n01693334 &   362  &            0.00034 \\
    433 &          missile &    projectile &       n03773504 &  n04008634 &   362  &            0.00034 \\
    \cellcolor{highlightcolor} 417 &         \cellcolor{highlightcolor} maillot &      \cellcolor{highlightcolor} maillot &      \cellcolor{highlightcolor} n03710637 &  \cellcolor{highlightcolor} n03710721 &  \cellcolor{highlightcolor} 338  &           \cellcolor{highlightcolor}  0.00033 \\
    416 &     horned\_viper &    sidewinder &      n01753488 &  n01756291 &   336  &            0.00033 \\
    410 &             corn &           ear &       n12144580 &  n13133613 &   333  &            0.00032 \\
    407 &        keyboard &     space\_bar &      n04505470 &  n04264628 &   293  &            0.00032 \\
\bottomrule
\end{tabular}

\end{center}
\end{table}




\paragraph{Ontological discovery for dataset curation} Because ImageNet is an one-hot class dataset, the classes are required to be mutually exclusive. Using ImageNet as a case study, we observe auto-discovery of ontological issues at the class level in Table \ref{table_imagenet_characterization_top10}, operationalized by listing the 10 largest non-diagonal entries in $\cj$. For example, the class \emph{maillot} appears twice, the existence of \emph{is-a} relationships like \emph{bathtub is a tub}, misnomers like \emph{projectile} and \emph{missile}, and unanticipated issues caused by words with multiple definitions like \emph{corn} and \emph{ear}. We include the baseline $\bm{C}_{\text{confusion}}$ to show that while $\bm{C}_{\text{confusion}}$ finds fewer label errors than $\cj$, they rank ontological issues similarly.

\begin{figure*}[ht]
\centerline{\includegraphics[width=\textwidth]{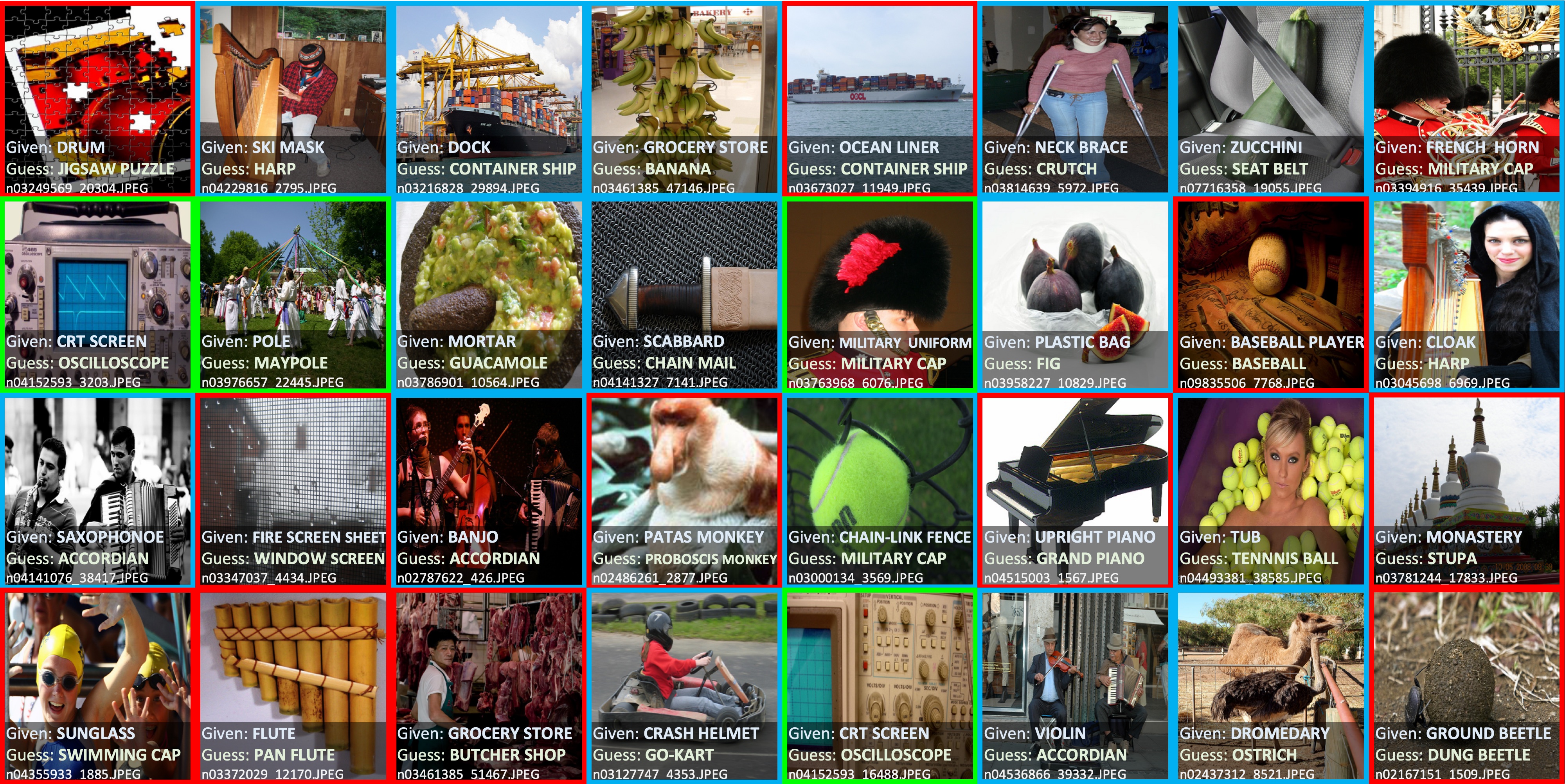}}
\caption{Top 32 (ordered automatically by normalized margin) identified label issues in the 2012 ILSVRC ImageNet train set using CL: PBNR. Errors are boxed in red. Ontological issues are boxed in green. Multi-label images are boxed in blue.} 
\label{label_errors_imagenet_train_top32}
\end{figure*}


\begin{figure*}[hbt!]
    \centering
    \begin{subfigure}[b]{0.475\textwidth}
        \centering
        \includegraphics[width=\textwidth]{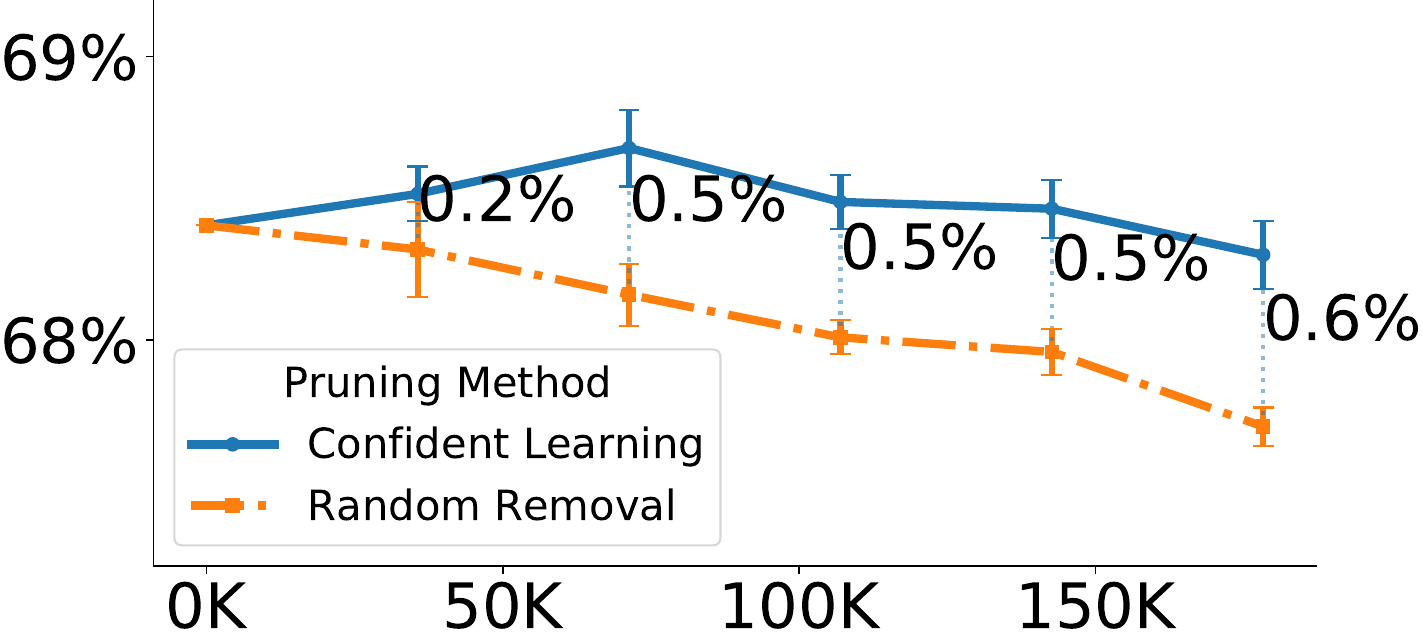}
        \caption[Network2]%
        {Accuracy on the ILSVRC2012 validation set}    
        \label{fig_resnet18_val_acc}
    \end{subfigure}
    \hfill
    \begin{subfigure}[b]{0.475\textwidth}  
        \centering 
        \includegraphics[width=\textwidth]{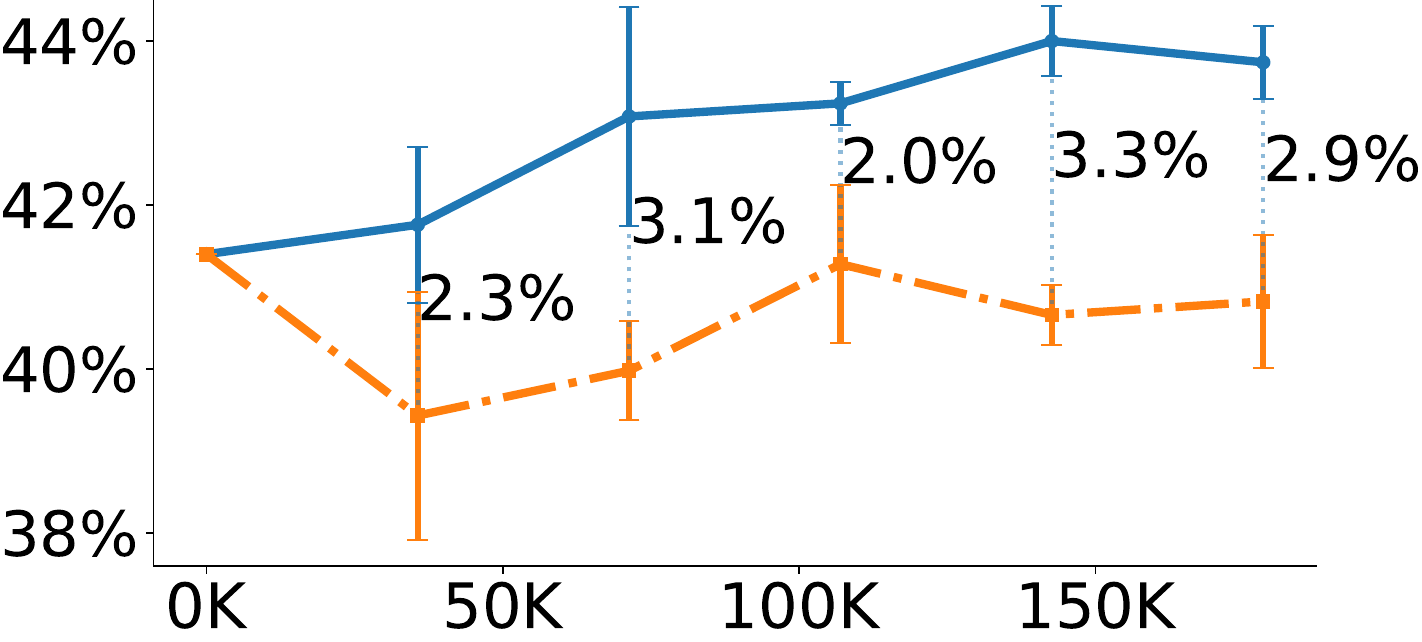}
        \caption[]%
        {Accuracy on the top 20 noisiest classes}   
        \label{fig_resnet18_top20_noise}
    \end{subfigure}
    \vskip\baselineskip
    \begin{subfigure}[b]{0.475\textwidth}   
        \centering 
        \includegraphics[width=\textwidth]{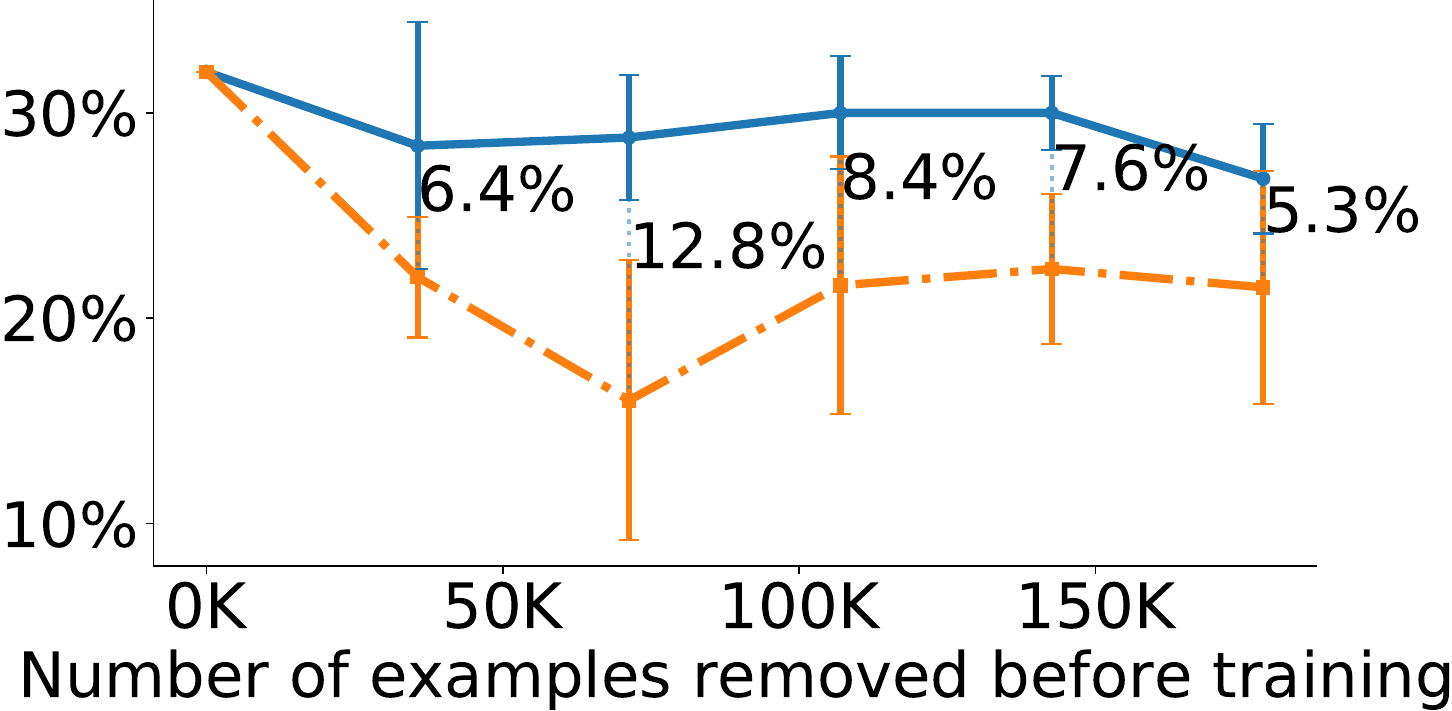}
        \caption[]%
        {Accuracy on the noisiest class: foxhound}
        \label{fig_resnet18_foxhound}
    \end{subfigure}
    \quad
    \begin{subfigure}[b]{0.475\textwidth}   
        \centering 
        \includegraphics[width=\textwidth]{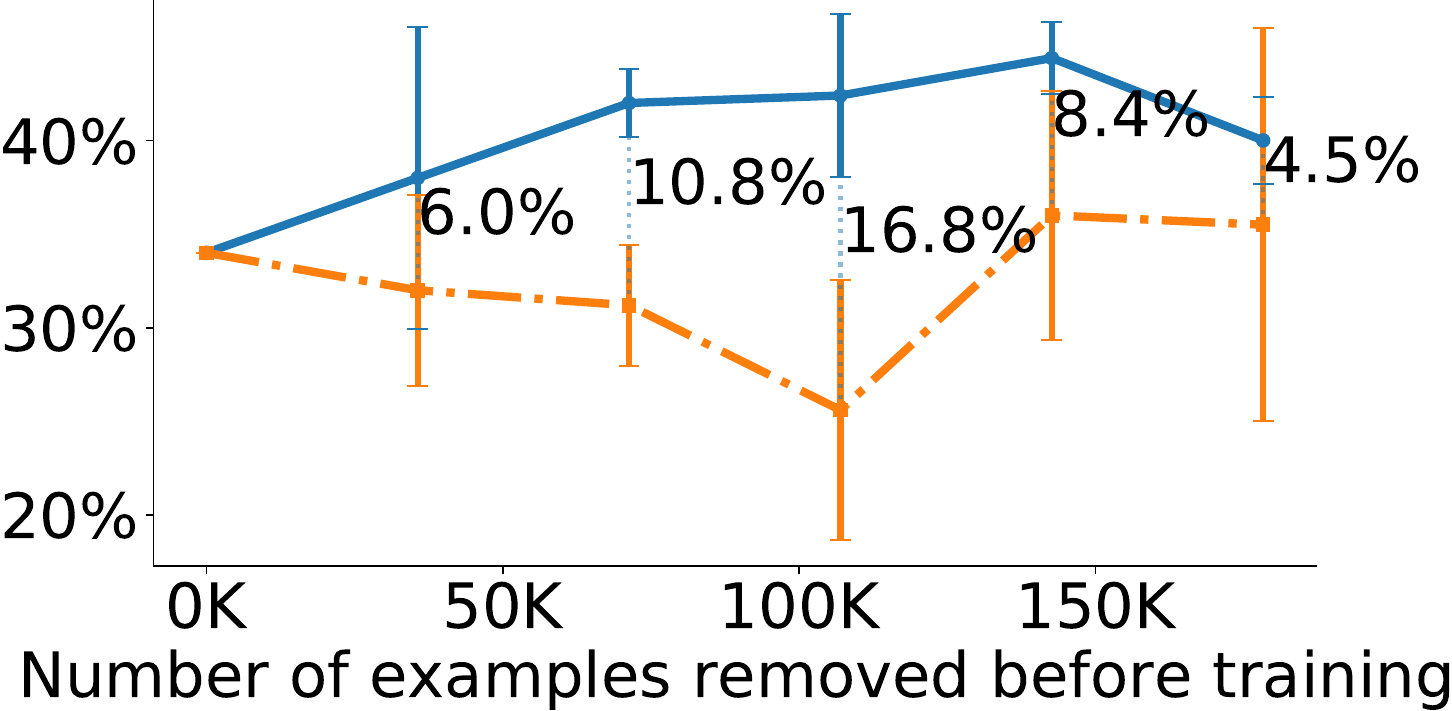}
        \caption[]%
        {Accuracy on known erroneous class: maillot} 
        \label{fig_resnet18_maillot}
    \end{subfigure}
    \caption{ResNet-18 Validation Accuracy on ImageNet (ILSVRC2012) when 20\%, 40\%, ..., 100\% of the label issues found using confident learning are removed prior to training (blue, solid line) compared with random examples removed prior to training (orange, dash-dotted line). Each subplot is read from left-to-right as incrementally more CL-identified issues are removed prior to training (shown by the x-axis). The translucent black dotted verticals bars measure the improvement when removing examples with CL vs random examples. Each point in all subfigures represents an independent training of ResNet-18 from scratch. Each point on the graph depicts the average accuracy of 5 trials (varying random seeding and weight initialization). The capped, colored vertical bars depict the standard deviation. } 
    \label{fig_imagenet_resnet18_training}
\end{figure*}

\paragraph{Finding label issues}
Fig. \ref{label_errors_imagenet_train_top32} depicts the top 16 label issues found using CL: PBNR with ResNet50 ordered by the normalized margin. We use the term \emph{issue} versus \emph{error} because examples found by CL consist of a mixture of multi-label images, ontological issues, and actual label errors. Examples of each are indicated by colored borders in the figure.
To evaluate CL in the absence of true labels, we conducted a small-scale human validation on a random sample of 500 errors (as identified using CL: PBNR) and found 58\% were either multi-label, ontological issues, or errors. ImageNet data are often presumed error-free, yet ours is the first attempt to identify label errors automatically in ImageNet training images.


\begin{figure*}[hbt!]
    \centering
    \begin{subfigure}[b]{0.475\textwidth}
        \centering
        \includegraphics[width=\linewidth]{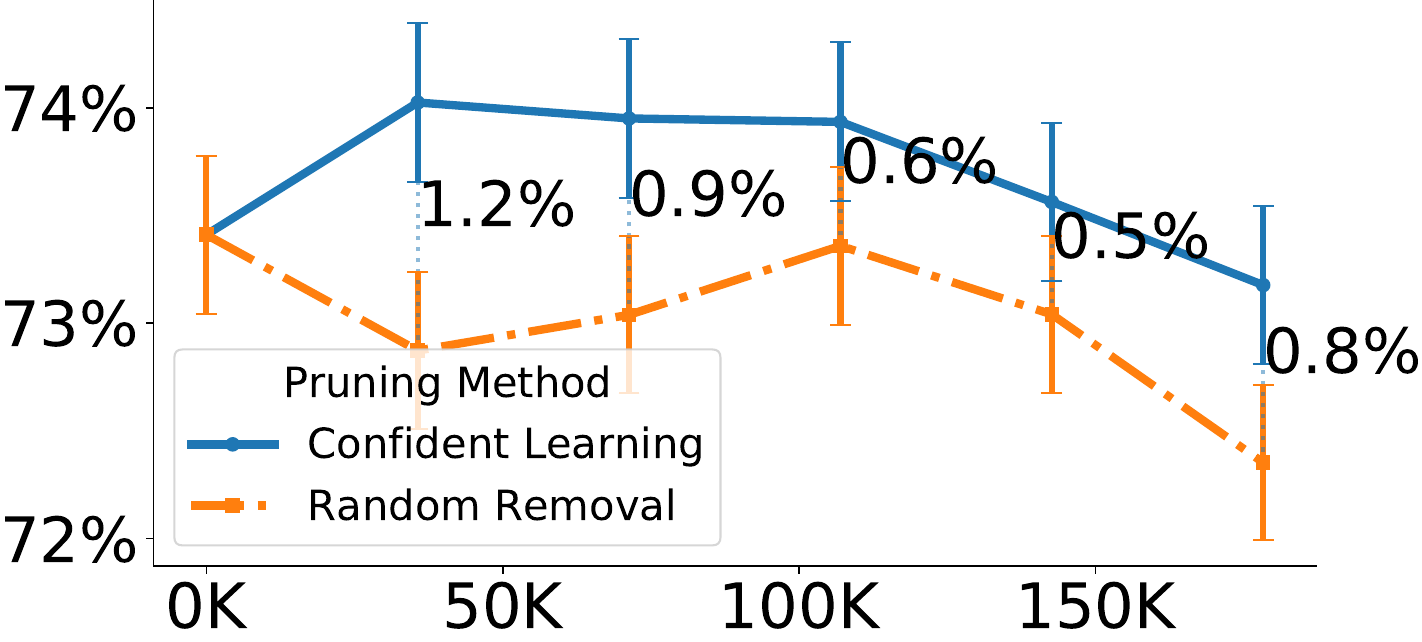}
        \caption[Network2]%
        {Accuracy on the ILSVRC2012 validation set}    
        \label{fig_resnet50_val_acc}
    \end{subfigure}
    \hfill
    \begin{subfigure}[b]{0.475\textwidth}  
        \centering 
        \includegraphics[width=\linewidth]{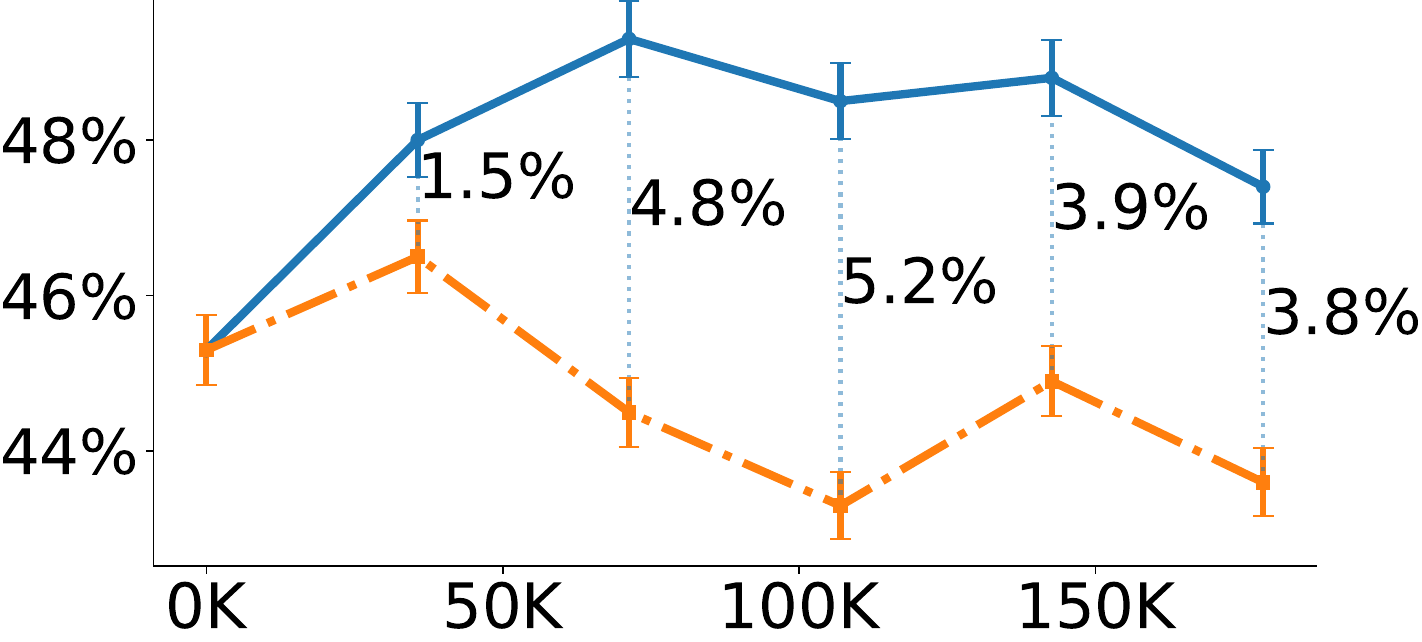}
        \caption[]%
        {Accuracy on the top 20 noisiest classes}   
        \label{fig_resnet50_top20_noise}
    \end{subfigure}
    \vskip\baselineskip
    \begin{subfigure}[b]{0.475\textwidth}   
        \centering 
        \includegraphics[width=\linewidth]{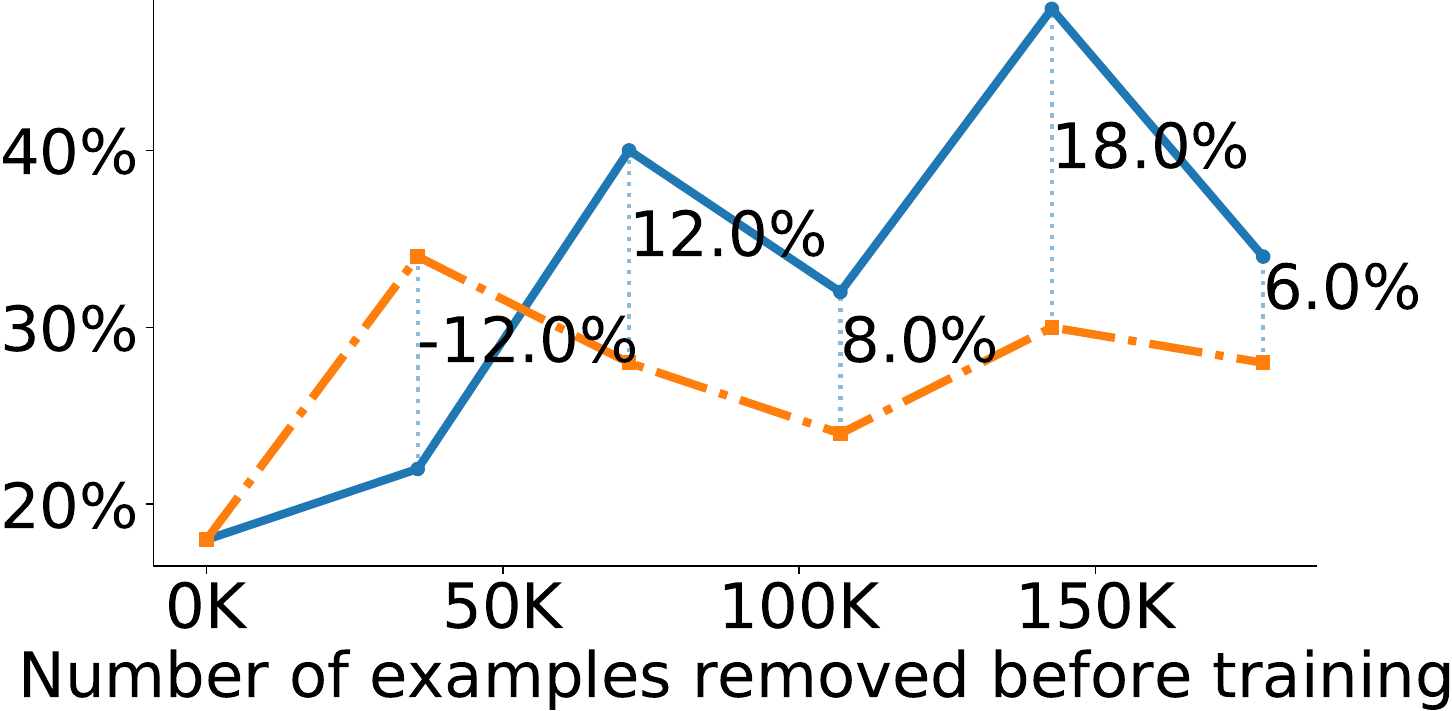}
        \caption[]%
        {Accuracy on the noisiest class: foxhound}
        \label{fig_resnet50_foxhound}
    \end{subfigure}
    \quad
    \begin{subfigure}[b]{0.475\textwidth}   
        \centering 
        \includegraphics[width=\linewidth]{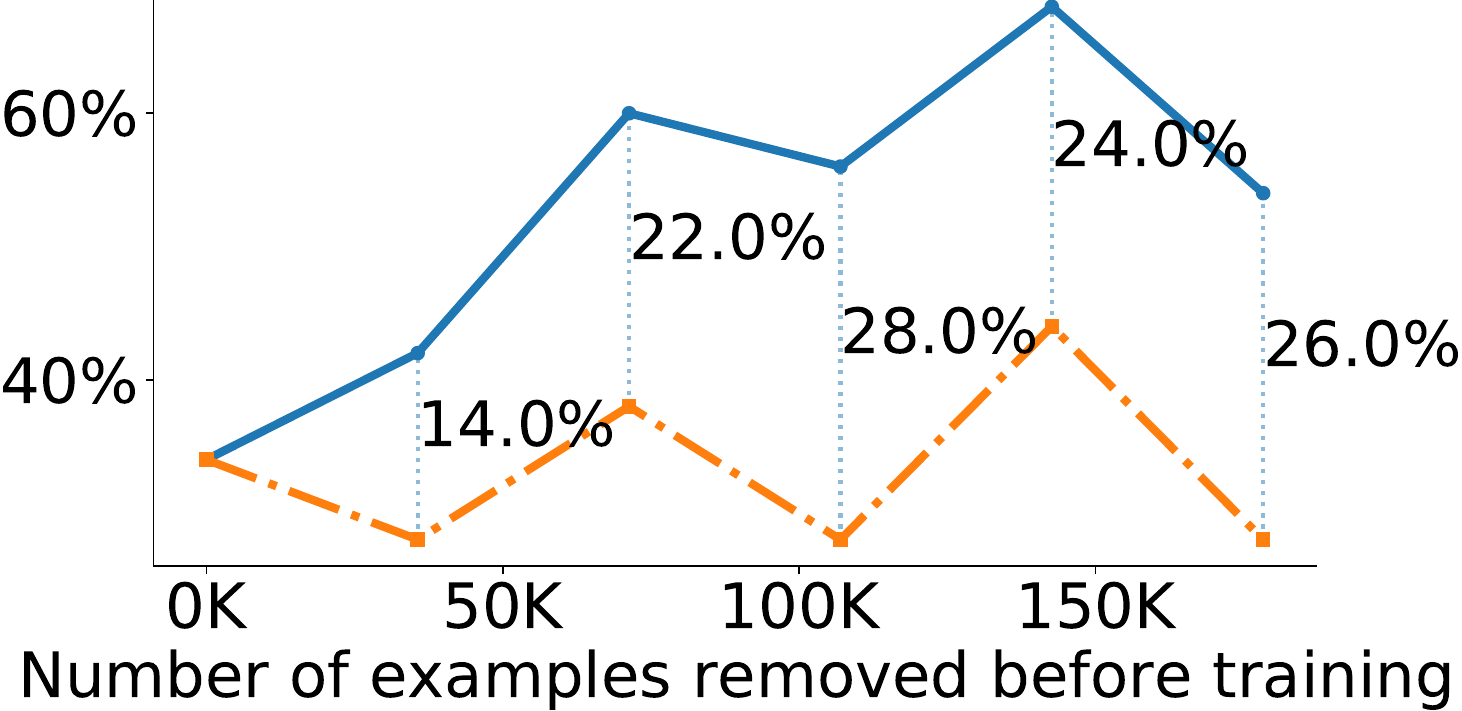}
        \caption[]%
        {Accuracy on known erroneous class: maillot} 
        \label{fig_resnet50_maillot}
    \end{subfigure}
    \caption{Replication of the experiments in Fig. \ref{fig_imagenet_resnet18_training} with ResNet-50. Each point in each subfigure depicts the accuracy of a single trial (due to computational limitations). Error bars, shown by the colored vertical lines, are estimated via Clopper-Pearson intervals for subfigures (a) and (b). For additional information, see the caption of Fig. \ref{fig_imagenet_resnet18_training}.} 
    \label{fig_imagenet_training}
\end{figure*}

\paragraph{Training ResNet on ImageNet with label issues removed}
By providing cleaned data for training, we explore how CL can be used to achieve similar or better validation accuracy on ImageNet when trained with less data.
To understand the performance differences, we train ResNet-18 (Fig. \ref{fig_imagenet_resnet18_training}) on progressively less data, removing 20\%, 40\%,..., 100\% of ImageNet train set label issues identified by CL and training from scratch each time. Fig. \ref{fig_imagenet_resnet18_training} depicts the top-1 validation accuracy when training with cleaned data from CL versus removing uniformly random examples, on each of (a) the entire ILSVRC validation set, (b) the 20 (noisiest) classes with the smallest diagonal in $\cj$, (c) the foxhound class, which has the smallest diagonal in $\cj$, and (d) the maillot class, a known erroneous class, duplicated accidentally in ImageNet, as previously published \citep{hoffman2015detector}, and verified (c.f. line 7 in Table \ref{table_imagenet_characterization_top10}). For readability, we plot the best performing CL method at each point and provide the individual performance of each CL method in the Appendix (see Fig. \ref{fig:imagenet_learning_with_noisy_labels}). For the case of a single class, as shown in Fig. \ref{fig_imagenet_resnet18_training}(c) and \ref{fig_imagenet_resnet18_training}(d), we show the recall using the model's top-1 prediction, hence the comparatively larger variance in classification accuracy reported compared to (a) and (b). We observed that CL outperforms the random removal baseline in nearly all experiments, and improves on the no-data-removal baseline accuracy, depicted by the left-most point in the subfigures, on average over the five trials for the 1,000 and 20 class settings, as shown in  Fig. \ref{fig_imagenet_resnet18_training}(a) and \ref{fig_imagenet_resnet18_training}(b). To verify the result is not model-specific, we repeat each experiment for a single trial with ResNet-50 (Fig. \ref{fig_imagenet_training}) and find that CL similarly outperforms the random removal baseline. 


These results suggest that CL can reduce the size of a real-world noisy training dataset by 10\% while still moderately improving the validation accuracy (Figures \ref{fig_resnet18_val_acc}, \ref{fig_resnet18_top20_noise}, \ref{fig_resnet50_val_acc}, \ref{fig_resnet50_top20_noise}) and significantly improving the validation accuracy on the erroneous maillot class (Figures \ref{fig_resnet18_maillot}, \ref{fig_resnet50_maillot}). While we find CL methods may improve the standard ImageNet training on clean training data by filtering out a subset of training examples, the significance of this result lies not in the magnitude of improvement, but as a warrant of exploration in the use of cleaning methods when training with ImageNet, which is typically assumed to have correct labels. Whereas many of the label issues in ImageNet are due to multi-labeled examples \citep{yun2021relabelimagenet}, next we consider a dataset with disjoint classes.

\subsection{Amazon Reviews Dataset: CL using logistic regression on noisy text data}

The Amazon Reviews dataset is a corpus of textual reviews labeled with 1-star to 5-star ratings from Amazon customers used to benchmark sentiment analysis models~\citep{he2016ups_amazonreviews}. We study the 5-core (9.9 GB) variant of the dataset -- the subset of data in which all users and items have at least 5 reviews. 2-star and 4-star reviews are removed due to ambiguity with 1-star and 5-star reviews, respectively. Left in the dataset, 2-star and 4-star reviews could inflate error counts, making CL appear to be more effective than it is.

This subsection serves three goals. First, we use a logistic regression classifier, as opposed to a deep-learning model, for our experiments in this section to evaluate CL for non-deep-learning methods. Second, we seek to understand how CL may improve learning with noise in the label space of text data, but not noise in the text data itself (e.g. typos). Towards this goal, we consider non-empty reviews with more ``helpful'' up-votes than down-votes -- the resulting dataset consists of approximately ten million reviews. Finally, Theorem \ref{thm:robustness} shows that CL is robust to class-imbalance, but datasets like ImageNet and CIFAR-10 are balanced by construction: the Amazon Reviews dataset, however, is naturally and extremely imbalanced -- the distribution of given labels (i.e., the noisy prior), is: 9\% 1-star reviews:, 12\% 3-star reviews, and 79\% 5-star reviews.  We seek to understand if CL can find label errors and improve performance in learning with noisy labels in this class-imbalanced setting.

\paragraph{Training settings} \label{amazon_reviews_training_settings} To demonstrate that non-deep-learning methods can be effective in finding label issues under the CL framework, we use a multinomial logistic regression classifier for both finding label errors and learning with noisy labels. The built-in SGD optimizer in the open-sourced fastText library \citep{fasttext_bag_of_tricks} is used with settings: initial learning rate = 0.1, embedding dimension = 100, and n-gram = 3). Out-of-sample predicted probabilities are obtained via 5-fold cross-validation. For input during training, a review is represented as the mean of pre-trained, tri-gram, word-level fastText embeddings \citep{bojanowski2017_fasttext_word_representations}.

\paragraph{Finding label issues} Table \ref{amazon} shows examples of label issues in the Amazon Reviews dataset found automatically using the CL: C+NR variant of confident learning. We observe qualitatively that most label issues identified by CL in this context are reasonable except for sarcastic reviews, which appear to be poorly modeled by the bag-of-words approach.

\begin{table}[ht]
\renewcommand{\arraystretch}{0.8}
\caption{Top 20 CL-identified label issues in the Amazon Reviews text dataset using CL: C+NR, ordered by normalized margin. A logistic regression classifier trained on fastText embeddings is used to obtain out-of-sample predicted probabilities. Most errors are reasonable, with the exception of sarcastic reviews, which are poorly modeled by the bag-of-words model. }
\vskip -0.15in
\label{amazon}
\begin{center}
\resizebox{.99\textwidth}{!}{ 

\begin{tabular}{rcc}
\toprule
\textbf{Review} & \textbf{Given Label} & \textbf{CL Guess} \\
\midrule
                                                     A very good addition to kindle. Cleans and scans. Very easy  TO USE &        $\star$ &     $\star\star\star\star\star$ \\
                                                                                         Buy it and enjoy a great story. &        $\star\star\star$ &     $\star\star\star\star\star$ \\
               Works great! I highly recommend it to everyone that enjoys singing hymns! Love it! Love it! Love it! :) . &        $\star\star\star$ &     $\star\star\star\star\star$ \\
                Awesome it was better than all the other my weirder school books.  I love it! The best book ever.Awesome &        $\star$ &     $\star\star\star\star\star$ \\
     I gave this 5 stars under duress.  I would rather give it 3 stars.  it plays fine but it is a little boring so far. &        $\star\star\star\star\star$ &     $\star\star\star$ \\
                                                                          only six words: don't waist your money on this &        $\star\star\star\star\star$ &     $\star$ \\
                        I love it so much at first I though it would be boring but turns out its fun for all ages get it &        $\star$ &     $\star\star\star\star\star$ \\
         Excellent read, could not put it down! Keep up the great works ms. Brown. Cannot wait to download the next one. &        $\star$ &     $\star\star\star\star\star$ \\
                         This is one of the easiest to use games I have ever played. It is adaptable and fun. I love it. &        $\star$ &     $\star\star\star\star\star$ \\
                                                                               So this is what today's music has become? &        $\star$ &     $\star\star\star\star\star$ \\
           Sarah and Charlie, what a wonderful story. I loved this book and look forward to reading more of this series. &        $\star\star\star$ &     $\star\star\star\star\star$ \\
                              I've had this for over a year and it works very well.  I am very happy with this purchase. &        $\star$ &     $\star\star\star\star\star$ \\
                                               this show is insane and I love it. I will be ordering more seasons of it. &        $\star\star\star$ &     $\star\star\star\star\star$ \\
                                                                            Just what the world needs, more generic r\&b. &        $\star$ &     $\star\star\star\star\star$ \\
                                    I did like the Making Of This Is movie it  okay it not the best okay it not great  . &        $\star$ &     $\star\star\star$ \\
                                                        Tough game. But of course it has the very best sound track ever! &        $\star$ &     $\star\star\star\star\star$ \\
                                                                           unexpected kid on the way thanks to this shit &        $\star$ &     $\star\star\star\star\star$ \\
                                   The kids are fascinated by it, Plus my wife loves it.. I love it I love it we love it &        $\star\star\star$ &     $\star\star\star\star\star$ \\
 Loved this book! A great story and insight into the time period and life during those times. Highly recommend this book &        $\star\star\star$ &     $\star\star\star\star\star$ \\
     Great reading I could not put it down. Highly recommend reading this book. You will not be disappointed. Must read. &        $\star\star\star$ &     $\star\star\star\star\star$ \\
\bottomrule
\end{tabular}

}

\end{center}
\end{table}

\begin{table}[ht]
\renewcommand{\arraystretch}{0.8}
\caption{Ablation study (varying train set size, test split, and epochs) comparing test accuracy (\%) of CL methods versus a standard training baseline for classifying noisy, real-world Amazon reviews text data as either 1-star, 3-stars, or 5-stars. A simple multinomial logistic regression classifier is used. Mean top-1 accuracy and standard deviations are reported over five trials. The number of estimated label errors CL methods removed prior to training is shown in the ``Pruned'' column. Baseline training begins to overfit to noise with additional epochs trained, whereas CL test accuracy continues to increase \emph{(cf. N=1000K, Epochs: 50)}.}
\vskip -0.15in
\label{amazon_benchmarks}
\begin{center}
\resizebox{.99\textwidth}{!}{ 

\begin{tabular}{ll|llll|lll}
\toprule
  Test & \multicolumn{1}{r|}{Train set size}  & \multicolumn{4}{c|}{$N = 1000K$ } & \multicolumn{3}{c}{$N = 500K$} \\
\multicolumn{2}{l|}{}    & Epochs: 5 & Epochs: 20 & Epochs: 50 & Pruned & Epochs: 5 & Epochs: 20 & Pruned \\
\midrule
10th & CL: $\bm{C}_{\text{confusion}}$ &   85.2$\pm$0.06  &   89.2$\pm$0.02 &  90.0$\pm$0.02 & 291K &  86.6$\pm$0.03 &   86.6$\pm$0.03 &   259K \\
   & CL: C+NR                        &   86.3$\pm$0.04  &   \textbf{89.8$\pm$0.01} &  \textbf{90.2$\pm$0.01} & 250K &  \textbf{87.5$\pm$0.05} &   \textbf{87.5$\pm$0.03} &   244K \\
   & CL: $\cj$                       &   \textbf{86.4$\pm$0.01}  &   \textbf{89.8$\pm$0.02} &  90.1$\pm$0.02 & 246K &  \textbf{87.5$\pm$0.02} &   \textbf{87.5$\pm$0.02} &   243K \\
   & CL: PBC                         &   86.2$\pm$0.03  &   89.7$\pm$0.01 & \textbf{90.2$\pm$0.01} &  257K &  87.4$\pm$0.03 &   87.4$\pm$0.03 &   247K \\
   & CL: PBNR                        &   86.2$\pm$0.07  &   89.7$\pm$0.01 &  \textbf{90.2$\pm$0.01} & 257K &  87.4$\pm$0.05 &   87.4$\pm$0.05 &   247K \\
   & Baseline                        &   83.9$\pm$0.11  &   86.3$\pm$0.06 &  84.4$\pm$0.04 &  0K &  82.7$\pm$0.07 &   82.8$\pm$0.07 &     0K \\
\midrule
11th & CL: $\bm{C}_{\text{confusion}}$ &   85.3$\pm$0.05 &   89.3$\pm$0.01 &  90.0$\pm$0.0 & 294K &  86.6$\pm$0.04 &   86.6$\pm$0.06 &   261K \\
   & CL: C+NR                        &   \textbf{86.4$\pm$0.06} &   \textbf{89.8$\pm$0.01} &  90.2$\pm$0.01 & 252K &  \textbf{87.5$\pm$0.04} &   \textbf{87.5$\pm$0.03} &   247K \\
   & CL: $\cj$                       &   86.3$\pm$0.05 &   \textbf{89.8$\pm$0.01} &  90.1$\pm$0.02 & 249K &  \textbf{87.5$\pm$0.03} &   \textbf{87.5$\pm$0.02} &   246K \\
   & CL: PBC                         &   86.2$\pm$0.03 &   \textbf{89.8$\pm$0.01} &  \textbf{90.3$\pm$0.0} & 260K &  87.4$\pm$0.03 &   87.4$\pm$0.05 &   250K \\
   & CL: PBNR                        &   86.2$\pm$0.06 &   \textbf{89.8$\pm$0.01} &  90.2$\pm$0.02 & 260K &  87.4$\pm$0.05 &   87.4$\pm$0.03 &   249K \\
   & Baseline                        &   83.9$\pm$0.0  &   86.3$\pm$0.05 &   84.4$\pm$0.12 & 0K &  82.7$\pm$0.04 &   82.7$\pm$0.09 &     0K \\
\bottomrule
\end{tabular}

}

\end{center}
\end{table}

\paragraph{Learning with noisy labels / weak supervision} We compare the CL methods, which prune errors from the train set and subsequently provide clean data for training, versus a standard training baseline (denoted \emph{Baseline} in Table \ref{amazon_benchmarks}), which trains on the original, uncleaned train dataset. The same training settings used to find label errors (see Subsection \ref{amazon_reviews_training_settings}) are used to obtain all scores reported in Table \ref{amazon_benchmarks} for all methods. For a fair comparison, all mean accuracies in Table \ref{amazon_benchmarks} are reported on the same held-out test set, created by splitting the Amazon reviews dataset into a train set and test set such that every tenth example is placed in a test set and the remaining data is available for training (the Amazon Reviews 5-core dataset provides no explicit train set and test set).

The Amazon Reviews dataset is naturally noisy, but the fraction of noise in the dataset is estimated to be less than 4\% \citep{northcutt2021labelerrors}, which makes studying the benefits of providing clean data for training challenging. To increase the percentage of noisy labels without adding synthetic noise, we subsample 1 million training examples from the train set by combining the label issues identified by all five CL methods from the original training data (244K examples) and a uniformly random subsample (766k examples) of the remaining ``cleaner'' training data. This process increases the percentage of label noise to 24\% (estimated) in the train set and, importantly, does \emph{not} increase the percentage of noisy labels in the test set -- large amounts of test set label noise have been shown to severely impact benchmark rankings \citep{northcutt2021labelerrors}.

To mitigate the bias induced by the choice of train set size, test set split, and the number of epochs trained, we conduct an ablation study shown in Table \ref{amazon_benchmarks}. For the train set size, we repeat each experiment with train set sizes of 1-million examples and $500,000$ examples. For the test set split, we repeat all experiments by removing every \emph{eleventh} example (instead of tenth) in our train/test split (c.f. the first column in Table \ref{amazon_benchmarks}), minimizing the overlap (9\%) between the two test sets. For each number of epochs trained, we repeat each experiment with 5, 20, and 50 epochs. We omit ($N = 500K$, Epochs: 50) because no learning occurs after 5 epochs.

Every score reported in Table \ref{amazon_benchmarks} is the mean and standard deviation of five trials: each trial varies the randomly selected subset of training data and the initial weights of the logistic regression model used for training. 

The results in Table \ref{amazon_benchmarks} reveal three notable observations. First, all CL methods outperform the baseline method by a significant margin in all cases. Second, CL methods outperform the baseline method even with nearly half of the training data pruned (Table \ref{amazon_benchmarks}, cf. N=500K). Finally, for the train set size $N = 1000K$, baseline training begins to overfit to noise with additional epochs trained, whereas CL test accuracy continues to increase \emph{(cf. N=1000K, Epochs: 50)}, suggesting CL robustness to overfitting to noise during training. The results in Table \ref{amazon_benchmarks} suggest CL's efficacy for noisy supervision with logistic regression in the context of text data.

\subsection{Real-world Label Errors in Other Datasets}

We use CL to find label errors in the purported ``error-free" MNIST dataset comprised of preprocessed black-and-white handwritten digits, and also in the noisy-labeled WebVision dataset \citep{li2017webvision} comprised of color images collected from online image repositories and using the search query as the noisy label.

\begin{figure*}[ht]
\centerline{\includegraphics[width=\textwidth]{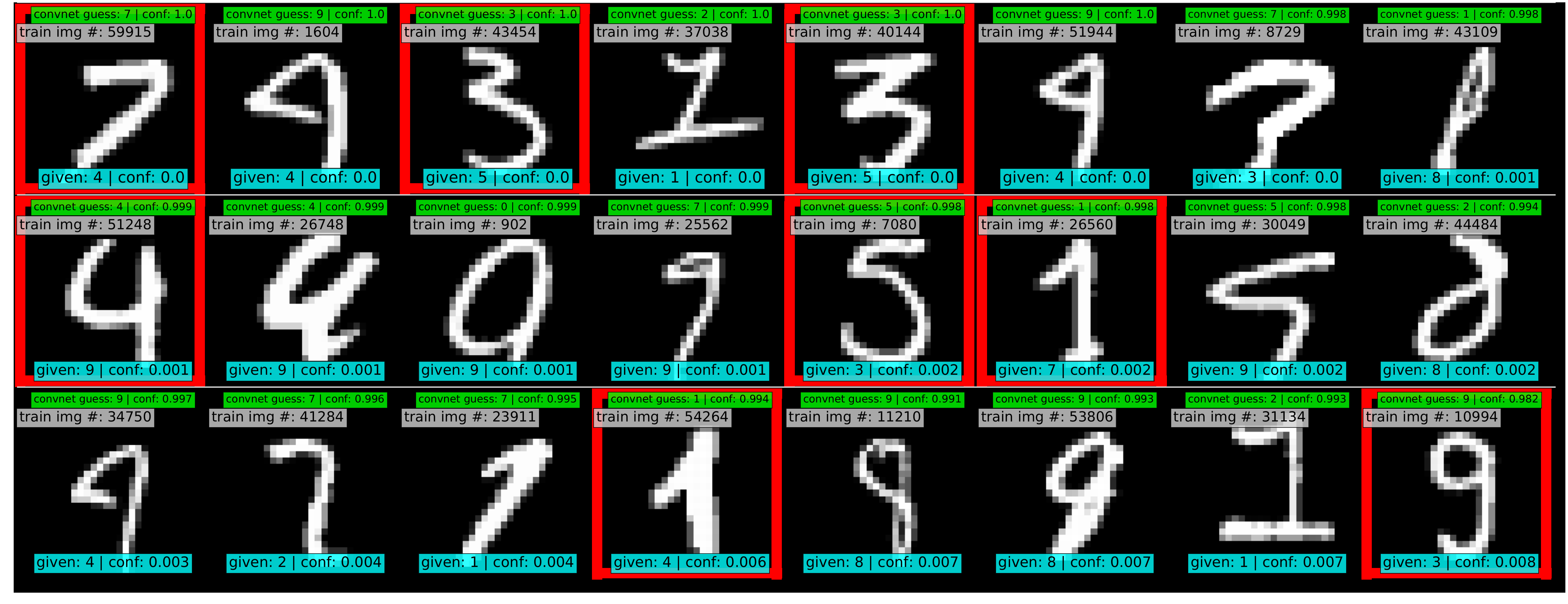}}
\vskip -0.1in
\caption{Label errors in the original, unperturbed MNIST train dataset identified using CL: PBNR. These are the top 24 errors found by CL, ordered left-right, top-down by increasing self-confidence, denoted \emph{conf} in teal. The predicted $\argmax \hat{p}(\tilde{y}=k ; x, \model )$ label is in green. Overt errors are in red. This dataset is assumed ``error-free'' in tens of thousands of studies.} 
\label{mnist_training_label_errors24_prune_by_noise_rate}
\vskip -.2in
\end{figure*}

\begin{figure*}[!b]
\centerline{\includegraphics[width=\textwidth]{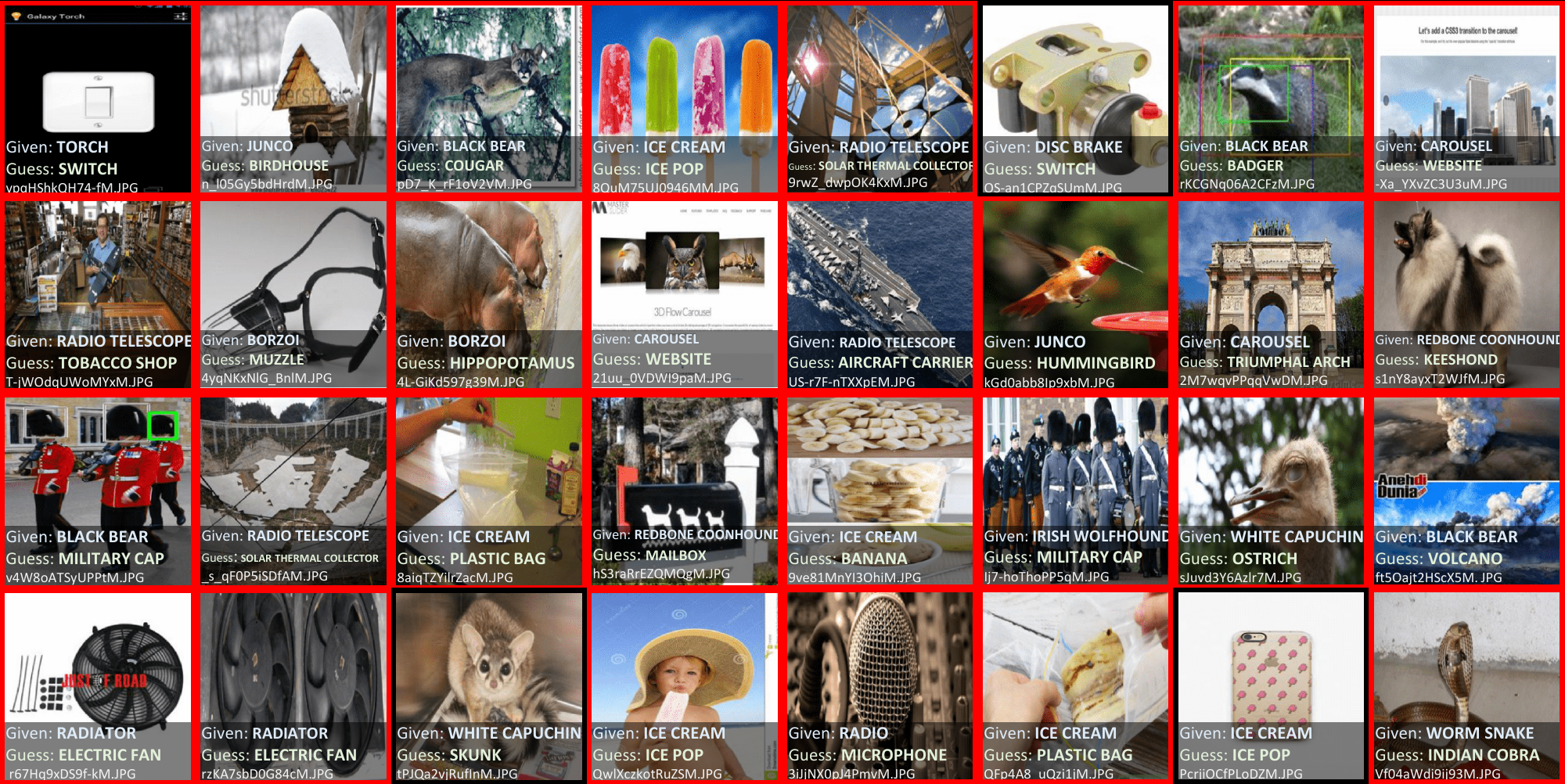}}
\caption{Top 32 identified label issues in the WebVision train set using CL: $\cj$. Out-of-sample predicted probabilities are obtained using a model pre-trained on ImageNet, avoiding training entirely. Errors are boxed in red. Ambiguous cases or mistakes are boxed in black. Label errors are ordered automatically by normalized margin.} 
\label{label_errors_webvision_train_top32}
\end{figure*}

To our surprise, the original, unperturbed MNIST dataset, which is predominately assumed error-free, contains blatant label errors, highlighted by the red boxes in Fig. \ref{mnist_training_label_errors24_prune_by_noise_rate}. To find label errors in MNIST, we pre-trained a simple 2-layer CNN for 50 epochs, then used cross-validation to obtain $\probmatrix$, the out-of-sample predicted probabilities for the train set. CL: PBNR was used to identify the errors. The top 24 label errors, ordered by self-confidence, are shown in Fig. \ref{mnist_training_label_errors24_prune_by_noise_rate}. For verification, the indices of the train label errors are shown in grey.

To find label errors in WebVision, we used a pre-trained model to obtain $\probmatrix$, observing two practical advantages of CL: (1) a pre-trained model can be used to obtain $\probmatrix$ out-of-sample instead of cross-validation and (2) this makes CL fast. For example, finding label errors in WebVision, with over a million images and 1,000 classes, took three minutes on a laptop using a pre-trained ResNext model that had never seen the noisy WebVision train set before. We used the CL: $\cj$ method to find the label errors and ordered errors by normalized margins. Examples of WebVision label errors found by CL are shown in Fig. \ref{label_errors_webvision_train_top32}.

\section{Related work}
\label{sec:related}


We first discuss prior work on confident learning, then review how CL relates to noise estimation and robust learning.

\paragraph{Confident learning} \; Our results build on a large body of work termed ``confident learning''. \citet{elkan_cost_sensitive_learning_thresholding} and \citet{forman2005counting} pioneered counting approaches to estimate false positive and false negative rates for binary classification. We extend counting principles to multi-class setting. To increase robustness against epistemic error in predicted probabilities and class imbalance, \citet{Elkan:2008:LCO:1401890.1401920} introduced thresholding, but required uncorrupted positive labels. CL generalizes the use of thresholds to multi-class noisy labels. CL also reweights the loss during training to adjust priors for the data removed. This choice builds on formative works \citep{NIPS2013_5073, rooyen_menon_unhinged_nips15} which used loss reweighting to prove equivalent empirical risk minimization for learning with noisy labels. 
More recently, \citet{deep_self_learning_arxiv_2019} proposed an empirical deep self-supervised learning approach to avoid probabilities by using embedding layers of a neural network. In comparison, CL is non-iterative and theoretically grounded. \citet{icml_lipton_label_shift_confusion_matrix} estimate label noise using approaches based on confusion matrices and cross-validation. However, unlike CL, the former assumes a less general form of label shift than class-conditional noise. \cite{o2unet_iccv} demonstrate the empirical efficacy of first finding label errors, then training on clean data, but the study evaluates only uniform (symmetric) and pair label noise -- CL augments these empirical findings with theoretical justification for the broader class of asymmetric and class-conditional label noise.


\paragraph{Theory: a model-free, data-free approach} Theoretical analysis with noisy labels often assumes a restricted class of models or data to disambiguate model noise from label noise. For example, \cite{ICML_iterative_trimmed_loss} provide theoretical guarantees for learning with noisy labels in a more general setting than CL that includes adversarial examples and noisy data, but limit their findings to generalized linear models. CL theory is model and dataset agnostic, instead restricting the magnitude of example-level noise. In a formative related approach, \cite{neurips2019novelinformationtheory} prove that using the loss function $-\log \left(\vert \det (\joint) \right) \rvert$ enables noise robust training for any model and dataset, further justified by performant empirical results. Similar to confident learning, their approach hinges on the use of $\joint$, however, they require that $\nm$ is invertible and estimate $\joint$ using $\bm{C}_{\text{confusion}}$, which is sensitive to class-imbalance and heterogeneous class probability distributions (see Sec. \ref{sec:label_noise_characterization}). In Sec. \ref{sec:theory}, we show sufficient conditions in Thm. \ref{thm:robustness} where $\cj$ exactly finds label errors, regardless of each class's probability distribution.

\paragraph{Uncertainty quantification and label noise estimation} \; A number of formative works developed solutions to estimate noise rates using convergence criterion \citep{scott2015rate}, positive-unlabeled learning \citep{Elkan:2008:LCO:1401890.1401920}, and predicted probability ratios \citep{northcutt2017rankpruning}, but are limited to binary classification. Others prove equivalent empirical risk for \emph{binary} learning with noisy labels \citep{NIPS2013_5073,Liu:2016:CNL:2914183.2914328,Sugiyama:2012:DRE:2181148} assuming noise rates are known, which is rarely true in practice. Unlike these binary approaches, CL estimates label uncertainty in the multiclass setting, where prior work often falls into five categories: (1) theoretical contributions \citep{scott19_jmlr_mutual_decontamination}, (2) loss modification for label noise robustness \citep{DBLP:conf/icml/PatriniNNC16, patrini2017making, Sukhbaatar_fergus_iclr_2015, rooyen_menon_unhinged_nips15}, (3) deep learning and model-specific approaches \citep{Sukhbaatar_fergus_iclr_2015, DBLP:conf/icml/PatriniNNC16, 7837934_multiclass_learning_with_noise_using_dropout}, (4) crowd-sourced labels via multiple workers \citep{zhang2017improving, dawid1979maximum, ratner_chris_re_large_training_sets_NIPS2016_6523}, (5) factorization, distillation \citep{Li_distillation2017}, and imputation \citep{amjad_devavrat_svd_imputation_2017} methods, among other \citep{Saez2014_multiclass_learning_by_binary_subproblem}. 
Unlike these approaches, CL provides a consistent estimator for exact estimation of the joint distribution of noisy and true labels directly, under practical conditions.

\paragraph{Label-noise robust learning} \; Beyond the above noise estimation approaches, extensive studies have investigated training models on noisy datasets, e.g. \citep{beigman2009learning,brodley1999identifying}. Noise-robust learning is important for deep learning because modern neural networks trained on noisy labels generalize poorly on clean validation data~\citep{zhang2017understanding}. A notable recent rend in noise robust learning is benchmarking with symmetric label noise in which labels are uniformly flipped, e.g.~\citep{DBLP:conf/iclr/GoldbergerB17_smodel,arazo2019unsupervised}. However, noise in real-world datasets is highly non-uniform and often sparse. For example, in ImageNet \citep{ILSVRC15_imagenet}, \emph{missile} is likely to be mislabeled as \emph{projectile}, but has a near-zero probability of being mislabeled as most other classes like \emph{wool}, \emph{ox}, or \emph{wine}. To approximate real-world noise, an increasing number of studies examined asymmetric noise using, e.g. loss or label correction~\citep{patrini2017making,noisy_boostrapping_google_reed_iclr_2015,DBLP:conf/iclr/GoldbergerB17_smodel}, per-example loss reweighting~\citep{jiang2020beyond,jiang2018mentornet,shu2019meta}, Co-Teaching~\citep{han2018coteaching}, semi-supervised learning~\citep{hendrycks2018using,Li_distillation2017,vahdat2017toward}, symmetric cross entropy~\citep{wang2019sceloss_symmetric}, and semi-supervised learning~\citep{iclr_li2020dividemix}, among others. These approaches work by introducing novel new models or insightful modifications to the loss function during training. CL takes a loss-agnostic approach, instead focusing on generating clean data for training by directly estimating of the joint distribution of noisy and true labels.

\paragraph {Comparison of the INCV Method and Confident Learning} \label{sec:compare_incv_and_cl}

The INCV algorithm \citep{chen2019confusion} and confident learning both estimate clean data, use cross-validation, and use aspects of confusion matrices to deal with label errors in ML workflows. Due to these similarities, we discuss four key differences between confident learning and INCV.

First, INCV errors are found using an iterative version of the $\bm{C}_{\text{confusion}}$ confident learning baseline: any example with a different given label than its argmax prediction is considered a label error. This approach, while effective (see Table \ref{table:cifar10_benchmark}), fails to properly count errors for class imbalance or when a model is more confident (larger or smaller probabilities on average) for certain class than others, as discussed in Section \ref{sec:theory}. To account for this class-level bias in predicted probabilities and enable robustness, confident learning uses theoretically-supported (see Section \ref{sec:theory}) thresholds \citep{elkan_cost_sensitive_learning_thresholding, richard1991neural_thresholding} while estimating the confident joint. Second, a major contribution of CL is finding the label errors in the presumed error-free benchmarks such as ImageNet and MNIST, whereas INCV emphasizes empirical results for learning with noisy labels. Third, in each INCV training iteration, 2-fold cross-validation is performed. The iterative nature of INCV makes training slow (see Appendix Table \ref{table:incv_stuff}) and uses fewer data during training. Unlike INCV, confident learning is not iterative. In confident learning, cross-validated probabilities are computed only once beforehand from which the joint distribution of noisy and true labels is directly estimated which is used to identify clean data to be used by a single pass re-training. We demonstrate this approach is experimentally performant without iteration (see Table \ref{table:cifar10_benchmark}). Finally, confident learning is modular. CL approaches for training, finding label errors, and ordering label errors for removal are independent. In INCV, the procedure is iterative, and all three steps are tied together in a single looping process. A single iteration of INCV equates to the $\bm{C}_{\text{confusion}}$ baseline benchmarked in this paper.

\section{Conclusion and Future Work}
\label{sec:conclusion}

Following the principles of confident learning, we developed a novel approach to estimate the joint distribution of label noise and explicated theoretical and experimental insights into the benefits of doing so. We demonstrated accurate uncertainty quantification in high noise and and sparsity regimes, across multiple datasets, data modalities, and model architectures. We empirically evaluated three criteria: (1) uncertainty quantification via estimation of the joint distribution of label noise, (2) finding label errors, and (3) learning with noisy labels on CIFAR-10, and found that CL methods outperform recent prior art across all three. 

These findings emphasize the practical nature of confident learning, identifying numerous pre-existing label issues in ImageNet, Amazon Reviews, MNIST, and other datasets, and improving the performance of learning models like deep neural networks by training on a cleaned dataset. Confident learning motivates the need for further understanding of dataset uncertainty estimation, methods to clean training and test sets, and approaches to identify ontological and label issues for dataset curation. Future directions include validation of CL methods on more datasets such as the OpenML Benchmark~\citep{feurer2019openml}, the multi-modal Egocentric Communications (EgoCom) benchmark \citep{northcutt2020egocom}, and the realistic noisy label benchmark CNWL~\citep{jiang2020beyond}; evaluation of CL methods using other non-neural network models, such as random forests and XGBoost; examination of other threshold function formulations; examination of label errors in test sets and they affect machine learning benchmarks at scale~\citep{northcutt2021labelerrors}; assimilation of CL label error finding with pseudo-labeling and/or curriculum learning to \emph{dynamically} provide clean data during training; and further exploration of iterative and/or regression-based extensions of CL methods.


\section*{Acknowledgements}

We thank the following colleagues: Jonas Mueller assisted with notation. Anish Athayle suggested starting the proof in claim 1 of Theorem \ref{thm:exact_label_errors} with the identity. Tailin Wu contributed to Lemma \ref{lemma:ideal_threshold}. Niranjan Subrahmanya provided feedback on baselines for confident learning. 

\clearpage
\vskip 0.2in
\bibliography{paper}

\begin{thebibliography}{}

\bibitem[Amjad et~al., 2018]{amjad_devavrat_svd_imputation_2017}
Amjad, M., Shah, D., and Shen, D. (2018).
\newblock Robust synthetic control.
\newblock {\em Journal of Machine Learning Research (JMLR)}, 19(1):802--852.

\bibitem[Angluin and Laird, 1988]{angluin1988learning}
Angluin, D. and Laird, P. (1988).
\newblock Learning from noisy examples.
\newblock {\em Machine Learning}, 2(4):343--370.

\bibitem[Arazo et~al., 2019]{arazo2019unsupervised}
Arazo, E., Ortego, D., Albert, P., O'Connor, N.~E., and McGuinness, K. (2019).
\newblock Unsupervised label noise modeling and loss correction.
\newblock In {\em International Conference on Machine Learning (ICML)}.

\bibitem[Beigman and Klebanov, 2009]{beigman2009learning}
Beigman, E. and Klebanov, B.~B. (2009).
\newblock Learning with annotation noise.
\newblock In {\em Annual Conference of the Association for Computational
  Linguistics (ACL)}.

\bibitem[Bojanowski et~al., 2017]{bojanowski2017_fasttext_word_representations}
Bojanowski, P., Grave, E., Joulin, A., and Mikolov, T. (2017).
\newblock Enriching word vectors with subword information.
\newblock {\em Transactions of the Association for Computational Linguistics},
  5:135--146.

\bibitem[Bouguelia et~al., 2018]{bouguelia2018agreeing}
Bouguelia, M.-R., Nowaczyk, S., Santosh, K., and Verikas, A. (2018).
\newblock Agreeing to disagree: active learning with noisy labels without
  crowdsourcing.
\newblock {\em International Journal of Machine Learning and Cybernetics},
  9(8):1307--1319.

\bibitem[Brodley and Friedl, 1999]{brodley1999identifying}
Brodley, C.~E. and Friedl, M.~A. (1999).
\newblock Identifying mislabeled training data.
\newblock {\em Journal of Artificial Intelligence Research (JAIR)},
  11:131--167.

\bibitem[Chen et~al., 2019]{chen2019confusion}
Chen, P., Liao, B.~B., Chen, G., and Zhang, S. (2019).
\newblock Understanding and utilizing deep neural networks trained with noisy
  labels.
\newblock In {\em International Conference on Machine Learning (ICML)}.

\bibitem[Chowdhary and Dupuis, 2013]{aleatoric_epistemic_uncertainty_2013}
Chowdhary, K. and Dupuis, P. (2013).
\newblock Distinguishing and integrating aleatoric and epistemic variation in
  uncertainty quantification.
\newblock {\em Mathematical Modelling and Numerical Analysis (ESAIM)},
  47(3):635--662.

\bibitem[Dawid and Skene, 1979]{dawid1979maximum}
Dawid, A.~P. and Skene, A.~M. (1979).
\newblock Maximum likelihood estimation of observer error-rates using the em
  algorithm.
\newblock {\em Journal of the Royal Statistical Society: Series C (Applied
  Statistics)}, 28(1):20--28.

\bibitem[Elkan, 2001]{elkan_cost_sensitive_learning_thresholding}
Elkan, C. (2001).
\newblock The foundations of cost-sensitive learning.
\newblock In {\em International Joint Conference on Artificial Intelligence
  (IJCAI)}.

\bibitem[Elkan and Noto, 2008]{Elkan:2008:LCO:1401890.1401920}
Elkan, C. and Noto, K. (2008).
\newblock Learning classifiers from only positive and unlabeled data.
\newblock In {\em SIGKDD Conference on Knowledge Discovery and Data Mining
  (KDD)}.

\bibitem[Feurer et~al., 2019]{feurer2019openml}
Feurer, M., van Rijn, J.~N., Kadra, A., Gijsbers, P., Mallik, N., Ravi, S.,
  M{\"u}ller, A., Vanschoren, J., and Hutter, F. (2019).
\newblock Openml-python: an extensible python api for openml.
\newblock {\em arXiv preprint arXiv:1911.02490}.

\bibitem[Forman, 2005]{forman2005counting}
Forman, G. (2005).
\newblock Counting positives accurately despite inaccurate classification.
\newblock In {\em European Conference on Computer Vision (ECCV)}.

\bibitem[Forman, 2008]{forman2008quantifying}
Forman, G. (2008).
\newblock Quantifying counts and costs via classification.
\newblock {\em Data Mining and Knowledge Discovery}, 17(2):164--206.

\bibitem[Goldberger and Ben{-}Reuven,
  2017]{DBLP:conf/iclr/GoldbergerB17_smodel}
Goldberger, J. and Ben{-}Reuven, E. (2017).
\newblock Training deep neural-networks using a noise adaptation layer.
\newblock In {\em International Conference on Learning Representations (ICLR)}.

\bibitem[Graepel and Herbrich,
  2001]{graepel_kernel_gibbs_sampler_prior_py_bayesian_nips_2001}
Graepel, T. and Herbrich, R. (2001).
\newblock The kernel gibbs sampler.
\newblock In {\em Conference on Neural Information Processing Systems
  (NeurIPS)}.

\bibitem[Guo et~al., 2017]{kilian_weinberger_calibration_Guo2017_icml}
Guo, C., Pleiss, G., Sun, Y., and Weinberger, K.~Q. (2017).
\newblock On calibration of modern neural networks.
\newblock In {\em International Conference on Machine Learning (ICML)}.

\bibitem[Halpern et~al., 2016]{yoni_sontag_anchors_2016}
Halpern, Y., Horng, S., Choi, Y., and Sontag, D. (2016).
\newblock Electronic medical record phenotyping using the anchor and learn
  framework.
\newblock {\em Journal of the American Medical Informatics Association},
  23(4):731--740.

\bibitem[Han et~al., 2018]{han2018coteaching}
Han, B., Yao, Q., Yu, X., Niu, G., Xu, M., Hu, W., Tsang, I., and Sugiyama, M.
  (2018).
\newblock Co-teaching: Robust training of deep neural networks with extremely
  noisy labels.
\newblock In {\em Conference on Neural Information Processing Systems
  (NeurIPS)}.

\bibitem[Han et~al., 2019]{deep_self_learning_arxiv_2019}
Han, J., Luo, P., and Wang, X. (2019).
\newblock Deep self-learning from noisy labels.
\newblock In {\em International Conference on Computer Vision (ICCV)}.

\bibitem[He and McAuley, 2016]{he2016ups_amazonreviews}
He, R. and McAuley, J. (2016).
\newblock Ups and downs: Modeling the visual evolution of fashion trends with
  one-class collaborative filtering.
\newblock In {\em International conference on world wide web (WWW)}.

\bibitem[Hendrycks and Gimpel, 2017]{hendrycks17baseline}
Hendrycks, D. and Gimpel, K. (2017).
\newblock A baseline for detecting misclassified and out-of-distribution
  examples in neural networks.
\newblock {\em International Conference on Learning Representations (ICLR)}.

\bibitem[Hendrycks et~al., 2018]{hendrycks2018using}
Hendrycks, D., Mazeika, M., Wilson, D., and Gimpel, K. (2018).
\newblock Using trusted data to train deep networks on labels corrupted by
  severe noise.
\newblock In {\em Conference on Neural Information Processing Systems
  (NeurIPS)}.

\bibitem[Hoffman et~al., 2015]{hoffman2015detector}
Hoffman, J., Pathak, D., Darrell, T., and Saenko, K. (2015).
\newblock Detector discovery in the wild: Joint multiple instance and
  representation learning.
\newblock In {\em Conference on Computer Vision and Pattern Recognition
  (CVPR)}.

\bibitem[{Huang} et~al., 2019]{o2unet_iccv}
{Huang}, J., {Qu}, L., {Jia}, R., and {Zhao}, B. (2019).
\newblock O2u-net: A simple noisy label detection approach for deep neural
  networks.
\newblock In {\em International Conference on Computer Vision (ICCV)}.

\bibitem[Jiang et~al., 2020]{jiang2020beyond}
Jiang, L., Huang, D., Liu, M., and Yang, W. (2020).
\newblock Beyond synthetic noise: Deep learning on controlled noisy labels.
\newblock In {\em International Conference on Machine Learning (ICML)}.

\bibitem[Jiang et~al., 2018]{jiang2018mentornet}
Jiang, L., Zhou, Z., Leung, T., Li, L.-J., and Fei-Fei, L. (2018).
\newblock Mentornet: Learning data-driven curriculum for very deep neural
  networks on corrupted labels.
\newblock In {\em International Conference on Machine Learning (ICML)}.

\bibitem[Jindal et~al.,
  2016]{7837934_multiclass_learning_with_noise_using_dropout}
Jindal, I., Nokleby, M., and Chen, X. (2016).
\newblock Learning deep networks from noisy labels with dropout regularization.
\newblock In {\em International Conference on Data Mining (ICDM)}.

\bibitem[Joulin et~al., 2017]{fasttext_bag_of_tricks}
Joulin, A., Grave, E., Bojanowski, P., and Mikolov, T. (2017).
\newblock Bag of tricks for efficient text classification.
\newblock In {\em Annual Conference of the Association for Computational
  Linguistics (ACL)}.

\bibitem[Katz-Samuels et~al., 2019]{scott19_jmlr_mutual_decontamination}
Katz-Samuels, J., Blanchard, G., and Scott, C. (2019).
\newblock Decontamination of mutual contamination models.
\newblock {\em Journal of Machine Learning Research (JMLR)}, 20(41):1--57.

\bibitem[Khetan et~al., 2018]{khetan2018learning}
Khetan, A., Lipton, Z.~C., and Anandkumar, A. (2018).
\newblock Learning from noisy singly-labeled data.
\newblock In {\em International Conference on Learning Representations (ICLR)}.

\bibitem[Krizhevsky and Hinton, 2009]{cifar10}
Krizhevsky, A. and Hinton, G. (2009).
\newblock Learning multiple layers of features from tiny images.
\newblock {\em Master's thesis, Department of Computer Science, University of
  Toronto}.

\bibitem[Lawrence and Sch\"{o}lkopf,
  2001]{lawrence_bayesian_prior_py_noisy_2001}
Lawrence, N.~D. and Sch\"{o}lkopf, B. (2001).
\newblock Estimating a kernel fisher discriminant in the presence of label
  noise.
\newblock In {\em International Conference on Machine Learning (ICML)}.

\bibitem[Li et~al., 2020]{iclr_li2020dividemix}
Li, J., Socher, R., and Hoi, S.~C. (2020).
\newblock Dividemix: Learning with noisy labels as semi-supervised learning.
\newblock In {\em International Conference on Learning Representations (ICLR)}.

\bibitem[Li et~al., 2017a]{li2017webvision}
Li, W., Wang, L., Li, W., Agustsson, E., and Van~Gool, L. (2017a).
\newblock Webvision database: Visual learning and understanding from web data.
\newblock {\em arXiv:1708.02862}.

\bibitem[Li et~al., 2017b]{Li_distillation2017}
Li, Y., Yang, J., Song, Y., Cao, L., Luo, J., and Li, L.-J. (2017b).
\newblock Learning from noisy labels with distillation.
\newblock In {\em International Conference on Computer Vision (ICCV)}.

\bibitem[Lin et~al., 2014]{MS-COCO-IMAGE-DATASET}
Lin, T.-Y., Maire, M., Belongie, S., Hays, J., Perona, P., Ramanan, D.,
  Doll{\'a}r, P., and Zitnick, C.~L. (2014).
\newblock Microsoft coco: Common objects in context.
\newblock In {\em European Conference on Computer Vision (ECCV)}.

\bibitem[Lipton et~al., 2018]{icml_lipton_label_shift_confusion_matrix}
Lipton, Z., Wang, Y.-X., and Smola, A. (2018).
\newblock Detecting and correcting for label shift with black box predictors.
\newblock In {\em International Conference on Machine Learning (ICML)}.

\bibitem[Liu and Tao, 2015]{Liu:2016:CNL:2914183.2914328}
Liu, T. and Tao, D. (2015).
\newblock Classification with noisy labels by importance reweighting.
\newblock {\em IEEE Transactions on Pattern Analysis and Machine Intelligence
  (TPAMI)}, 38(3):447--461.

\bibitem[Natarajan et~al., 2017]{natarajan2017cost}
Natarajan, N., Dhillon, I.~S., Ravikumar, P., and Tewari, A. (2017).
\newblock Cost-sensitive learning with noisy labels.
\newblock {\em Journal of Machine Learning Research (JMLR)}, 18:155--1.

\bibitem[Natarajan et~al., 2013]{NIPS2013_5073}
Natarajan, N., Dhillon, I.~S., Ravikumar, P.~K., and Tewari, A. (2013).
\newblock Learning with noisy labels.
\newblock In {\em Conference on Neural Information Processing Systems
  (NeurIPS)}.

\bibitem[{Northcutt} et~al., 2020]{northcutt2020egocom}
{Northcutt}, C., {Zha}, S., {Lovegrove}, S., and {Newcombe}, R. (2020).
\newblock Egocom: A multi-person multi-modal egocentric communications dataset.
\newblock {\em IEEE Transactions on Pattern Analysis and Machine Intelligence
  (TPAMI)}.

\bibitem[Northcutt et~al., 2021]{northcutt2021labelerrors}
Northcutt, C.~G., Athalye, A., and Mueller, J. (2021).
\newblock Pervasive label errors in test sets destabilize machine learning
  benchmarks.
\newblock In {\em International Conference on Learning Representations Workshop
  Track (ICLR)}.

\bibitem[Northcutt et~al., 2016]{NORTHCUTT_cameo_2016}
Northcutt, C.~G., Ho, A.~D., and Chuang, I.~L. (2016).
\newblock Detecting and preventing “multiple-account” cheating in massive
  open online courses.
\newblock {\em Computers \& Education}, 100:71--80.

\bibitem[Northcutt et~al., 2017]{northcutt2017rankpruning}
Northcutt, C.~G., Wu, T., and Chuang, I.~L. (2017).
\newblock Learning with confident examples: Rank pruning for robust
  classification with noisy labels.
\newblock In {\em Conference on Uncertainty in Artificial Intelligence (UAI)}.

\bibitem[Page et~al., 1997]{page1997pagerank}
Page, L., Brin, S., Motwani, R., and Winograd, T. (1997).
\newblock Pagerank: Bringing order to the web.
\newblock Technical report, Stanford Digital Libraries Working Paper.

\bibitem[Patrini et~al., 2016]{DBLP:conf/icml/PatriniNNC16}
Patrini, G., Nielsen, F., Nock, R., and Carioni, M. (2016).
\newblock Loss factorization, weakly supervised learning and label noise
  robustness.
\newblock In {\em International Conference on Machine Learning (ICML)}.

\bibitem[Patrini et~al., 2017]{patrini2017making}
Patrini, G., Rozza, A., Krishna~Menon, A., Nock, R., and Qu, L. (2017).
\newblock Making deep neural networks robust to label noise: A loss correction
  approach.
\newblock In {\em Conference on Computer Vision and Pattern Recognition
  (CVPR)}.

\bibitem[Ratner et~al.,
  2016]{ratner_chris_re_large_training_sets_NIPS2016_6523}
Ratner, A.~J., De~Sa, C.~M., Wu, S., Selsam, D., and R\'{e}, C. (2016).
\newblock Data programming: Creating large training sets, quickly.
\newblock In {\em Conference on Neural Information Processing Systems
  (NeurIPS)}.

\bibitem[Reed et~al., 2015]{noisy_boostrapping_google_reed_iclr_2015}
Reed, S.~E., Lee, H., Anguelov, D., Szegedy, C., Erhan, D., and Rabinovich, A.
  (2015).
\newblock Training deep neural networks on noisy labels with bootstrapping.
\newblock In {\em International Conference on Learning Representations (ICLR)}.

\bibitem[Richard and Lippmann, 1991]{richard1991neural_thresholding}
Richard, M.~D. and Lippmann, R.~P. (1991).
\newblock Neural network classifiers estimate bayesian a posteriori
  probabilities.
\newblock {\em Neural computation}, 3(4):461--483.

\bibitem[Russakovsky et~al., 2015]{ILSVRC15_imagenet}
Russakovsky, O., Deng, J., Su, H., Krause, J., Satheesh, S., Ma, S., Huang, Z.,
  Karpathy, A., Khosla, A., Bernstein, M., Berg, A.~C., and Fei-Fei, L. (2015).
\newblock {ImageNet Large Scale Visual Recognition Challenge}.
\newblock {\em International Journal of Computer Vision (IJCV)},
  115(3):211--252.

\bibitem[S{\'a}ez et~al.,
  2014]{Saez2014_multiclass_learning_by_binary_subproblem}
S{\'a}ez, J.~A., Galar, M., Luengo, J., and Herrera, F. (2014).
\newblock Analyzing the presence of noise in multi-class problems: alleviating
  its influence with the one-vs-one decomposition.
\newblock {\em Knowledge and Information Systems}, 38(1):179--206.

\bibitem[Sambasivan et~al., 2021]{chi2021data}
Sambasivan, N., Kapania, S., Highfill, H., Akrong, D., Paritosh, P., and Aroyo,
  L.~M. (2021).
\newblock "{E}veryone wants to do the model work, not the data work": Data
  cascades in high-stakes ai.
\newblock In {\em Conference on Human Factors in Computing Systems (CHI)}.

\bibitem[Scott, 2015]{scott2015rate}
Scott, C. (2015).
\newblock A rate of convergence for mixture proportion estimation, with
  application to learning from noisy labels.
\newblock In {\em International Conference on Artificial Intelligence and
  Statistics (AISTATS)}.

\bibitem[Shen and Sanghavi, 2019]{ICML_iterative_trimmed_loss}
Shen, Y. and Sanghavi, S. (2019).
\newblock Learning with bad training data via iterative trimmed loss
  minimization.
\newblock In {\em International Conference on Machine Learning (ICML)},
  volume~97 of {\em Proceedings of Machine Learning Research}.

\bibitem[Shu et~al., 2019]{shu2019meta}
Shu, J., Xie, Q., Yi, L., Zhao, Q., Zhou, S., Xu, Z., and Meng, D. (2019).
\newblock Meta-weight-net: Learning an explicit mapping for sample weighting.
\newblock In {\em Conference on Neural Information Processing Systems
  (NeurIPS)}.

\bibitem[Sugiyama et~al., 2012]{Sugiyama:2012:DRE:2181148}
Sugiyama, M., Suzuki, T., and Kanamori, T. (2012).
\newblock {\em Density Ratio Estimation in ML}.
\newblock Cambridge University Press, New York, NY, USA, 1st edition.

\bibitem[Sukhbaatar et~al., 2015]{Sukhbaatar_fergus_iclr_2015}
Sukhbaatar, S., Bruna, J., Paluri, M., Bourdev, L., and Fergus, R. (2015).
\newblock Training convolutional networks with noisy labels.
\newblock In {\em International Conference on Learning Representations (ICLR)}.

\bibitem[Tanno et~al., 2019a]{tanno2019learning}
Tanno, R., Saeedi, A., Sankaranarayanan, S., Alexander, D.~C., and Silberman,
  N. (2019a).
\newblock Learning from noisy labels by regularized estimation of annotator
  confusion.
\newblock In {\em Conference on Computer Vision and Pattern Recognition
  (CVPR)}.

\bibitem[Tanno et~al., 2019b]{Tanno_2019_CVPR}
Tanno, R., Saeedi, A., Sankaranarayanan, S., Alexander, D.~C., and Silberman,
  N. (2019b).
\newblock Learning from noisy labels by regularized estimation of annotator
  confusion.
\newblock In {\em Conference on Computer Vision and Pattern Recognition
  (CVPR)}.

\bibitem[Vahdat, 2017]{vahdat2017toward}
Vahdat, A. (2017).
\newblock Toward robustness against label noise in training deep discriminative
  neural networks.
\newblock In {\em Conference on Neural Information Processing Systems
  (NeurIPS)}.

\bibitem[Van~Rooyen et~al., 2015]{rooyen_menon_unhinged_nips15}
Van~Rooyen, B., Menon, A., and Williamson, R.~C. (2015).
\newblock Learning with symmetric label noise: The importance of being
  unhinged.
\newblock In {\em Conference on Neural Information Processing Systems
  (NeurIPS)}.

\bibitem[Wang et~al., 2019]{wang2019sceloss_symmetric}
Wang, Y., Ma, X., Chen, Z., Luo, Y., Yi, J., and Bailey, J. (2019).
\newblock Symmetric cross entropy for robust learning with noisy labels.
\newblock In {\em International Conference on Computer Vision (ICCV)}.

\bibitem[Wei et~al., 2018]{wei2018nomralizedmaxmargin}
Wei, C., Lee, J.~D., Liu, Q., and Ma, T. (2018).
\newblock On the margin theory of feedforward neural networks.
\newblock {\em Computing Research Repository (CoRR)}.

\bibitem[Xu et~al., 2019]{neurips2019novelinformationtheory}
Xu, Y., Cao, P., Kong, Y., and Wang, Y. (2019).
\newblock L\_dmi: A novel information-theoretic loss function for training deep
  nets robust to label noise.
\newblock In {\em Conference on Neural Information Processing Systems
  (NeurIPS)}.

\bibitem[Yun et~al., 2021]{yun2021relabelimagenet}
Yun, S., Oh, S.~J., Heo, B., Han, D., Choe, J., and Chun, S. (2021).
\newblock Re-labeling imagenet: from single to multi-labels, from global to
  localized labels.
\newblock In {\em Conference on Computer Vision and Pattern Recognition
  (CVPR)}.

\bibitem[Zhang et~al., 2017a]{zhang2017understanding}
Zhang, C., Bengio, S., Hardt, M., Recht, B., and Vinyals, O. (2017a).
\newblock Understanding deep learning requires rethinking generalization.
\newblock In {\em International Conference on Learning Representations (ICLR)}.

\bibitem[Zhang et~al., 2018]{icml2018mixup}
Zhang, H., Cisse, M., Dauphin, Y.~N., and Lopez-Paz, D. (2018).
\newblock mixup: Beyond empirical risk minimization.
\newblock In {\em International Conference on Learning Representations (ICLR)}.

\bibitem[Zhang et~al., 2017b]{zhang2017improving}
Zhang, J., Sheng, V.~S., Li, T., and Wu, X. (2017b).
\newblock Improving crowdsourced label quality using noise correction.
\newblock {\em IEEE Transactions on Neural Networks and Learning Systems},
  29(5):1675--1688.

\end{thebibliography}
\interlinepenalty=10000
\bibliographystyle{apalike}

\clearpage
\beginsupplement
\onecolumn
\appendix

\section{Theorems and proofs for confident learning} \label{sec:proofs}

In this section, we restate the main theorems for confident learning and provide their proofs.

\setcounter{lemma}{0}
\setcounter{corollary}{0}
\setcounter{theorem}{0}

\begin{lemma}[Ideal Thresholds] 
For a noisy dataset $\bm{X} \coloneqq (\bm{x}, \tilde{y})^n \in {(\mathbb{R}^d, [m])}^n$ and model $\model$, 
if $\hat{p}(\tilde{y}; \bm{x}, \model)$ is \emph{ideal}, then 
$\forall i \smallin [m], t_i = \sum_{j \smallin [m]} p(\tilde{y} = i \vert y^* \smalleq j) p(y^* \smalleq j \vert \tilde{y} = i)$.
\end{lemma}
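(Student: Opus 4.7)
The plan is to start directly from the definitional form of the threshold,
\[
t_i = \frac{1}{|\bm{X}_{\tilde{y}=i}|}\sum_{\bm{x} \in \bm{X}_{\tilde{y}=i}} \hat{p}(\tilde{y}=i;\bm{x},\bm{\theta}),
\]
and to partition the sum according to the latent true label. Since $y^*$ is well-defined for every example (even though it is unobserved), we have the disjoint decomposition $\bm{X}_{\tilde{y}=i} = \bigsqcup_{j \in [m]} \bm{X}_{\tilde{y}=i,\, y^*=j}$, so the sum splits cleanly into a double sum over $j \in [m]$ and over $\bm{x} \in \bm{X}_{\tilde{y}=i, y^*=j}$.

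Next I would apply the \emph{ideal} hypothesis. By definition, for any $\bm{x} \in \bm{X}_{y^*=j}$ we have $\hat{p}(\tilde{y}=i;\bm{x},\bm{\theta}) = p(\tilde{y}=i \mid y^*=j)$, which in particular holds for every $\bm{x} \in \bm{X}_{\tilde{y}=i,\, y^*=j}$. The summand in the inner sum is therefore constant, so the inner sum collapses to $|\bm{X}_{\tilde{y}=i,\, y^*=j}| \cdot p(\tilde{y}=i \mid y^*=j)$. Distributing the $1/|\bm{X}_{\tilde{y}=i}|$ yields
\[
t_i = \sum_{j \in [m]} p(\tilde{y}=i \mid y^*=j) \cdot \frac{|\bm{X}_{\tilde{y}=i,\, y^*=j}|}{|\bm{X}_{\tilde{y}=i}|}.
\]

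Finally, I would identify the ratio $|\bm{X}_{\tilde{y}=i,\, y^*=j}|/|\bm{X}_{\tilde{y}=i}|$ as exactly $p(y^*=j \mid \tilde{y}=i)$, interpreting probabilities here as the natural empirical conditional on $\bm{X}$ (consistent with how the paper uses $p(\tilde{y}, y^*)$ as the joint distribution induced by the dataset). Substituting gives the claimed identity $t_i = \sum_{j} p(\tilde{y}=i \mid y^*=j)\, p(y^*=j \mid \tilde{y}=i)$, completing the proof.

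There is no real obstacle here: the lemma is essentially a bookkeeping identity once the ideal condition is applied. The only subtlety worth flagging is the interpretation of $p(y^*=j \mid \tilde{y}=i)$ as an empirical count ratio rather than a population quantity; this is consistent with the framing in which $\bm{X}$ defines the distribution of interest, and with the fact that the ideal condition itself is stated using the same conditional probabilities.
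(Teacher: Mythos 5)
Your proof is correct, and it takes a genuinely different route from the paper's. The paper expands each predicted probability via the law of total probability, writing $\hat{p}(\tilde{y}\smalleq i;\bm{x},\bm{\theta}) = \sum_{j} \hat{p}(\tilde{y}\smalleq i \vert y^*\smalleq j;\bm{x},\bm{\theta})\,\hat{p}(y^*\smalleq j;\bm{x},\bm{\theta})$, then invokes the class-conditional noise assumption to drop the $\bm{x}$-dependence of the first factor, exchanges the sum with the expectation over $\bm{X}_{\tilde{y}=i}$, and finally identifies $\EX_{\bm{x}\in\bm{X}_{\tilde{y}=i}}\hat{p}(y^*\smalleq j;\bm{x},\bm{\theta})$ with $p(y^*\smalleq j\vert\tilde{y}\smalleq i)$. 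That argument introduces an auxiliary quantity $\hat{p}(y^*\smalleq j;\bm{x},\bm{\theta})$ --- a per-example posterior over the latent label --- which the model does not actually output, and whose averaging into $p(y^*\smalleq j\vert\tilde{y}\smalleq i)$ is asserted rather than derived. You instead decompose the empirical sum combinatorially, partitioning $\bm{X}_{\tilde{y}=i}$ by the (deterministic but unobserved) true label, so the ideal condition immediately makes each inner summand constant and the mixture weights emerge as the count ratios $\lvert\bm{X}_{\tilde{y}=i,y^*=j}\rvert/\lvert\bm{X}_{\tilde{y}=i}\rvert$. Your version is more elementary and self-contained; its only interpretive commitment --- reading $p(y^*\smalleq j\vert\tilde{y}\smalleq i)$ as the empirical conditional induced by $\bm{X}$ --- is one you flag explicitly, and it is the same commitment the paper makes implicitly in its final step. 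The paper's version has the cosmetic advantage of staying entirely in probabilistic notation and making the role of the class-conditional noise assumption visible as a named step, but both proofs use exactly the same two ingredients (the ideal condition and the CNP assumption folded into it) and arrive at the identity the same way.
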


\begin{proof} \label{proof:ideal_threshold}
We use $t_i$ to denote the thresholds used to partition $\bm{X}$ into $m$ bins, each estimating one of $\bm{X}_{y^*}$. By definition,
\begin{equation*}
   \forall i \smallin [m], t_i =  \EX_{\bm{x} \smallin \bm{X}_{\tilde{y}=i}} \hat{p}(\tilde{y} = i ; \bm{x}, \bm{\theta})
\end{equation*}
For any $t_i$, we show the following.
\begin{align*}
    t_i &=  \EXlimits_{\bm{x} \smallin \bm{X}_{\tilde{y} \smalleq i}} \sum_{j \smallin [m]} \hat{p}(\tilde{y} \smalleq i \vert y^* \smalleq j; \bm{x}, \bm{\theta}) \hat{p}(y^* \smalleq j; \bm{x}, \bm{\theta}) && \triangleright \text{Bayes Rule} \\
    t_i &=   \EXlimits_{\bm{x} \smallin \bm{X}_{\tilde{y} \smalleq i}} \sum_{j \smallin [m]} \hat{p}(\tilde{y} \smalleq i \vert y^* \smalleq j) \hat{p}(y^* \smalleq j; \bm{x}, \bm{\theta}) && \triangleright \text{Class-conditional Noise Process (CNP)} \\
    t_i &=   \sum_{j \smallin [m]} \hat{p}(\tilde{y} \smalleq i \vert y^* \smalleq j) \EXlimits_{\bm{x} \smallin \bm{X}_{\tilde{y} \smalleq i}}  \hat{p}(y^* \smalleq j; \bm{x}, \bm{\theta}) \\
    t_i &=   \sum_{j \smallin [m]} p(\tilde{y} = i \vert y^* = j) p(y^* = j \vert \tilde{y} = i) && \triangleright \text{Ideal Condition} 
\end{align*}
This form of the threshold is intuitively reasonable: the contributions to the sum when $i=j$ represents the probabilities of correct labeling, whereas when $i\neq j$, the terms give the probabilities of mislabeling $p(\tilde{y} = i \vert y^* = j)$, weighted by the probability $p(y^* = j \vert \tilde{y} = i)$ that the mislabeling is corrected.
\end{proof}

\begin{theorem}[Exact Label Errors] 
For a noisy dataset, $\bm{X} \coloneqq (\bm{x}, \tilde{y})^n \smallin {(\mathbb{R}^d, [m])}^n$ and model $\bm{\theta}{\scriptstyle:} \bm{x} \smallra \hat{p}(\tilde{y})$, 
if $\hat{p}(\tilde{y} ; \bm{x}, \bm{\theta})$ is \emph{ideal} and each diagonal entry of $\nm$ maximizes its row and column, 
then $\hat{\bm{X}}_{ \tilde{y} \smalleq i,y^* \smalleq j} = \bm{X}_{\tilde{y} \smalleq i, y^* \smalleq j}$ and $\estjoint \approxeq \joint$ (consistent estimator for $\joint$).
\end{theorem}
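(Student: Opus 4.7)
The plan is to show the set equality $\hat{\bm{X}}_{\tilde{y}\smalleq i, y^* \smalleq j} = \bm{X}_{\tilde{y}\smalleq i, y^* \smalleq j}$ first, then derive the joint consistency as a direct corollary via the calibration formula (Eqn.~\ref{eqn_calibration}). The key simplification driving everything is that under the ideal condition, $\hat{p}(\tilde{y} \smalleq l; \bm{x}_k, \bm{\theta})$ depends only on the latent class $y_k^*$, not on $\bm{x}_k$ itself, so all examples sharing the same true label have identical score vectors.

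I would first invoke Lemma~\ref{lemma:ideal_threshold} to rewrite the threshold in closed form, $t_l = \sum_{k \smallin [m]} p(\tilde{y}\smalleq l \vert y^* \smalleq k) \, p(y^* \smalleq k \vert \tilde{y} \smalleq l)$. Then I would establish the forward inclusion $\bm{X}_{\tilde{y}\smalleq i, y^* \smalleq j} \subseteq \hat{\bm{X}}_{\tilde{y}\smalleq i, y^* \smalleq j}$ as follows. Pick $\bm{x} \smallin \bm{X}_{\tilde{y}\smalleq i, y^* \smalleq j}$. By the ideal condition, $\hat{p}(\tilde{y}\smalleq l; \bm{x}, \bm{\theta}) = p(\tilde{y}\smalleq l \vert y^* \smalleq j)$ for every $l$. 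The row-maximality of the diagonal of $\nm$ gives $\argmax_l p(\tilde{y}\smalleq l \vert y^* \smalleq j) = j$, so the predicted class is $j$. It remains to verify the threshold inequality $p(\tilde{y}\smalleq j \vert y^* \smalleq j) \geq t_j$; this is the step that invokes the second half of the assumption, column-maximality of the diagonal, since
\begin{equation*}
t_j = \sum_{k \smallin [m]} p(\tilde{y}\smalleq j \vert y^* \smalleq k) \, p(y^* \smalleq k \vert \tilde{y} \smalleq j) \leq p(\tilde{y}\smalleq j \vert y^* \smalleq j) \sum_{k \smallin [m]} p(y^* \smalleq k \vert \tilde{y} \smalleq j) = p(\tilde{y}\smalleq j \vert y^* \smalleq j),
\end{equation*}
using that the conditionals sum to one. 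Hence $\bm{x}$ is placed into $\hat{\bm{X}}_{\tilde{y}\smalleq i, y^* \smalleq j}$.

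For the reverse inclusion, I would argue by contradiction: if $\bm{x} \smallin \hat{\bm{X}}_{\tilde{y}\smalleq i, y^* \smalleq j}$ but $y^*(\bm{x}) = k \neq j$, then the ideal condition implies the argmax over $l$ of $\hat{p}(\tilde{y}\smalleq l; \bm{x}, \bm{\theta})$ is $k$, not $j$ (again by row-maximality), so by the argmax tie-breaking rule in Eqn.~\ref{eqn_paper_confident_joint}, $\bm{x}$ would be assigned to $\hat{\bm{X}}_{\tilde{y}\smalleq i, y^* \smalleq k}$, contradicting the assumption. Note that under ideal probabilities, collisions become trivial: every example in $\bm{X}_{y^* \smalleq j}$ produces the exact same predicted-probability vector, so the argmax uniformly chooses the true class $j$. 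Combining both inclusions yields the set equality.

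Finally, for $\estjoint \approxeq \joint$, I would observe that the set equality implies $\cj[i][j] = |\bm{X}_{\tilde{y}\smalleq i, y^* \smalleq j}|$ exactly. Substituting this into the calibration formula (Eqn.~\ref{eqn_calibration}) and noting that $\sum_j \cj[i][j] = |\bm{X}_{\tilde{y}\smalleq i}|$ (so the row-normalization leaves the counts intact), one gets $\estjointlong = |\bm{X}_{\tilde{y}\smalleq i, y^* \smalleq j}| / n$, which matches $\jointlong$ up to the discretization error noted in the preamble to the theorem. The main obstacle is the threshold-inequality step above: it is the one place where both the row- and column-maximality hypotheses on $\nm$ are essential, and it is the reason why the baseline $\bm{C}_{\text{confusion}}$ (which needs no threshold argument) would also satisfy this ideal-case theorem, whereas only $\cj$ survives the weaker hypotheses considered later in the paper.
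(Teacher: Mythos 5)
Your proposal is correct in substance and follows essentially the same route as the paper's proof: Lemma~\ref{lemma:ideal_threshold} plus diagonal maximality to get $p(\tilde{y}\smalleq j \vert y^*\smalleq j)\geq t_j$, the argmax rule to pin the assigned bin to the true class, and the calibration formula to pass from exact counts to $\estjoint \approxeq \joint$. One attribution error is worth fixing, though: with the paper's convention $\nmlong = p(\tilde{y}\smalleq i \vert y^*\smalleq j)$ (rows indexed by $\tilde{y}$, columns by $y^*$), the step $\argmax_l p(\tilde{y}\smalleq l \vert y^*\smalleq j) = j$ is \emph{column}-maximality (you vary the row index within a fixed column), while the threshold inequality $p(\tilde{y}\smalleq j \vert y^*\smalleq k)\leq p(\tilde{y}\smalleq j \vert y^*\smalleq j)$ is \emph{row}-maximality---exactly the opposite of your labels. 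Your closing remark that the threshold inequality is "the one place where both" hypotheses are essential is therefore also off: each hypothesis is used in exactly one place (row-maximality for the threshold, column-maximality for the argmax/collision handling), and getting this right matters because Corollary~\ref{cor:per_class_robustness} and Theorem~\ref{thm:robustness} later drop column-maximality precisely because they exclude collisions, keeping only the row condition that your threshold step actually needs. Since Theorem~\ref{thm:exact_label_errors} assumes both conditions, your proof still goes through as written.
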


\begin{proof} \label{proof:exact_label_errors}
Alg. \ref{alg:cj} defines the construction of the confident joint. We consider Case 1: when there are collisions (trivial by the construction of Alg. \ref{alg:cj}) and case 2: when there are no collisions (harder).

\bigskip
\quad \emph{Case 1 (collisions)}: \\
When a collision occurs, by the construction of the confident joint (Eqn. \ref{eqn_paper_confident_joint}), a given example $\bm{x}_k$ gets assigned bijectively into bin $$\bm{x}_k \in \hat{\bm{X}}_{ \tilde{y} ,y^*}[{\tilde{y}}_k][\argmax_{i \in [m]} \hat{p}(\tilde{y} = i; \bm{x}, \bm{\theta})]$$
Because we have that $\hat{p}(\tilde{y} ; \bm{x}, \bm{\theta})$ is ideal, we can rewrite this as 
$$\bm{x}_k \in \hat{\bm{X}}_{ \tilde{y} ,y^*}[{\tilde{y}}_k][\argmax_{i \in [m]} \hat{p}(\tilde{y} = i \vert y^* \smalleq y_k^*; \bm{x})]$$
And because by assumption each diagonal entry in $\nm$ maximizes its column, we have
$$\bm{x}_k \in \hat{\bm{X}}_{ \tilde{y} ,y^*}[{\tilde{y}}_k][y_k^*]$$

\noindent Thus, any example $\bm{x} \in \bm{X}_{\tilde{y} \smalleq i, y^* \smalleq j}$ having a collision will be exactly assigned to $\hat{\bm{X}}_{ \tilde{y} \smalleq i,y^* \smalleq j} $.


\bigskip
\quad \emph{Case 2 (no collisions)}: \\



\noindent We want to show that $\forall i \smallin [m], j \smallin [m], \hat{\bm{X}}_{ \tilde{y} \smalleq i,y^* \smalleq j} = \bm{X}_{\tilde{y} \smalleq i, y^* \smalleq j} $.

\noindent We can partition $\bm{X}_{\tilde{y} \smalleq i}$ as
\begin{equation*}
    \bm{X}_{\tilde{y} \smalleq i} = \bm{X}_{\tilde{y} \smalleq i, y^* = j} \cup \bm{X}_{\tilde{y} \smalleq i, y^* \neq j}
\end{equation*}

\noindent We prove $\forall i \smallin [m], j \smallin [m], \hat{\bm{X}}_{ \tilde{y} \smalleq i,y^* \smalleq j} = \bm{X}_{\tilde{y} \smalleq i, y^* \smalleq j}$ by proving two claims:

\qquad \textbf{Claim 1}: $\bm{X}_{ \tilde{y} \smalleq i,y^* \smalleq j} \subseteq \hat{\bm{X}}_{\tilde{y} \smalleq i, y^* \smalleq j}$

\qquad \textbf{Claim 2}: $\bm{X}_{ \tilde{y} \smalleq i,y^* \neq j} \nsubseteq \hat{\bm{X}}_{\tilde{y} \smalleq i, y^* \smalleq j}$

We do not need to show $\bm{X}_{ \tilde{y} \neq i,y^* \smalleq j} \nsubseteq \hat{\bm{X}}_{\tilde{y} \smalleq i, y^* \smalleq j}$ and $\bm{X}_{ \tilde{y} \neq i,y^* \neq j} \nsubseteq \hat{\bm{X}}_{\tilde{y} \smalleq i, y^* \smalleq j}$ because the noisy labels $\tilde{y}$ are given, thus the confident joint (Eqn. \ref{eqn_paper_confident_joint}) will never place them in the wrong bin of $\hat{\bm{X}}_{ \tilde{y} \smalleq i,y^* \smalleq j}$. Thus, claim 1 and claim 2 suffice to show that $\hat{\bm{X}}_{ \tilde{y} \smalleq i,y^* \smalleq j} = \bm{X}_{\tilde{y} \smalleq i, y^* \smalleq j}$.

\paragraph{\emph{Proof (Claim 1) of Case 2}:}

Inspecting Eqn. (\ref{eqn_paper_confident_joint}) and Alg (\ref{alg:cj}), by the construction of $\cj$, we have that $\forall \bm{x} \in \bm{X}_{\tilde{y} = i}$,\;  $\hat{p} (\tilde{y} = j \vert y^* \smalleq j; \bm{x}, \bm{\theta}) \ge t_j \longrightarrow \bm{X}_{ \tilde{y} \smalleq i,y^* \smalleq j} \subseteq \hat{\bm{X}}_{\tilde{y} \smalleq i, y^* \smalleq j}$. When the left-hand side is true, all examples with noisy label $i$ and hidden, true label $j$ are counted in $\hat{\bm{X}}_{\tilde{y} \smalleq i, y^* \smalleq j}$.

Thus, it suffices to prove:

\begin{equation} \label{alg:case1_condition}
    \forall \bm{x} \in \bm{X}_{\tilde{y} = i}, \hat{p} (\tilde{y} = j \vert y^* \smalleq j; \bm{x}, \bm{\theta}) \ge t_j  
\end{equation}

Because the predicted probabilities satisfy the ideal condition, $\hat{p} (\tilde{y} = j \vert y^* \smalleq j, \bm{x}) = p (\tilde{y} = j \vert y^* \smalleq j), \forall \bm{x} \in \bm{X}_{\tilde{y} = i}$. Note the change from predicted probability, $\hat{p}$, to an exact probability, $p$. Thus by the ideal condition, the inequality in (\ref{alg:case1_condition}) can be written as $ p (\tilde{y} = j \vert y^* \smalleq j) \ge t_j$, which we prove below:

\begin{align*}
    p (\tilde{y} = j \vert y^* \smalleq j) &\geq p (\tilde{y} = j \vert y^* \smalleq j) \cdot 1 && \triangleright \text{Identity} \\
    &\geq p(\tilde{y} = j \vert y^* \smalleq j) \cdot \sum_{i \smallin [m]} p(y^* \smalleq i \vert \tilde{y} \smalleq j)  \\
    &\geq \sum_{i \smallin [m]} p(\tilde{y} = j \vert y^* \smalleq j) \cdot p(y^* \smalleq i \vert \tilde{y} \smalleq j) && \triangleright \text{move product into sum}   \\
    &\geq \sum_{i \smallin [m]} p(\tilde{y} = j \vert y^* \smalleq i) \cdot p(y^* \smalleq i \vert \tilde{y} \smalleq j) && \triangleright \text{diagonal entry maximizes row}   \\
    &\geq t_j  && \triangleright \text{Lemma 1, ideal condition} 
\end{align*}

\paragraph{\emph{Proof (Claim 2) of Case 2}:}

We prove $\bm{X}_{ \tilde{y} \smalleq i,y^* \neq j} \nsubseteq \hat{\bm{X}}_{\tilde{y} \smalleq i, y^* \smalleq j}$ by contradiction. Assume there exists some example $\bm{x}_k \in \bm{X}_{ \tilde{y} \smalleq i,y^* \smalleq z}$ for $z \neq j$ such that $ \bm{x}_k \in \hat{\bm{X}}_{\tilde{y} \smalleq i, y^* \smalleq j}$. By claim 1, we have that $\bm{X}_{ \tilde{y} \smalleq i,y^* \smalleq j} \subseteq \hat{\bm{X}}_{\tilde{y} \smalleq i, y^* \smalleq j}$, therefore, $\bm{x}_k \in \hat{\bm{X}}_{ \tilde{y} \smalleq i,y^* \smalleq z}$.

Thus, for some example $\bm{x}_k$, we have that $\bm{x}_k \in \hat{\bm{X}}_{\tilde{y} \smalleq i, y^* \smalleq j}$ and also $\bm{x}_k \in \hat{\bm{X}}_{ \tilde{y} \smalleq i,y^* \smalleq z}$.

However, this is a collision and when a collision occurs, the confident joint will break the tie with $\argmax$. Because each diagonal entry of $\nm$ maximizes its row and column this will always be assign $\bm{x}_k \in \hat{\bm{X}}_{ \tilde{y} ,y^*}[{\tilde{y}}_k][y_k^*]$ (the assignment from Claim 1).

This theorem also states $\estjoint \approxeq \joint$. This directly follows directly from the fact that $\forall i \smallin [m], j \smallin [m], \hat{\bm{X}}_{ \tilde{y} \smalleq i,y^* \smalleq j} = \bm{X}_{\tilde{y} \smalleq i, y^* \smalleq j} $, i.e. the confident joint \emph{exactly counts} the partitions $\bm{X}_{\tilde{y} \smalleq i, y^* \smalleq j}$ for all pairs $(i, j) \in [m] \times M$, thus $\bm{C}_{\tilde{y}, y^*} = n \joint$ and $\estjoint \approxeq \joint$. Omitting discretization error, the confident joint $\cj$, when normalized to $\estjoint$, is an exact estimator for $\joint$. 
For example, if the noise rate is $0.39$, but the dataset has only 5 examples in that class, the best possible estimate by removing errors is $2/5 = 0.4  \approxeq 0.39$.

\end{proof}


\addtocounter{corollary}{-1}
\begin{corollary}[Exact Estimation] \label{cor:consistent_estimation}
For a noisy dataset, $(\bm{x}, \tilde{y})^n \in {(\mathbb{R}^d, [m])}^n$ and $\bm{\theta}{\scriptstyle:} \bm{x} \smallra \hat{p}(\tilde{y})$, 
if $\hat{p}(\tilde{y} ; \bm{x}, \bm{\theta})$ is ideal and each diagonal entry of $\nm$ maximizes its row and column, and if $\hat{\bm{X}}_{ \tilde{y} \smalleq i,y^* \smalleq j} = \bm{X}_{\tilde{y} \smalleq i, y^* \smalleq j}$, then $\estjoint \approxeq \joint$.
\end{corollary}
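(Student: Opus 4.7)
The plan is to substitute the hypothesis $\estpartition = \partition$ into the calibration formula (Eqn. \ref{eqn_calibration}) and verify that the result collapses to the empirical joint distribution, which matches $\joint$ up to the inherent sampling discretization captured by $\approxeq$.

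First I would take cardinalities of the given equality to obtain $\cjlong = \lvert \partition \rvert$ directly from the definition in Eqn. \ref{eqn_paper_confident_joint}. Next, because every example in $\bm{X}_{\tilde{y}\smalleq i}$ has some (unique) latent label $y^* \smallin [m]$, the family $\{\partition : j \smallin [m]\}$ is a partition of $\bm{X}_{\tilde{y}\smalleq i}$, and so the row sum satisfies $\sum_{j \smallin [m]} \cjlong = \lvert \bm{X}_{\tilde{y}\smalleq i} \rvert$. Summing once more over $i$ yields $\sum_{i,j \smallin [m]} \cjlong = \sum_{i \smallin [m]} \lvert \bm{X}_{\tilde{y}\smalleq i} \rvert = n$.

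Plugging these two identities into Eqn. \ref{eqn_calibration}, the inner ratio in the numerator reduces to $\cjlong / \lvert \bm{X}_{\tilde{y}\smalleq i} \rvert$, which when multiplied by $\lvert \bm{X}_{\tilde{y}\smalleq i} \rvert$ leaves just $\cjlong$; the denominator collapses in the same way to $\sum_{i,j} \cjlong = n$. Hence $\estjointlong = \lvert \partition \rvert / n$, which is exactly the empirical fraction of training examples whose noisy label is $i$ and whose true label is $j$, i.e., the plug-in (maximum-likelihood) estimator of $p(\tilde{y}\smalleq i, y^*\smalleq j) = \jointlong$.

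The only subtlety, and what the relation $\approxeq$ is defined to absorb, is that $\lvert \partition \rvert / n$ lives on the lattice $\{0, 1/n, \dots, 1\}$ while $\jointlong$ is a real-valued probability, giving an unavoidable gap of at most $1/n$ that vanishes as $n \to \infty$. Because the hypothesis delivers \emph{exact} partition recovery, CL itself introduces no additional error on top of this sampling discretization, so $\estjoint \approxeq \joint$ in the precise sense laid out in the preamble to the theorems. I do not anticipate a genuine obstacle here; the corollary is essentially the bookkeeping step that converts the set-level conclusion of Thm.~\ref{thm:exact_label_errors} into a statement about the calibrated joint, with the row-marginal and total-mass identities above doing all of the work.
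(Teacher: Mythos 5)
Your proposal is correct and follows essentially the same route as the paper's proof, which simply notes that exact partition recovery means $\cj$ exactly counts each $\bm{X}_{\tilde{y}\smalleq i, y^*\smalleq j}$, so $\cj = n\joint$ up to discretization and the calibrated $\estjoint \approxeq \joint$. You spell out the collapse of the calibration formula (row sums equal $\lvert \bm{X}_{\tilde{y}\smalleq i}\rvert$, total mass $n$) in more detail than the paper, which asserts this directly, but the substance is identical.
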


\begin{proof} \label{proof:consistent_estimation}
The result follows directly from  Theorem \ref{thm:exact_label_errors}.
Because the confident joint \emph{exactly counts} the partitions $\bm{X}_{\tilde{y} \smalleq i, y^* \smalleq j}$ for all pairs $(i, j) \in [m] \times M$ by Theorem \ref{thm:exact_label_errors}, $\bm{C}_{\tilde{y}, y^*} = n \joint$, omitting discretization rounding errors.
\end{proof}

In the main text, Theorem \ref{thm:exact_label_errors} includes Corollary \ref{cor:consistent_estimation} for brevity. We have separated out Corollary \ref{cor:consistent_estimation} here to make apparent that the primary contribution of Theorem \ref{thm:exact_label_errors} is to prove $\hat{\bm{X}}_{ \tilde{y} \smalleq i,y^* \smalleq j} = \bm{X}_{\tilde{y} \smalleq i, y^* \smalleq j}$, from which the result of Corollary \ref{cor:consistent_estimation}, namely that $\estjoint \approxeq \joint$ naturally follows, omitting discretization rounding errors.

\begin{corollary}[Per-Class Robustness] 
For a noisy dataset, $\bm{X} \coloneqq (\bm{x}, \tilde{y})^n \smallin {(\mathbb{R}^d, [m])}^n$ and model $\bm{\theta}{\scriptstyle:} \bm{x} \smallra \hat{p}(\tilde{y})$, 
if $\predprobshortj$ is \textbf{per-class diffracted} without label collisions and each diagonal entry of $\nm$ maximizes its row, then $\hat{\bm{X}}_{ \tilde{y} \smalleq i,y^* \smalleq j} = \bm{X}_{\tilde{y} \smalleq i, y^* \smalleq j}$ and $\estjoint \approxeq \joint$.
\end{corollary}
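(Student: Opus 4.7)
The plan is to reduce the per-class-diffracted setting to the \emph{ideal} setting covered by Theorem \ref{thm:exact_label_errors}, by observing that the threshold $t_j$ absorbs exactly the same class-wise affine distortion as $\predprobshortj$, so every inequality that defines $\estpartition$ is preserved.

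First I would compute the threshold explicitly under per-class diffraction. Substituting $\predprobshortj = \epsilon_j^{(1)} \perfprobshortj + \epsilon_j^{(2)}$ into Eqn. (\ref{eqn_paper_threshold}) and pulling the class-indexed constants out of the average gives
\begin{equation*}
    t_j \;=\; \epsilon_j^{(1)} \, t_j^{\star} + \epsilon_j^{(2)},
\end{equation*}
where $t_j^{\star}$ denotes the ideal threshold from Lemma \ref{lemma:ideal_threshold}. Therefore the membership test $\predprobshortj \geq t_j$ rearranges to $\epsilon_j^{(1)}\!\left(\perfprobshortj - t_j^{\star}\right) \geq 0$, which (taking $\epsilon_j^{(1)} > 0$ as the sign convention for a probability-scaling perturbation) is equivalent to $\perfprobshortj \geq t_j^{\star}$. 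In other words, the bin assignment produced by the confident joint under per-class diffraction coincides with the bin assignment it would produce under the ideal condition.

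With this reduction in hand, the rest is a replay of the no-collision case of Theorem \ref{thm:exact_label_errors}, split into the two containments $\partition \subseteq \estpartition$ and $\bm{X}_{\tilde{y}\smalleq i, y^*\neq j} \nsubseteq \estpartition$. For the first, I would note that for any $\bm{x} \in \bm{X}_{\tilde{y}\smalleq i, y^*\smalleq j}$ the ideal probability satisfies $\perfprobshortj = p(\tilde{y}\smalleq j \mid y^*\smalleq j) \geq t_j^{\star}$ — this is exactly the chain of inequalities in the proof of Theorem \ref{thm:exact_label_errors} that uses the row-maximization of $\nm$ together with Lemma \ref{lemma:ideal_threshold}, and by the reduction above it transfers verbatim. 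For the second containment, the hypothesis of no label collisions does the work directly: any putative offending example would have to be simultaneously above threshold for two distinct classes, which is the definition of a collision. Critically, the column-maximization condition used in Theorem \ref{thm:exact_label_errors} is needed only to break ties via $\argmax$ inside the collision case, so removing collisions lets us drop that hypothesis, which is precisely the strengthening claimed by the corollary.

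Finally, once $\estpartition = \partition$ has been established for every pair $(i,j)$, the confident joint $\cj$ exactly counts the true partitions of $\bm{X}$, so the calibration in Eqn. (\ref{eqn_calibration}) yields $\estjoint \approxeq \joint$ up to discretization error exactly as in the consistency argument following Theorem \ref{thm:exact_label_errors}. The main obstacle I foresee is handling the degenerate signs of $\epsilon_j^{(1)}$: if $\epsilon_j^{(1)} < 0$ the inequality direction flips, and if $\epsilon_j^{(1)} = 0$ the diffracted probabilities are constant within a class and every threshold test is a tie. The cleanest treatment is to read the per-class-diffracted condition as implicitly requiring $\epsilon_j^{(1)} > 0$ (the natural regime for a scaling distortion of a probability), after which the argument is a largely mechanical rerun of Theorem \ref{thm:exact_label_errors}.
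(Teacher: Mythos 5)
Your proposal is correct and follows essentially the same route as the paper's proof: compute the diffracted threshold $t_j = \epsilon_j^{(1)} t_j^{\star} + \epsilon_j^{(2)}$, observe that the class-wise affine distortion cancels from both sides of the membership test so the bins coincide with the ideal ones, and then invoke Theorem \ref{thm:exact_label_errors}, dropping the column-maximization hypothesis because it is only used to break collisions. Your explicit handling of the sign of $\epsilon_j^{(1)}$ (requiring $\epsilon_j^{(1)} > 0$ so the inequality direction is preserved) is in fact more careful than the paper, which silently cancels $\epsilon_j^{(1)}$ from both sides of the inequality without addressing the degenerate or negative cases.
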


\begin{proof} \label{proof:per_class_robustness}
Re-stating the meaning of \textbf{per-class diffracted}, we wish to show that if $\hat{p}(\tilde{y} ; \bm{x}, \bm{\theta})$ is diffracted with class-conditional noise s.t. $\forall j \smallin [m], \hat{p}(\tilde{y} = j ; \bm{x}, \bm{\theta}) = \epsilon_j^{(1)} \cdot p^*(\tilde{y} = j \vert y^* \smalleq y_k^*) + \epsilon_j^{(2)}$ where $\epsilon_j^{(1)} \in \mathcal{R}, \epsilon_j^{(2)} \in \mathcal{R}$ (for any distribution) without label collisions and each diagonal entry of $\nm$ maximizes its row, then $\hat{\bm{X}}_{ \tilde{y} \smalleq i,y^* \smalleq j} = \bm{X}_{\tilde{y} \smalleq i, y^* \smalleq j}$ and $\estjoint \approxeq \joint$.

First note that combining linear combinations of real-valued $\epsilon_j^{(1)}$ and  $\epsilon_j^{(2)}$ with the probabilities of class $j$ for each example may result in some examples having $\predprobshortj = \epsilon_j^{(1)} \perfprobshortj + \epsilon_j^{(2)} > 1$ or  $\predprobshortj = \epsilon_j^{(1)} \perfprobshortj + \epsilon_j^{(2)} < 0$. The proof makes no assumption about the validity of the model outputs and therefore holds when this occurs. Furthermore, confident learning does not require valid probabilities when finding label errors because confident learning depends on the \emph{rank} principle, i.e., the rankings of the probabilities, not the values of the probabilities.

When there are no label collisions, the bins created by the confident joint are:
\begin{equation} \label{eqn_robustness_cj_in_proof}
    \hat{\bm{X}}_{ \tilde{y} \smalleq i,y^* \smalleq j} \coloneqq  \{\bm{x} \in \bm{X}_{\tilde{y} = i} :  \hat{p}(\tilde{y}=j; \bm{x}, \bm{\theta})  \ge t_j \}
\end{equation}

where 
\begin{equation*}
    t_j = \mathop{\mathbb{E}}_{\bm{x} \in \bm{X}_{\tilde{y}=j}} \predprobshortj
\end{equation*}

WLOG: we re-formulate the error $\epsilon_j^{(1)} \perfprobshortj + \epsilon_j^{(2)} \; \text{as} \; \epsilon_j^{(1)} (\perfprobshortj + \epsilon_j^{(2)})$. 

Now, for diffracted (non-ideal) probabilities, we rearrange how the threshold $t_j$ changes for a given $\epsilon_j^{(1)}, \epsilon_j^{(2)}$:
\begin{align*}
    t_j^{\epsilon_j} &= \mathop{\mathbb{E}}_{\bm{x} \in \bm{X}_{\tilde{y}=j}} \epsilon_j^{(1)} (\perfprobshortj + \epsilon_j^{(2)}) \\
    t_j^{\epsilon_j} &= \epsilon_j^{(1)} \left(\mathop{\mathbb{E}}_{\bm{x} \in \bm{X}_{\tilde{y}=j}}  \perfprobshortj + \mathop{\mathbb{E}}_{\bm{x} \in \bm{X}_{\tilde{y}=j}} \epsilon_j^{(2)} \right)\\
    t_j^{\epsilon_j} &= \epsilon_j^{(1)} \left(t^*_j + \epsilon_j^{(2)} \cdot \mathop{\mathbb{E}}_{\bm{x} \in \bm{X}_{\tilde{y}=j}} 1 \right)\\
    t_j^{\epsilon_j} &= \epsilon_j^{(1)} (t^*_j + \epsilon_j^{(2)})
\end{align*}

Thus, for per-class diffracted (non-ideal) probabilities, Eqn. (\ref{eqn_robustness_cj_in_proof}) becomes
\begin{align*} 
    {\hat{\bm{X}}_{ \tilde{y} \smalleq i,y^* \smalleq j}}^{\epsilon_j} &=  \{\bm{x} \in \bm{X}_{\tilde{y} = i} :  \epsilon_j^{(1)} (\perfprobshortj + \epsilon_j^{(2)}) \ge \epsilon_j^{(1)} (t^*_j + \epsilon_j^{(2)})\} \\
     &= \{\bm{x} \in \bm{X}_{\tilde{y} = i} :  \perfprobshortj  \ge t^*_j \}  \\
     &= \bm{X}_{ \tilde{y} \smalleq i,y^* \smalleq j} && \triangleright \text{by Theorem (\ref{thm:exact_label_errors})}
\end{align*}

In the second to last step, we see that the formulation of the label errors is the formulation of $\cj$ for $\emph{ideal}$ probabilities, which we proved yields exact label errors and consistent estimation of $\joint$ in Theorem \ref{thm:exact_label_errors}, which concludes the proof. Note that we eliminate the need for the assumption that each diagonal entry of $\nm$ maximizes its column because this assumption is only used in the proof of Theorem \ref{thm:exact_label_errors} when collisions occur, but here we only consider the case when there are no collisions. 

\end{proof}

\begin{theorem}[Per-Example Robustness] 
For a noisy dataset, $\bm{X} \coloneqq (\bm{x}, \tilde{y})^n \in {(\mathbb{R}^d, [m])}^n$ and model $\bm{\theta}{\scriptstyle:} \bm{x} \smallra \hat{p}(\tilde{y})$, 
if $\predprobshortj$ is \textbf{per-example diffracted} without label collisions and each diagonal entry of $\nm$ maximizes its row, then $\hat{\bm{X}}_{ \tilde{y} \smalleq i,y^* \smalleq j} \approxeq  \bm{X}_{\tilde{y} \smalleq i, y^* \smalleq j}$ and $\estjoint \approxeq \joint$.
\end{theorem}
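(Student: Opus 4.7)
The plan is to reduce Theorem \ref{thm:robustness} to the ideal-probability setting of Theorem \ref{thm:exact_label_errors}. The per-example diffraction is crafted so that, example-by-example, the decision $\predprobshortj \geq t_j$ used to form $\cj$ agrees with the corresponding decision made under ideal probabilities. Once this decision equivalence is in place, the partition of $\bm{X}$ produced by $\cj$ is identical to the ideal one up to sample-level discretization, so replaying the Claim 1 / Claim 2 structure from Theorem \ref{thm:exact_label_errors} (Case 2, no collisions) yields both $\hat{\bm{X}}_{\tilde{y}=i,y^*=j} \approxeq \bm{X}_{\tilde{y}=i,y^*=j}$ and, by the calibration in \eqref{eqn_calibration}, $\estjoint \approxeq \joint$.

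First I would compute how diffraction shifts the CL threshold. Because $\errorxj$ is uniform with midpoint $\epsilon_j$ (the average of the two range endpoints), its expectation is exactly $\epsilon_j$, so averaging over $\bm{x} \in \bm{X}_{\tilde{y}=j}$ gives $t_j = t_j^{*} + \epsilon_j$ up to a sample-mean fluctuation, where $t_j^{*} \coloneqq \mathbb{E}_{\bm{x} \in \bm{X}_{\tilde{y}=j}} \perfprobshortj$ is the ideal threshold of Lemma \ref{lemma:ideal_threshold}. Next I would verify decision-preservation. In the case $\perfprobshortj \geq t_j^{*}$, the strict lower bound on the uniform range gives $\errorxj > \epsilon_j + t_j^{*} - \perfprobshortj$, so $\predprobshortj = \perfprobshortj + \errorxj > t_j^{*} + \epsilon_j = t_j$; symmetrically, when $\perfprobshortj < t_j^{*}$, the strict upper bound forces $\predprobshortj < t_j$. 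Hence the diffracted decision $\predprobshortj \geq t_j$ exactly mirrors the ideal decision $\perfprobshortj \geq t_j^{*}$ for every example.

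With decision equivalence secured, Claim 1 of Theorem \ref{thm:exact_label_errors} transfers directly: for any $\bm{x} \in \bm{X}_{\tilde{y}=i, y^*=j}$, the diagonal-row-maximum hypothesis on $\nm$ combined with Lemma \ref{lemma:ideal_threshold} gives $\perfprobshortj \geq t_j^{*}$, which by the equivalence yields $\predprobshortj \geq t_j$, placing $\bm{x} \in \hat{\bm{X}}_{\tilde{y}=i, y^*=j}$. For the reverse inclusion, the no-collision hypothesis replaces the argmax tie-break used in Theorem \ref{thm:exact_label_errors}: if a point $\bm{x} \in \bm{X}_{\tilde{y}=i, y^*=z}$ with $z \neq j$ were placed in $\hat{\bm{X}}_{\tilde{y}=i, y^*=j}$, Claim 1 applied at its true class $z$ would also place it in $\hat{\bm{X}}_{\tilde{y}=i, y^*=z}$, a collision contradicting the hypothesis. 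The consistency $\estjoint \approxeq \joint$ then follows from the calibration in \eqref{eqn_calibration}, with the $\approxeq$ absorbing the gap between the empirical threshold and its expectation.

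The main obstacle is the self-referential appearance of $t_j$ in the definition of $\errorxj$: the range endpoints, the case split, and the empirical CL threshold all invoke ``$t_j$'', yet $t_j$ itself is determined by the realized errors. The resolution is structural rather than analytic. Choosing the uniform midpoint equal to $\epsilon_j$ makes $t_j^{*} + \epsilon_j$ a fixed point of the threshold computation, and the range widths $2|\perfprobshortj - t_j|$ are calibrated so that the worst-case realization of $\errorxj$ still cannot push $\predprobshortj$ across $t_j$ in the wrong direction. Any relaxation of either the midpoint match or the endpoint calibration would admit examples whose diffracted decision disagrees with the ideal one, and the clean reduction to Theorem \ref{thm:exact_label_errors} would fail.
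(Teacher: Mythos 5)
Your proposal is correct and follows essentially the same route as the paper: show the realized threshold shifts to $t_j^{*} + \epsilon_j$ (exactly only as $n \to \infty$, hence the $\approxeq$), prove example-wise decision equivalence $\predprobshortj \ge t_j \iff \perfprobshortj \ge t_j^{*}$ from the uniform error bounds, and then reduce to Theorem \ref{thm:exact_label_errors}. Your explicit handling of the self-referential role of $t_j$ and of Claim 2 via the no-collision hypothesis matches the paper's intent, just stated more carefully.
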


\begin{proof} \label{proof:robustness}

We consider the nontrivial real-world setting when a learning model $\bm{\theta}{\scriptstyle:} \bm{x} \smallra \hat{p}(\tilde{y})$ outputs erroneous, non-ideal predicted probabilities with an error term added for every example, across every class, such that $ \forall \bm{x} \in \bm{X}, \forall j \in [m], \; \predprobshortj = \perfprobshortj + \errorxj$. As a notation reminder $\perfprobshortj$ is shorthand for the ideal probabilities $p^*(\tilde{y} = j \vert y^* = y_k^*) + \errorxj$ and $\predprobshortj$ is shorthand for the predicted probabilities $\hat{p}(\tilde{y}=j ; \bm{x}, \bm{\theta})$.

The predicted probability error $\errorxj$ is distributed uniformly with no other constraints. We use $\epsilon_j \in \mathcal{R}$ to represent the mean of $\errorxj$ per class, i.e. $\epsilon_j = \mathop{\mathbb{E}}_{\bm{x} \in \bm{X}} \errorxj$, which can be seen by looking at the form of the uniform distribution in Eqn. (\ref{eqn:piecewise_error}). If we wanted, we could add the constraint that $\epsilon_j = 0, \forall j \in [m]$ which would simplify the theorem and the proof, but is not as general and we prove exact label error and joint estimation without this constraint.

We re-iterate the form of the error in Eqn. (\ref{eqn:piecewise_error}) here ($\mathcal{U}$ denotes a uniform distribution):

\begin{equation*} 
    \errorxj \sim \begin{cases} 
      \mathcal{}{U}(\epsilon_j + t_j - \perfprobshortj \, , \, \epsilon_j - t_j + \perfprobshortj] & \perfprobshortj \geq t_j \\
      \mathcal{U}[\epsilon_j - t_j + \perfprobshortj \, , \, \epsilon_j + t_j - \perfprobshortj) & \perfprobshortj < t_j
   \end{cases}
\end{equation*}

When there are no label collisions, the bins created by the confident joint are:
\begin{equation} \label{eqn_robustness_cj_in_general_proof}
    \hat{\bm{X}}_{ \tilde{y} \smalleq i,y^* \smalleq j} \coloneqq  \{\bm{x} \in \bm{X}_{\tilde{y} = i} :  \predprobshortj  \ge t_j \}
\end{equation}
where 
\begin{equation*}
    t_j = \frac{1}{|\bm{}{X}_{\tilde{y}=j}|} \sum_{\bm{x} \in \bm{X}_{\tilde{y}=j}} \predprobshortj
\end{equation*}

Rewriting the threshold $t_j$ to include the error terms $\errorxj$ and $\epsilon_j$, we have
\begin{align*}
    t_j^{\epsilon_j} &= \frac{1}{|\bm{}{X}_{\tilde{y}=j}|} \sum_{\bm{x} \in \bm{X}_{\tilde{y}=j}}  \perfprobshortj + \errorxj \\
    t_j^{\epsilon_j} &= \mathop{\mathbb{E}}_{\bm{x} \in \bm{X}_{\tilde{y}=j}}   \perfprobshortj + \mathop{\mathbb{E}}_{\bm{x} \in \bm{X}_{\tilde{y}=j}} \errorxj\\
     &= t_j + \epsilon_j
\end{align*}
where the last step uses the fact that $\errorxj$ is uniformly distributed over $\bm{x} \in \bm{X}$ and $n \rightarrow \infty$ so that $\mathop{\mathbb{E}}_{\bm{x} \in \bm{X}_{\tilde{y}=j}} \errorxj = \mathop{\mathbb{E}}_{\bm{x} \in \bm{X}} \errorxj = \epsilon_j$. We now complete the proof by showing that
$$ \perfprobshortj + \errorxj \ge t_j + \epsilon_j \iff  \perfprobshortj  \ge t_j$$
If this statement is true then the subsets created by the confident joint in Eqn. \ref{eqn_robustness_cj_in_general_proof} are unaltered and therefore ${\hat{\bm{X}}_{ \tilde{y} \smalleq i,y^* \smalleq j}}^{\errorxj} =  {\hat{\bm{X}}_{ \tilde{y} \smalleq i,y^* \smalleq j}} \overset{Thm.\  \ref{thm:exact_label_errors}}{=} {\bm{X}_{ \tilde{y} \smalleq i,y^* \smalleq j}}$, where ${\hat{\bm{X}}_{ \tilde{y} \smalleq i,y^* \smalleq j}}^{\errorxj}$ denotes the confident joint subsets for $\errorxj$ predicted probabilities. 

Now we complete the proof. From the distribution for $\errorxj$ (Eqn. \ref{eqn:piecewise_error}) , we have that
\begin{align*}
     \perfprobshortj < t_j &\implies \errorxj < \epsilon_j + t_j -  \perfprobshortj \\
     \perfprobshortj \ge t_j &\implies \errorxj \ge \epsilon_j + t_j -  \perfprobshortj
\end{align*}
Re-arranging
\begin{align*}
     \perfprobshortj < t_j &\implies  \perfprobshortj + \errorxj < t_j + \epsilon_j \\
     \perfprobshortj \ge t_j &\implies  \perfprobshortj + \errorxj \ge t_j + \epsilon_j
\end{align*}
Using the contrapositive, we have
\begin{align*}
     \perfprobshortj + \errorxj \ge t_j + \epsilon_j &\implies  \perfprobshortj  \ge t_j \\
     \perfprobshortj  \ge t_j &\implies  \perfprobshortj + \errorxj \ge t_j + \epsilon_j
\end{align*}
Combining, we have
\begin{equation*}
    \perfprobshortj + \errorxj \ge t_j + \epsilon_j \iff  \perfprobshortj  \ge t_j
\end{equation*}
Therefore,
\begin{equation*}
    {\hat{\bm{X}}_{ \tilde{y} \smalleq i,y^* \smalleq j}}^{\errorxj} \overset{Thm.\ \ref{thm:exact_label_errors}}{=} {\bm{X}_{ \tilde{y} \smalleq i,y^* \smalleq j}}
\end{equation*}

The last line follows from the fact that we have reduced ${\hat{\bm{X}}_{ \tilde{y} \smalleq i,y^* \smalleq j}}^{\errorxj}$ to counting the same condition ($\perfprobshortj  \ge t_j$) as the confident joint counts under ideal probabilities in Thm (\ref{thm:exact_label_errors}). Thus, we maintain exact finding of label errors and exact estimation (Corollary \ref{cor:per_class_robustness}) holds under no label collisions. The proof applies for finite datasets because we ignore discretization error, however, for equality, the proof requires the assumption $n \rightarrow \infty$ which is used in this step: $\mathop{\mathbb{E}}_{\bm{x} \in \bm{X}_{\tilde{y}=j}} \errorxj \overset{n \rightarrow \infty}{=} \mathop{\mathbb{E}}_{\bm{x} \in \bm{X}} \errorxj = \epsilon_j$. Thus, we use approximately equals in the statement of the theorem. 


Note that while we use a uniform distribution in Eqn. \ref{eqn:piecewise_error}, any bounded symmetric distribution with mode $\epsilon_j = \mathop{\mathbb{E}}_{\bm{x} \in \bm{X}} \epsilon_{\bm{x}, j}$ is sufficient. Observe that the bounds of the distribution are non-vacuous (they do not collapse to a single value $e_j$) because $t_j \neq \perfprobshortj$ by Lemma \ref{lemma:ideal_threshold}.

\end{proof}

        \begin{algorithm}[H] 
        \caption{ (\textbf{Confident Joint}) for class-conditional label noise characterization. }
        \label{alg:cj}
        \begin{algorithmic} 
        \Statex \textbf{input} $\bm{\hat{P}}$ an $n \times m$ matrix of out-of-sample predicted probabilities $\bm{\hat{P}}[i][j] \coloneqq \hat{p}(\tilde{y} = j; x, \bm{\theta})$
        \Statex \textbf{input} $\bm{\tilde{y}} \in {\mathbb{N}_{\scriptscriptstyle \geq 0}}^n$, an $n \times 1$ array of noisy labels
        \Statex \textbf{procedure} {\sc ConfidentJoint}($\bm{\hat{P}}$, $\bm{\tilde{y}}$):
        
        \Statex {\sc \textbf{PART 1} (Compute thresholds)}
        \For {$j \gets 1,m$}  
        \For {$i \gets 1,n$}
        \State $l \gets$ new empty list []
        \If {$\bm{\tilde{y}}[i] = j$}
        \State append $\bm{\hat{P}}[i][j]$ to $l$
        \EndIf
        \EndFor
        \State $\bm{t}[j] \gets \text{average}(l)$     \Comment{May use percentile instead of average for more confidence}
        \EndFor
        
        
        \Statex {\sc \textbf{PART 2} (Compute confident joint)}
        \State $\bm{C} \gets$ $m \times m$ matrix of zeros
        \For {$i \gets 1,n$} 
        \State $cnt \gets 0$
        \For {$j \gets 1,m$}
        \If {$\bm{\hat{P}}[i][j] \geq \bm{t}[j] $}
        \State $cnt \gets cnt + 1$
        \State $y^* \gets j$   \Comment{guess of true label}
        \EndIf
        \EndFor
        \State $\tilde{y} \gets \bm{\tilde{y}}[i]$ 
        \If {$cnt > 1$}  \Comment{if label collision}
        \State $ y^* \gets \argmax{\bm{\hat{P}}[i] }$ 
        \EndIf
        \If {$cnt > 0$}
        \State $\bm{C}[\tilde{y}][y^*] \gets \bm{C}[\tilde{y}][y^*] + 1$
        \EndIf
        \Statex \textbf{output} $\bm{C}$, the {$\scriptstyle m \times m$} unnormalized counts matrix
        \EndFor
        \end{algorithmic}
        \end{algorithm}

\section{The confident joint and joint algorithms}

The confident joint is expressed succinctly in equation Eqn. \ref{eqn_paper_confident_joint} with the thresholds expressed in Eqn. \ref{eqn_paper_threshold}. For clarity, we provide these equations in algorithm form (See Alg. \ref{alg:cj} and Alg. \ref{alg:calibration}).

The confident joint algorithm (Alg. \ref{alg:cj}) is an $\mathcal{O}(m^2 + nm)$ step procedure to compute $\cj$. The algorithm takes two inputs: (1) $\bm{\hat{P}}$ an $n \times m$ matrix of out-of-sample predicted probabilities $\bm{\hat{P}}[i][j] \coloneqq \hat{p}(\tilde{y} = j; x_i, \bm{\theta})$ and (2) the associated array of noisy labels. We typically use cross-validation to compute $\bm{\hat{P}}$ for train sets and a model trained on the train set and fine-tuned with cross-validation on the test set to compute $\bm{\hat{P}}$ for a test set. Any method works as long  $\hat{p}(\tilde{y}=j; \bm{x}, \bm{\theta})$ are out-of-sample, holdout predicted probabilities.

\textbf{Computation time.}  Finding label errors in ImageNet takes 3 minutes on an i7 CPU. Results in all tables reproducible via open-sourced \texttt{cleanlab} package.

Note that Alg. \ref{alg:cj} embodies Eqn. \ref{eqn_paper_confident_joint}, and Alg. \ref{alg:calibration} realizes  Eqn. \ref{eqn_calibration}.

\begin{algorithm}[!htb]
\caption{ ( \textbf{Joint} ) calibrates the confident joint to estimate the latent, true distribution of class-conditional label noise}
\label{alg:calibration}
\begin{algorithmic} 
\Statex \textbf{input} $\cj[i][j]$, {$\scriptstyle m \times m$} unnormalized counts
\Statex \textbf{input} $\bm{\tilde{y}}$ an $n \times 1$ array of noisy integer labels
\Statex \textbf{procedure} {\sc JointEstimation}($\bm{C}$, $\bm{\tilde{y}}$):
\State $\tilde{\bm{C}}_{\tilde{y}=i, y^*=j} \gets \frac{\bm{C}_{\tilde{y}=i, y^*=j}}{\sum_{j \in [m]} \bm{C}_{\tilde{y}=i, y^*=j}} \cdot \lvert \bm{X}_{\tilde{y}=i} \rvert$ \Comment{calibrate marginals}
\State $\estjointlong \gets \frac{\tilde{\bm{C}}_{\tilde{y}=i, y^*=j}}{\sum\limits_{i \in [m], j \in [m]} \tilde{\bm{C}}_{\tilde{y}=i, y^*=j}}$ \Comment{joint sums to 1}
\Statex \textbf{output} $\estjoint$ joint dist. matrix $\sim p(\tilde{y}, y^*)$
\end{algorithmic}
\end{algorithm}

\section{Extended Comparison of Confident Learning Methods on CIFAR-10}

Fig. \ref{cifar10_abs_diff_ALL} shows the absolute difference of the true joint $\joint$ and the joint distribution estimated using confident learning $\estjoint$ on CIFAR-10, for 20\%, 40\%, and 70\% label noise, 20\%, 40\%, and 60\% sparsity, for all pairs of classes in the joint distribution of label noise. Observe that in moderate noise regimes between 20\% and 40\% noise, confident learning accurately estimates nearly every entry in the joint distribution of label noise. This figure serves to provide evidence for how confident learning identifies the label errors with high accuracy as shown in Table \ref{table:cifar10_benchmark} as well as support our theoretical contribution that confident learning exactly estimates the joint distribution of labels under reasonable assumptions (c.f., Thm. \ref{thm:robustness}).

Because we did not remove label errors from the validation set, when training on the data cleaned by CL in the train set, we may have induced a distributional shift, making the moderate increase accuracy a more satisfying result.

\begin{figure*}[!htb]
\centerline{\includegraphics[width=.99\textwidth]{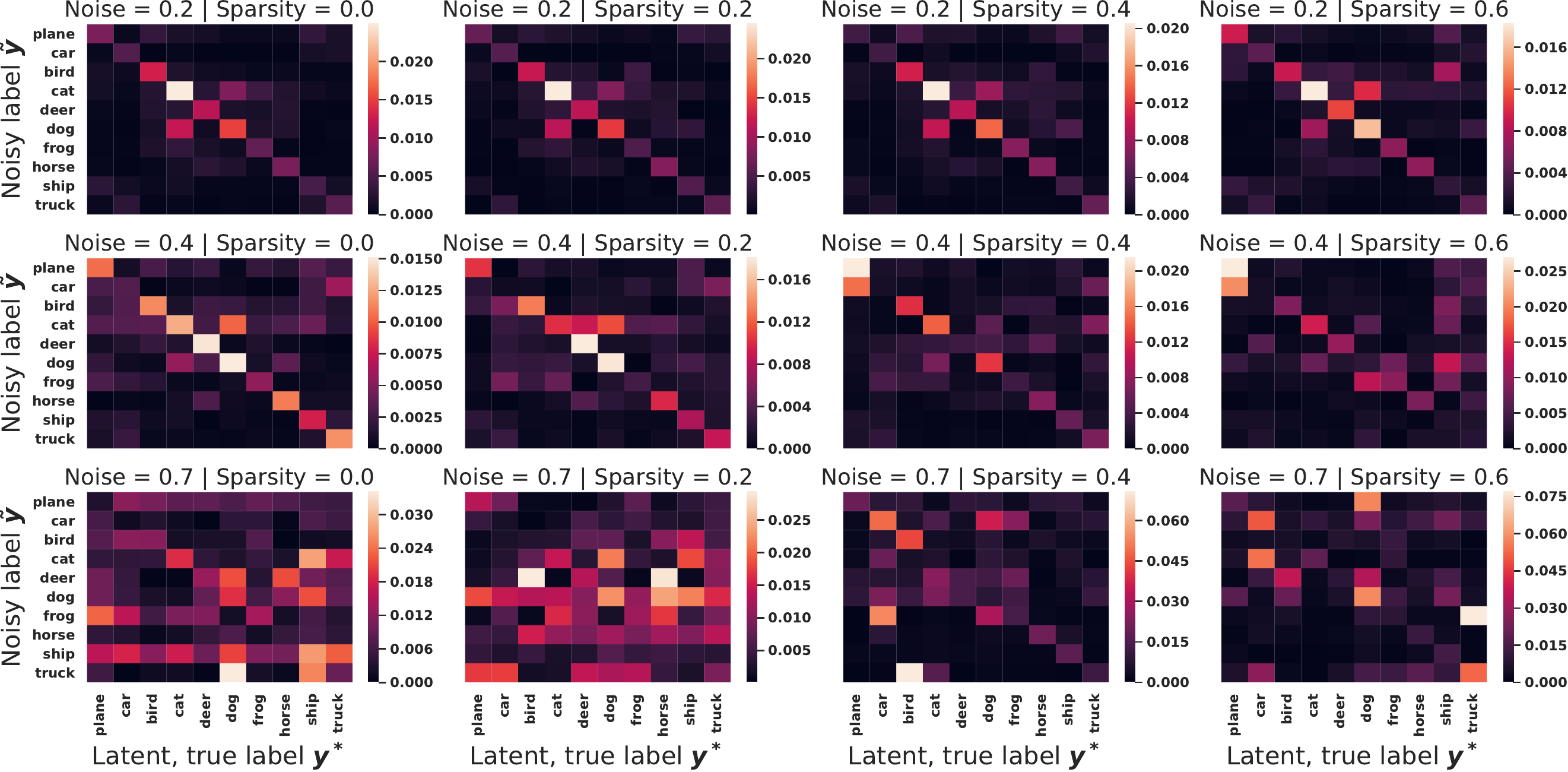}}
\caption{Absolute difference of the true joint $\joint$ and the joint distribution estimated using confident learning $\estjoint$ on CIFAR-10, for 20\%, 40\%, and 70\% label noise, 20\%, 40\%, and 60\% sparsity, for all pairs of classes in the joint distribution of label noise.} 
\label{cifar10_abs_diff_ALL}
\end{figure*}

\begin{table*}[t]

\setlength\tabcolsep{3pt} 
\renewcommand{\arraystretch}{0.9}
\caption{RMSE error of $\joint$ estimation on CIFAR-10 using $\cj$ to estimate $\estjoint$ compared with using the baseline approach $\bm{C}_{\text{confusion}}$ to estimate $\estjoint$.}
\vskip -0.1in
\label{table:cifar10_rmse}
\begin{center}
\begin{small}
\resizebox{\textwidth}{!}{ 

\begin{tabular}{l|cccc|cccc|cccc}
\toprule
Noise &      \multicolumn{4}{c}{0.2} & \multicolumn{4}{c}{0.4} & \multicolumn{4}{c}{0.7} \\
Sparsity &      0 &      0.2 &      0.4 &      0.6 &      0 &      0.2 &      0.4 &      0.6 &      0 &      0.2 &      0.4 &      0.6 \\
\midrule

$\|\estjoint$ - $\joint\|_2$        & \textbf{0.004} &  \textbf{0.004} &  \textbf{0.004} &  \textbf{0.004} &  \textbf{0.004} &  \textbf{0.004} &  \textbf{0.004} &  \textbf{0.005} &  0.011 &  \textbf{0.010} &  0.015 &  \textbf{0.017} \\
$\|\hat{\bm{Q}}_{confusion}$ - $\joint\|_2$  &  0.006 &  0.006 &  0.005 &  0.005 &  0.005 &  0.005 &  0.005 &  0.007 &  0.011 &  0.011 &  0.015 &  0.019 \\

\bottomrule
\end{tabular}
}

\end{small}
\end{center}
\vskip -0.2in
\end{table*}

In Table \ref{table:cifar10_rmse}, we estimate the $\joint$ using the confusion-matrix $\bm{C}_{\text{confusion}}$ approach normalized via Eqn. (\ref{eqn_calibration}) and compare this $\estjoint$, estimated by normalizing the CL approach with the confident joint $\cj$, for various amounts of noise and sparsity in $\joint$. Table \ref{table:cifar10_rmse} shows improvement using $\cj$ over $\bm{C}_{\text{confusion}}$, low RMSE scores, and robustness to sparsity in moderate-noise regimes.

\subsection{Benchmarking INCV}

We benchmarked INCV using the official Github code\footnote{\hyperlink{https://github.com/chenpf1025/noisy_label_understanding_utilizing}{https://github.com/chenpf1025/noisy\_label\_understanding\_utilizing}} on a machine with 128 GB of RAM and 4 RTX 2080 ti GPUs. Due to memory leak issues (as of the February 2020 open-source release, tested on a MacOS laptop with 16GB RAM and Ubuntu 18.04 LTS Linux server 128GB RAM) in the implementation, training frequently stopped due to out-of-memory errors. For fair comparison, we restarted INCV training until all models completed at least 90 training epochs. For each experiment,  Table \ref{table:incv_stuff} shows the total time required for training, epochs completed, and the associated accuracies. As shown in the table, the training time for INCV may take over 20 hours because the approach requires iterative retraining. For comparison, CL takes less than three hours on the same machine: an hour for cross-validation, less than a minute to find errors, an hour to retrain.

\begin{table*}[ht]
\vskip -0.1in
\setlength\tabcolsep{4pt} 
\renewcommand{\arraystretch}{0.9}
\caption{Information about INCV benchmarks including accuracy, time, and epochs trained for various noise and sparsity settings.}
\vskip -0.1in
\label{table:incv_stuff}
\begin{center}
\resizebox{\textwidth}{!}{ 

\begin{tabular}{l|cccc|cccc|cccc}
\toprule
Noise &      \multicolumn{4}{c}{0.2} & \multicolumn{4}{c}{0.4} & \multicolumn{4}{c}{0.7} \\
Sparsity &      0 &      0.2 &      0.4 &      0.6 &      0 &      0.2 &      0.4 &      0.6 &      0 &      0.2 &      0.4 &      0.6 \\
\midrule
\textbf{Accuracy}    &    0.878 &    0.886 &    0.896 &    0.892 &    0.844 &    0.766 &     0.854 &    0.736 &    0.283 &     0.253 &     0.348 &     0.297 \\
\textbf{Time (hours)}   &  9.120 &  11.350 &   10.420 &    7.220 &   7.580 &   11.720 &   20.420 &    6.180 &  16.230 &  17.250 &   16.880 &   18.300 \\
\textbf{Epochs trained} &   91 &   91 &  200 &  157 &   91 &  200 &   200 &  139 &   92 &    92 &   118 &   200 \\
\bottomrule
\end{tabular}\
}

\end{center}
\vskip -0.2in
\end{table*}

\section{Additional Figures}

In this section, we include additional figures that support the main manuscript. Fig. \ref{fig:imagenet_learning_with_noisy_labels} explores the benchmark accuracy of the individual confident learning approaches to support Fig. \ref{fig_imagenet_training} and Fig. \ref{fig_imagenet_resnet18_training} in the main text. The noise matrices shown in Fig. \ref{cifar10_ground_truth_noise_matrices} were used to generate the synthetic noisy labels for the results in Tables \ref{table:cifar10_label_error_measures} and \ref{table:cifar10_benchmark}.

Fig. \ref{fig:imagenet_learning_with_noisy_labels} shows the top-1 accuracy on the ILSVRC validation set when removing label errors estimated by CL methods versus removing random examples. For each CL method, we plot the accuracy of training with 20\%, 40\%,..., 100\% of the estimated label errors removed, omitting points beyond 200k.

\begin{figure*}[ht]
    \centering
    \begin{subfigure}[b]{0.48\textwidth}  
        \centering 
        \includegraphics[width=1\textwidth]{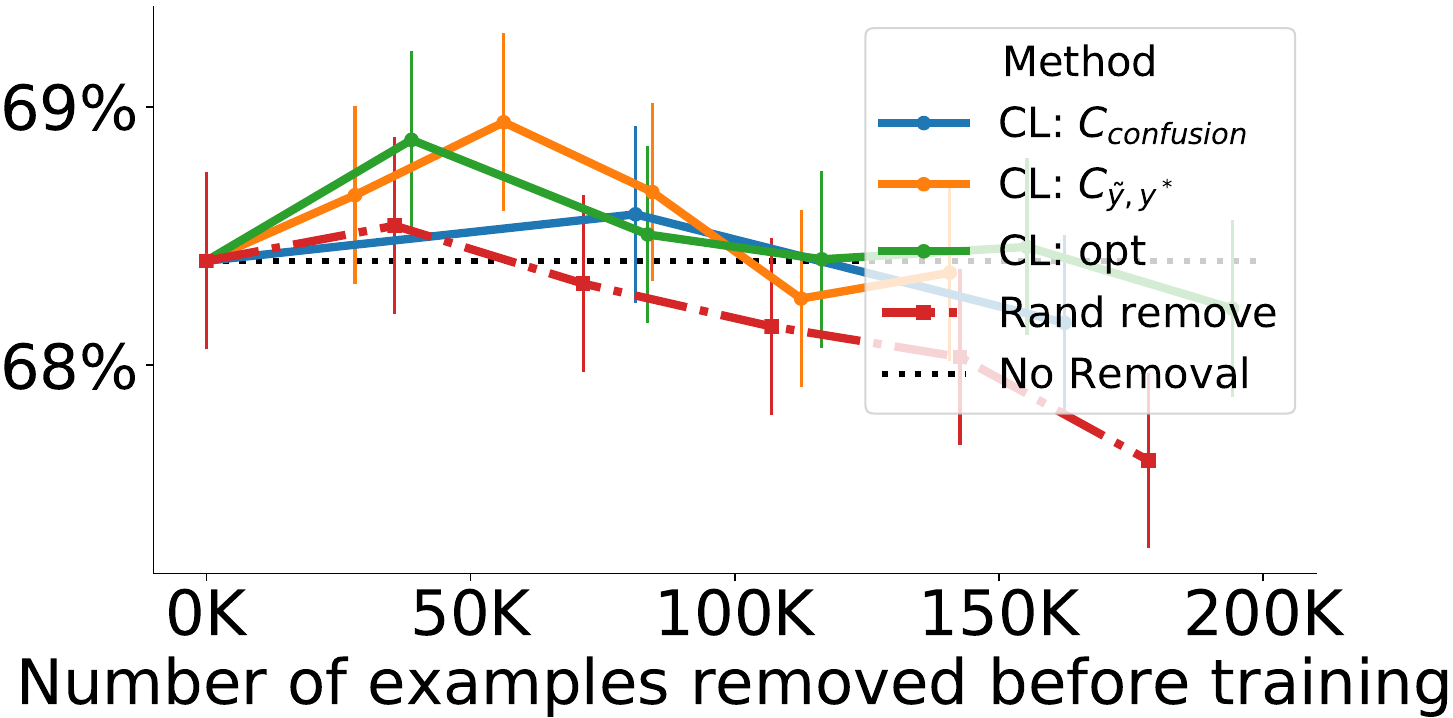}
        \caption[]%
        {ResNet18 Validation Accuracy}   
        \label{subfig:resnet18_imagenet_learning_with_noisy_labels}
    \end{subfigure}
    \begin{subfigure}[b]{0.48\textwidth}
        \centering
        \includegraphics[width=1\textwidth]{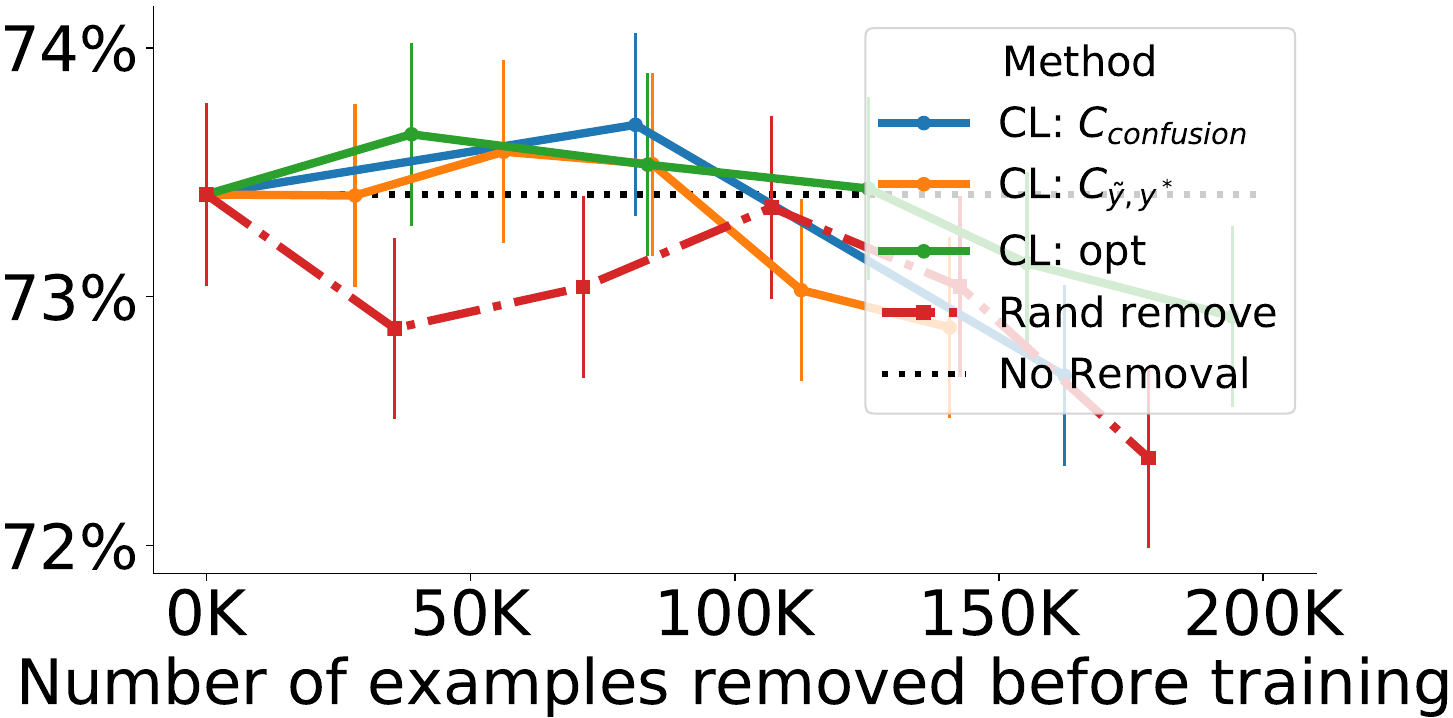}
        \caption[Network2]%
        {ResNet50 Validation Accuracy}    
        \label{subfig:resnet50_imagenet_learning_with_noisy_labels}
    \end{subfigure}
    \caption{Increased ResNet validation accuracy using CL methods on ImageNet with original labels (no synthetic noise added). Each point on the line for each method, from left to right, depicts the accuracy of training with 20\%, 40\%..., 100\% of estimated label errors removed. Error bars are estimated with Clopper-Pearson 95\% confidence intervals. The red dash-dotted baseline captures when examples are removed uniformly randomly. The black dotted line depicts accuracy when training with all examples.} 
    
    \label{fig:imagenet_learning_with_noisy_labels}
\end{figure*}

\begin{figure*}[ht]
\centerline{\includegraphics[width=\textwidth]{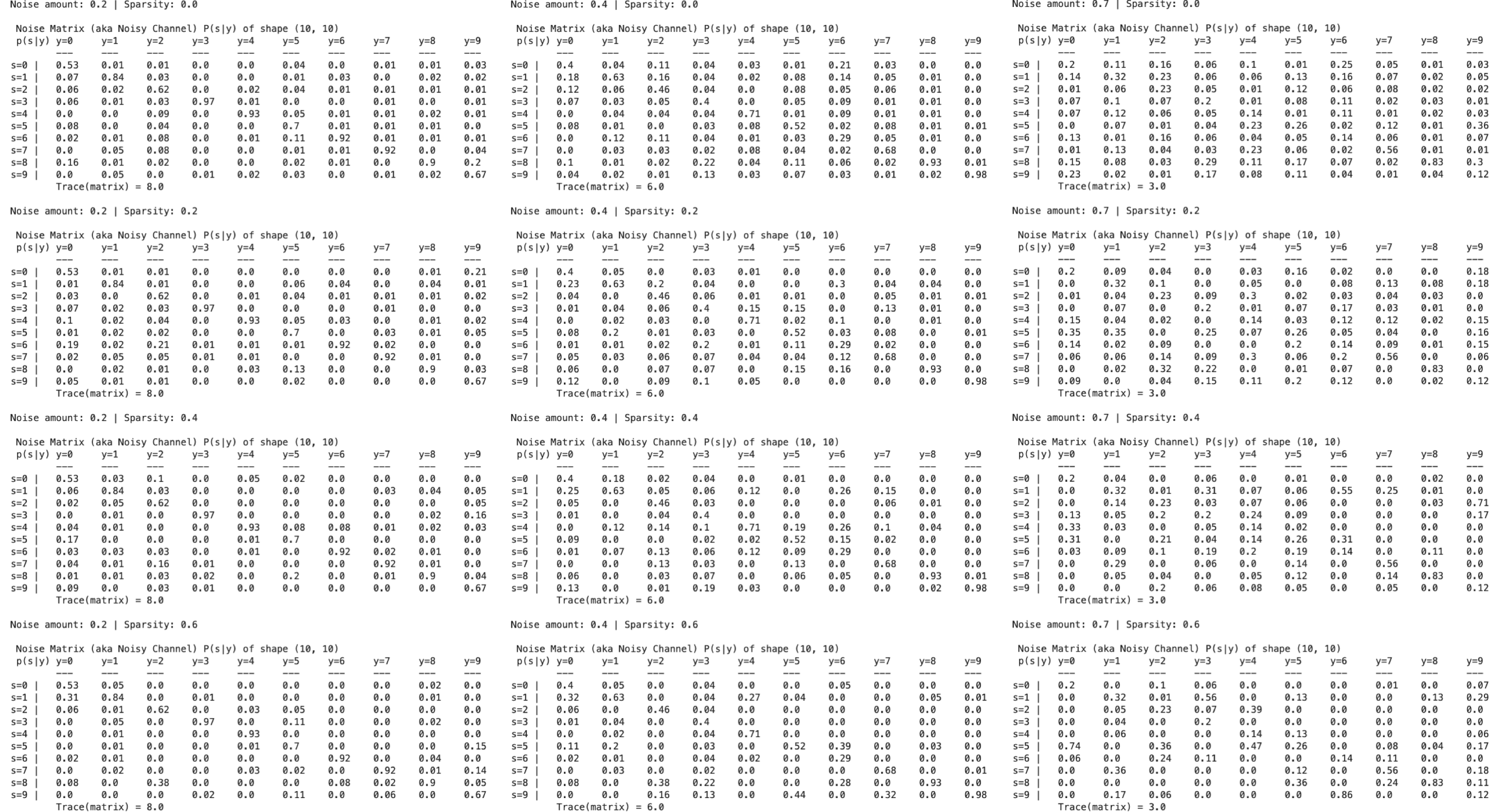}}
\caption{The CIFAR-10 noise transition matrices used to create the synthetic label errors. In the \href{https://github.com/cgnorthcutt/cleanlab}{\texttt{cleanlab}} code base, $s$ is used in place of $\tilde{y}$ to notate the noisy unobserved labels and $y$ is used in place of $y^*$ to notate the latent uncorrupted labels.}
\label{cifar10_ground_truth_noise_matrices}
\end{figure*}

\end{document}